%% file: main.tex
\documentclass[11pt]{article}
\ifdefined\XeTeXversion
  \AtBeginDocument{}
\else
\fi

\usepackage[margin=1in]{geometry}
\usepackage{amsmath,amssymb,amsthm}
\usepackage{bm}
\usepackage[nopatch=footnote]{microtype}
\usepackage{placeins}
\usepackage{graphicx}
\usepackage{algorithm}
\usepackage{algpseudocode}
\usepackage{booktabs}
\usepackage{needspace}
\usepackage{natbib}
\usepackage[hidelinks]{hyperref}
\usepackage{setspace}
\graphicspath{{figures/}}

\hypersetup{
  pdftitle={Measuring Variable Importance Using Proportional Inclusive Marginal Effects},
  pdfauthor={Youngseo Cho and V. Roshan Joseph},
  pdfkeywords={variable importance, proportional marginal effects, global sensitivity analysis, dimension reduction, nonparametric estimation, computational efficiency}
}

\newcommand{\E}{\mathbb{E}}
\newcommand{\Var}{\operatorname{Var}}
\newcommand{\PV}{\operatorname{PV}}
\newcommand{\PME}{\operatorname{PME}}
\newcommand{\bX}{\mathbf{X}}
\newcommand{\by}{\mathbf{y}}
\input{sections/observed_case_config}

\newtheorem{theorem}{Theorem}
\newtheorem{proposition}{Proposition}
\newtheorem{lemma}{Lemma}

\newtheorem{assumption}{Assumption}

\title{Measuring Variable Importance Using \\ Proportional Inclusive Marginal Effects}

\IfFileExists{latexml.sty}{\usepackage{latexml}}{}
\ifdefined\iflatexml\else
  \expandafter\newif\csname iflatexml\endcsname
\fi
\iflatexml
  \usepackage{authblk}
  \author{Youngseo Cho}
  \author{V. Roshan Joseph}
  \makeatletter
  \lx@add@contact[role=currentaddress,label=shared]{H. Milton Stewart School of Industrial and Systems Engineering\\Georgia Institute of Technology, Atlanta, GA, USA}
  \makeatother
\else
\author{Youngseo Cho \qquad V. Roshan Joseph\footnote{roshan@gatech.edu}\\[0.3em]\small H. Milton Stewart School of Industrial and Systems Engineering\\\small Georgia Institute of Technology, Atlanta, GA, USA}
\fi

\date{}

\usepackage{etoolbox}
\let\reviewOriginalFigure\figure
\renewcommand{\figure}[1][tbp]{\ifstrequal{#1}{h!}{\reviewOriginalFigure[ht!]}{\reviewOriginalFigure[#1]}}
\let\reviewOriginalTable\table
\renewcommand{\table}[1][tbp]{\ifstrequal{#1}{h!}{\reviewOriginalTable[ht!]}{\reviewOriginalTable[#1]}}

\DeclareMathOperator*{\argmin}{\arg\!\min}

\usepackage{array,tabularx}
\usepackage{float}
\usepackage{tocloft}
\begin{document}
\addtocontents{toc}{\protect\setcounter{tocdepth}{-1}}

\maketitle

\begin{abstract}
\begin{singlespace}
\input{sections/00_abstract}

\end{singlespace}
\end{abstract}

\begin{singlespace}
\noindent\textbf{Keywords:} Global sensitivity analysis; Nonparametric estimation; Local-linear Regression; Proportional marginal effects;
Variable selection.
\end{singlespace}

\section{Introduction}
\label{sec:introduction}
\input{sections/01_introduction}

\section{From Variance-Based Sensitivity to PME}
\label{sec:allocation}
\input{sections/02_allocation}

\section{PRIME}
\label{sec:estimation}
\input{sections/03_estimation}

\section{Global Effects Through Local Linear Regressions}
\label{sec:first}
\input{sections/04_first}

\section{Selection and Estimation}
\label{sec:framework}
\input{sections/05_framework}

\FloatBarrier
\section{Numerical Experiments}
\label{sec:experiments}
\input{sections/06_experiments}

\FloatBarrier
\section{Conclusion}
\label{sec:conclusion}
\input{sections/07_conclusion}

\FloatBarrier
\clearpage
\hypertarget{prime.supplement.start}{}
\pdfbookmark[0]{Supplementary Material}{prime.supplement.outline}
\setcounter{section}{0}
\setcounter{subsection}{0}
\setcounter{equation}{0}
\setcounter{figure}{0}
\setcounter{table}{0}
\setcounter{lemma}{0}
\setcounter{theorem}{0}
\setcounter{proposition}{0}
\setcounter{assumption}{0}
\setcounter{corollary}{0}
\setcounter{remark}{0}
\setcounter{algorithm}{0}
\renewcommand{\thesection}{S\arabic{section}}
\renewcommand{\thesubsection}{\thesection.\arabic{subsection}}
\renewcommand{\thefigure}{S\arabic{figure}}
\renewcommand{\thetable}{S\arabic{table}}
\renewcommand{\theHsection}{supp.\arabic{section}}
\renewcommand{\theHsubsection}{\theHsection.\arabic{subsection}}
\renewcommand{\theHsubsubsection}{\theHsubsection.\arabic{subsubsection}}
\renewcommand{\theHequation}{supp.\arabic{equation}}
\renewcommand{\theHfigure}{supp.\arabic{figure}}
\renewcommand{\theHtable}{supp.\arabic{table}}
\renewcommand{\theHlemma}{supp.\arabic{lemma}}
\renewcommand{\theHtheorem}{supp.\arabic{theorem}}
\renewcommand{\theHproposition}{supp.\arabic{proposition}}
\renewcommand{\theHassumption}{supp.\arabic{assumption}}
\renewcommand{\theHcorollary}{supp.\arabic{corollary}}
\renewcommand{\theHremark}{supp.\arabic{remark}}
\providecommand{\theHalgorithm}{supp.\arabic{algorithm}}
\renewcommand{\theHalgorithm}{supp.\arabic{algorithm}}
\onehalfspacing
\addtocontents{toc}{\protect\setcounter{tocdepth}{2}}
\par
\begingroup
\newpage
\null
\vskip 2em
\begin{center}
  {\LARGE Supplementary Material for\\
  Measuring Variable Importance Using\\
  Proportional Inclusive Marginal Effects\par}
  \vskip 1.5em
  {\large
   \lineskip .5em
   \begin{tabular}[t]{c}
   Youngseo Cho \qquad V. Roshan Joseph\\[0.3em]\small H. Milton Stewart School of Industrial and Systems Engineering\\\small Georgia Institute of Technology
   \end{tabular}\par}
  \vskip 1em
  {\large\par}
\end{center}\par
\vskip 1.5em
\thispagestyle{plain}
\endgroup
\setcounter{footnote}{0}

\noindent The Supplement follows the order of the main paper. Section~S1
contains the mathematical proofs. Section~S2 records the complete experimental
protocol, including the primary eight-coordinate
\ifConcreteMain Concrete\else Auto MPG\fi\ design. Section~S3
reports screening and downstream allocation, estimator ablations,
correlated-proxy robustness, common-game sensitivity, scalability, and concise
observed-data checks.

\begingroup
\singlespacing
\small
\setlength{\cftbeforesecskip}{2pt}
\setlength{\cftbeforesubsecskip}{0pt}
\tableofcontents
\endgroup
\clearpage

\section{Proofs}
\input{sections/supplement_proofs}

\section{Experimental Protocol}
\input{sections/supplement_experiments}

\section{Supporting Numerical Results}
\input{sections/supplement_results}

\begin{singlespace}
\setlength{\bibsep}{2pt}
\FloatBarrier
\bibliographystyle{chicago}
\bibliography{references}
\end{singlespace}
\end{document}

%% file: sections/observed_case_config.tex
\newif\ifConcreteMain
\ConcreteMaintrue

%% file: sections/00_abstract.tex
Shapley-based measures are widely used for measuring variable importance in statistics and machine learning. However, these measures have a serious issue--a nonactive variable can get a positive importance measure when it is correlated with an active variable. Herin et al. recently proposed a measure called proportional marginal effects (PME) that mitigates this spurious effect. However, PME is developed for deterministic models and has high computational cost. In this article, we propose a new measure called proportional inclusive marginal effects (PRIME) that overcomes these two limitations of PME. The key idea in PRIME is to use a novel model-free variable selection technique to identify the active variables and include only them in the computation of the importance measure. Analysis of test functions and real datasets show that PRIME can accurately and reliably  quantify variable importance from noisy input-output data.

%% file: sections/01_introduction.tex
Variable importance is a central problem in statistics: given a dataset consisting of
a set of inputs and an output, which inputs are important for predicting the output, and how should their importance
be quantified?  Familiar measures such as the standardized coefficients or $|t|$-values in linear regression and tree-based scores in random forest models are all model-based: they describe
importance with respect to a particular fitted model. Related work studies reliance within a specified model class \citep{fisher2019all} and algorithm-agnostic population importance \citep{williamson2023framework}. Here we study population measures defined directly by the joint distribution of input and output.

The literature on variable importance measures is huge and spans many diverse fields including statistics, machine learning, and applied mathematics. Classical global sensitivity analysis often assumes a known input-output relationship and specified, independent input distributions \citep{saltelli2008global, sensitivitybook2021}. On the other hand, the problems in statistics and machine learning are more complex because they deal with datasets most often collected using an observational study. Here we know neither the input-output relationship nor the input distributions. To make things worse, the inputs are likely to be correlated in the dataset and the output is noisy. Although we are more interested in the statistical/ML problems, we will first describe some techniques used in the field of global sensitivity analysis because they are closely related to the techniques proposed in this paper.

When the input-output relationship (computer model) is known and when the input distributions are independent, we can decompose the overall variability of the output induced by the input distributions using functional analysis of variance and compute certain variance-based global sensitivity measures \citep{sobol1993sensitivity}. Among the various global sensitivity measures, the total Sobol' index proposed by \cite{homma1996importance} is of particular interest because it can be shown that the total Sobol' index of an input is 0 if and only if the output is independent of this input given the other inputs \citep{hart2018approximation}. With independent inputs, total effects collect all main and interaction variance components involving an input. Under dependence, their interpretation is a conditional deletion loss. Consider a simple model: $y=x_1+x_2$, where $x_1,x_2\overset{iid}{\sim} \mathcal{N}(0,1)$. Then the total Sobol' indices of $x_1$ and $x_2$ are 0.5. However, if $cor\{x_1,x_2\}=1$, then the total Sobol' indices of both are 0. This is because $x_1$ is a constant given $x_2$ and therefore $x_1$ does not introduce any variability in $y$ (similarly for $x_2$). These zeros express conditional redundancy: either input predicts the response exactly once the other is known. Total effects therefore remain useful deletion-loss measures, but need not provide an additive allocation of importance under dependence.

\cite{song2016shapley} suggested that the Shapley effect introduced by \cite{owen2014sobol} can be used as a variable importance measure when inputs are dependent. The Shapley effect is a modification of the Shapley value \citep{shapley1953shap} widely used in economics and game theory that is modified to incorporate variance-based sensitivity measures. For the earlier example, Shapley effects of both $x_1$ and $x_2$ are 0.5 even when they are perfectly correlated. Shapley-based measures such as SHAP \citep{lundberg2017unified} are extremely popular for measuring variable importance in real data analysis. See also SAGE in \cite{covert2020understanding} and SPVIM in \cite{williamson2020spvim}.   However, \cite{verdinelli2024feature} recently pointed out that Shapley effects are also adversely affected by the dependence of input distributions. To see this, consider a modified model: $y=x_1+x_2+0x_3+\ldots +0 x_p$, where all variables are perfectly correlated. Now the Shapley effects of all the variables become $1/p$ and thus tend toward 0 as $p$ increases, spreading importance equally over redundant variables. In this perfectly collinear example, the coefficient support of the written regression function is not identifiable from the joint distribution. \cite{verdinelli2024feature} went on to propose a decorrelated version of Leave Out COvariates (LOCO) \citep{lei2018distribution} as a remedy but noted that it also has several issues. See also \cite{verdinelli2024decorrelated}. Interestingly, the LOCO measure is exactly the same as the unnormalized total Sobol' index under a squared-error loss.

\cite{huang2025first} recently proposed a two-stage version of the Shapley effects: first select the active inputs using FIRST (Factor Importance Ranking and Selection using Total indices). Then, estimate the Shapley effects of only the active inputs and set the Shapley effects of the inactive inputs to 0. Their method is termed as FIRST-SHAP. Selecting a subset removes variables excluded by the screen, but retained weak variables can still receive substantial Shapley importance through dependence with stronger variables. This motivates reconsidering the allocation rule as well as the selection step.

\cite{herin2024proportional} proposed proportional marginal effects (PME) to overcome the above issue with Shapley effects that integrate total Sobol' indices with proportional values used in cooperative game theory \citep{feldman1999proportional}.  For example, consider $y=x_1+x_2+0.001x_3$ with jointly Gaussian, mean-zero, unit-variance inputs, $\operatorname{cor}(x_1,x_3)=0.9$, and $x_2$ independent of $(x_1,x_3)$. The Shapley effect of $x_3$ is $0.2028$, while its PME is $5.0\times10^{-7}$; PME assigns much less importance to the input with coefficient $0.001$ in this nonsingular design. This issue of inactive (or exogenous) inputs receiving large positive Shapley effect when correlated with active (or endogenous) inputs has been known for a long time in the case of linear regression \citep{gromping2007importance}, where the method of \cite{lindeman1980introduction} (LMG) produces measures that correspond to the Shapley values and the method of proportional marginal variance decomposition (PMVD) \citep{feldman2005relative} was developed to overcome the issues with LMG. PME can be viewed as an extension of PMVD to nonlinear and complex \emph{deterministic} models. In this article, we extend PMEs for measuring variable importance from real \emph{noisy} data.

\begin{table}[h!]
\centering
\caption{Analogy among some of the variable importance measures in linear regression, global sensitivity analysis, and real data analysis.}
\begin{tabular}{ c|c|c }
\toprule
Linear regression & Global sensitivity analysis & Nonparametric data analysis\\
\hline
$t^2$ & Total Sobol' indices & LOCO, FIRST\\
LMG & Shapley effects & SHAP, SAGE, SPVIM, FIRST-SHAP\\
PMVD & PME & PRIME (proposed measure)\\
\bottomrule
\end{tabular}
\label{tab:analogy}
\end{table}

The second main contribution of our work is in computation. If there are $p$ inputs, we need to consider $2^p$ submodels for computing PME which can be a daunting task when $p$ is large. Following the idea of FIRST-SHAP \citep{huang2025first}, we will reduce this computational burden using a FIRST-inspired forward--backward screening procedure. If there are $q\ll p$ selected variables, then the final allocation step evaluates $2^q$ submodels. Proposition~\ref{prop:selected_set_equivalence} shows that retaining a mean-sufficient set preserves the population PME target after zero padding. Thus, surplus retained coordinates leave this population allocation unchanged under the stated identifiability condition. We call the resulting procedure \emph{Proportional Inclusive Marginal Effects} (PRIME), reflecting the inclusion of only selected coordinates in the allocation step. Table 1 shows the connections among the different measures in the literature that we have reviewed so far.

A third main contribution of our work is in selection and estimation. FIRST uses the nearest neighbor technique proposed in \cite{broto2020variance} for the estimation of the total Sobol' indices. We found that these estimates can be inaccurate, especially when the sample size is small. We therefore propose a local-linear regression approach that removes affine conditional-mean drift from the risk estimates. These estimates are used in both coordinate selection and PRIME allocation, and their accuracy is evaluated in our experiments.

The article is organized as follows. Section~\ref{sec:allocation} reviews
variance-based sensitivity measures and defines PME.  Section~\ref{sec:estimation}
proposes PRIME.  Section~\ref{sec:first} develops
local-linear risk estimation and  Section~\ref{sec:framework} develops a model-free variable selection technique.  Section~\ref{sec:experiments}
evaluates the proposed method through test functions and  real data
experiments. We conclude with some remarks in   Section~\ref{sec:conclusion}.  Proofs and additional
experimental details are provided in the Supplementary materials.

%% file: sections/02_allocation.tex
In this section, we will review some of the techniques from global sensitivity analysis that are relevant to our proposed method (second column in Table \ref{tab:analogy}).
\subsection{Total Sobol' Indices}

Let \(P=\{1,\ldots,p\}\) be the indices of the inputs
\(X=(X_1,\ldots,X_p)\sim F_X\), and let \(M=m(X)\) be square-integrable.
All expectations and variances in this section are taken under the input distribution \(F_X\).  For
\(S\subseteq P\), define
\[
  m_S(x_S)=\E(M\mid X_S=x_S),
  \qquad
  V(S)=\Var\{m_S(X_S)\},
\]
and the residual risk
\[
  \mathcal L_M(S)
  =
  \inf_{g\in L^2(F_{X_S})}\E\{M-g(X_S)\}^2
  =
  \E\{\Var(M\mid X_S)\},
\]
where the infimum is attained at \(g=m_S\) \citep{hart2018approximation}. The
law of total variance gives
\begin{equation}
  V(S)+\mathcal L_M(S)=\Var(M).
  \label{eq:variance_decomposition}
\end{equation}
Thus \(V(S)\) is the variance of the conditional mean induced by \(X_S\),
and the reduction in prediction risk achieved by approximating \(M\) with
\(m_S(X_S)\) instead of its overall mean.

Assume \(V(P)=\Var(M)>0\).  For any \(A\subseteq P\), the closed and total
Sobol' indices of the model response are
\begin{align}
  S_A^{\mathrm{cl}}
  &=
  \frac{V(A)}{V(P)},
  \label{eq:closed_sobol_effect}\\
  S_A^{\mathrm T}
  &=
  \frac{\E\{\Var(M\mid X_{P\setminus A})\}}{\Var(M)}
  =
  \frac{V(P)-V(P\setminus A)}{V(P)}
  =
  \frac{\mathcal L_M(P\setminus A)}{V(P)}.
  \label{eq:total_sobol_effect}
\end{align}
For a single coordinate, \(S_j=S_{\{j\}}^{\mathrm{cl}}\) is conventionally
called its \emph{first-order Sobol' index} \citep{sobol1993sensitivity} and
\(S_{T_j}=S_{\{j\}}^{\mathrm T}\) its \emph{total Sobol' index}
\citep{homma1996importance}.

\Needspace{5\baselineskip}
Under the independent product law \(F_X=\bigotimes_{j=1}^pF_{X_j}\),
\(M\) has the orthogonal expansion \citep{sobol1993sensitivity}:
\[
  M=\mu+\sum_{\varnothing\ne A\subseteq P} h_A(X_A),
\]
which gives the functional ANOVA decomposition
\[
  V(P)=\sum_{\varnothing\ne A\subseteq P}\nu_A,
\]
where \(\nu_A=\Var\{h_A(X_A)\}\).
Consequently,
\begin{equation}
  S_j=\frac{\nu_{\{j\}}}{V(P)} \;\; \textrm{and}\;\;
  S_{T_j}=\frac{\sum_{A\ni j}\nu_A}{V(P)}.
  \label{eq:first_total_anova}
\end{equation}
Summing over coordinates gives \citep{song2016shapley}
\begin{align}
  \sum_{j=1}^p S_j
  &=
  1-\frac{\sum_{|A|\geq2}\nu_A}{V(P)}
  \leq1,
  \label{eq:first_order_sum}\\
  \sum_{j=1}^p S_{T_j}
  &=
  \frac{\sum_{\varnothing\ne A\subseteq P}|A|\nu_A}{V(P)}
  \geq1,
  \label{eq:total_order_sum}
\end{align}
where $|A|$ denotes the cardinality of the set $A$. First-order indices omit interaction components, whereas total-order indices
count an \(|A|\)-way interaction once for each coordinate in \(A\). Except
for additive responses, neither vector sums to one. Thus, they do not partition the importance properly to the $p$ inputs.

\subsection{Shapley Effects}

Equations~\eqref{eq:first_order_sum}--\eqref{eq:total_order_sum} leave one
allocation question unresolved: how should a shared interaction be divided
among its participating coordinates so that the importance measures of the coordinates will sum to one?  Shapley allocation \citep{shapley1953shap} answers this question
through sequential gains.  For a permutation \(\pi=(\pi_1,\ldots,\pi_p)\) of
the coordinates, reveal them in the order specified by \(\pi\).  The
contribution of \(i\) in that order is the increase in explained variance when
\(i\) is added after its predecessors.  Averaging this contribution over all
orders removes dependence on any particular sequence.

Formally, let \(\Pi(P)\) be the set of all \(p!\) permutations of \(P\), and let
\(\operatorname{Pre}_i(\pi)\) denote the coordinates preceding \(i\) in
\(\pi\).  Applying the Shapley value to the explained-variance set function
\(t(A)=V(A)\) gives \citep{owen2014sobol}
\begin{align}
  \operatorname{Sh}_i
  &=
  \frac1{t(P)p!}\sum_{\pi\in\Pi(P)}
  \left[
    t\{\operatorname{Pre}_i(\pi)\cup\{i\}\}
    -t\{\operatorname{Pre}_i(\pi)\}
  \right]
  \label{eq:shapley_permute}\\
  &=
  \frac{1}{t(P)}\sum_{A\subseteq P\setminus\{i\}}
  \frac{|A|!(p-|A|-1)!}{p!}
  \{t(A\cup\{i\})-t(A)\}.
  \label{eq:shapley}
\end{align}
This averaging solves the allocation deficiency in
\eqref{eq:first_order_sum}--\eqref{eq:total_order_sum}.  As shown in \cite{owen2014sobol} and \cite{song2016shapley},
for any joint distribution of \(X\), 
\begin{equation}
    \sum_{i=1}^p \operatorname{Sh}_i=1
    \quad\text{and}\quad
    \operatorname{Sh}_i\geq0 \quad\forall i.
\end{equation}
Furthermore, if the inputs are
independent, then \citep{borgonovo2024many}
\begin{equation}
  \operatorname{Sh}_i
  =
  \frac1{V(P)}
  \sum_{A\ni i}\frac{\nu_A}{|A|}
  \label{eq:shapley_anova}
\end{equation}
and therefore
$S_i\leq\operatorname{Sh}_i\leq S_{T_i} \quad\forall i$.
Thus a \(k\)-way interaction is divided equally among its \(k\) participants
\citep{owen2014sobol}.  The same monotonicity and telescoping
argument remains valid under dependence, even though an orthogonal ANOVA
interpretation does not, and therefore, Shapley effects can be genuinely used as important measures with dependent inputs \citep{song2016shapley, owen2017shapley, iooss2019shapley}.

Unfortunately, dependence creates another difficulty \citep{herin2024proportional}.  To see this, let
\((X_1,X_2)\) be centered Gaussian with unit variances and correlation
\(\rho\), and let \(M=X_1\).  Then
\[
  t(\{1\})=1,\qquad t(\{2\})=\rho^2,\qquad t(\{1,2\})=1,
\]
so
\begin{equation*}
  \operatorname{Sh}_1=1-\frac{\rho^2}{2},
  \qquad
  \operatorname{Sh}_2=\frac{\rho^2}{2}.
  \label{eq:shapley_proxy_example}
\end{equation*}
Thus, although \(X_2\) is absent in the functional relationship, it receives positive
Shapley effect because of its association with \(X_1\).  Shapley
effects therefore allocate variance across coordinates, including proxies
that predict the response through their association with other inputs, which is undesirable.

\subsection{Proportional Marginal Effects}

To overcome the problems with Shapley effects,
\citet{herin2024proportional} proposed proportional marginal effects (PME),
which replace balanced sharing by the proportional value from cooperative game
theory \citep{feldman1999proportional,ortmann2000proportional}.  For a
nonempty \(A\subseteq P\), let \(\Pi(A)\) be its permutation set and let
\(t:2^A\to[0,\infty)\) be monotone, with \(t(\varnothing)=0\) and
\(t(B)>0\) for every nonempty \(B\subseteq A\).
For \(i\in A\), the proportional value is
\begin{equation}
  \PV_i(A,t)
  =
  \sum_{\pi\in\Pi(A)}p_t(\pi)
  \left[
    t\{\operatorname{Pre}_i(\pi)\cup\{i\}\}
    -t\{\operatorname{Pre}_i(\pi)\}
  \right],
  \label{eq:pme_permute}
\end{equation}
where
\begin{equation}
  p_t(\pi)
  =
  \frac{
    \left[
      \prod_{\ell\in A}
      t\{\operatorname{Pre}_\ell(\pi)\cup\{\ell\}\}
    \right]^{-1}
  }{
    \displaystyle
    \sum_{\sigma\in\Pi(A)}
    \left[
      \prod_{\ell\in A}
      t\{\operatorname{Pre}_\ell(\sigma)\cup\{\ell\}\}
    \right]^{-1}
  }.
  \label{eq:pv_order_weight}
\end{equation}
Compared to \eqref{eq:shapley_permute},
\eqref{eq:pv_order_weight} assigns greater weight to orderings with smaller
products of successive subset values; see \citet{herin2024proportional} for details on
this proportional redistribution principle.

The proportional allocation can also be computed from a ratio potential.
For \(B\subseteq A\), define this potential recursively by
\begin{equation}
  R(\varnothing,t)=1,
  \qquad
  R(B,t)
  =
  t(B)
  \left\{
    \sum_{i\in B}R(B\setminus\{i\},t)^{-1}
  \right\}^{-1},
  \qquad B\neq\varnothing.
  \label{eq:ratio_potential}
\end{equation}
Then
\begin{equation}
  \PV_i(A,t)=\frac{R(A,t)}{R(A\setminus\{i\},t)},
  \qquad i\in A.
  \label{eq:proportional_value}
\end{equation}

Ordinary proportional values require strict positivity, whereas total Sobol'
indices may vanish on nonempty subsets.  PME uses the canonical extension
\(\PV^0\) defined by \citet[Theorem~1]{herin2024proportional}.
For a nonnegative monotone game with \(t(P)>0\), replace every nonempty
zero-valued subset by the same \(\epsilon>0\), leave positive values and
\(t(\varnothing)=0\) unchanged, and denote the resulting game by
\(t^{(\epsilon)}\).  For \(\epsilon\) below the smallest positive subset
value, this game is positive and monotone, and the extension is
\[
  \PV^0(P,t)=\lim_{\epsilon\downarrow0}\PV(P,t^{(\epsilon)}).
\]
This definition specifies the boundary regularization; continuity under
arbitrary perturbations is established in the Supplement for the games used
in our consistency results.
A null coalition is a subset with zero set-function value.  We call
\(E\) the \emph{greatest null coalition} if \(t(E)=0\) and every null
coalition is contained in \(E\).  This is stronger than having a unique
null coalition of maximum cardinality.  When a greatest null coalition
exists, define
\[
  t_E(B)=t(E\cup B),
  \qquad B\subseteq P\setminus E.
\]
The canonical extension then simplifies to
\begin{equation}
  \PV_i^0(P,t)
  =
  \begin{cases}
    0, & i\in E,\\
    \PV_i(P\setminus E,t_E), & i\notin E.
  \end{cases}
  \label{eq:continuous_pv_unique_null}
\end{equation}

Total Sobol' indices provide the required exclusion criterion.  For any
\(E\subseteq P\)
\citep{hart2018approximation},
\begin{equation}
  S_E^{\mathrm T}=0
  \quad\Longleftrightarrow\quad
  V(P\setminus E)=V(P)
  \quad\Longleftrightarrow\quad
  \E\!\left[
    \{m_P(X)-m_{P\setminus E}(X_{P\setminus E})\}^2
  \right]=0.
  \label{eq:total_zero_mean_sufficiency}
\end{equation}
Thus, the zero sets of \(S^{\mathrm T}\) identify coordinates that can be
removed jointly without changing the regression mean.  This equivalence
motivates applying the extended proportional value to \(S^{\mathrm T}\), which gives the proportional marginal effects (PME) vector:
\begin{equation}
  \PME(P)=\PV^0\{P,S^{\mathrm T}\}.
  \label{eq:pme}
\end{equation}

\Needspace{3\baselineskip}
To understand the behavior of PME compared to Shapley effects, consider a slightly modified example from \cite{herin2024proportional}, where the inputs are centered Gaussian with unit
marginal variances, \(\operatorname{Corr}(X_1,X_3)=\rho\), \(|\rho|<1\), and \(X_2\)
independent of \((X_1,X_3)\).  Let
\begin{equation}
  m(X)=X_1+X_2+\beta X_3,
  \qquad 0\leq\beta\leq1,
  \label{eq:gaussian_beta_model}
\end{equation}
and write \(V_\beta=2+\beta^2+2\rho\beta\).  Direct calculation gives
\begin{align}
  \operatorname{Sh}_1(\beta)
  &=
  \frac{1+\rho\beta-\tfrac12\rho^2(1-\beta^2)}{V_\beta},
  &
  \PME_1(\beta)
  &=
  \frac{1+\beta^2+2\rho\beta}
       {(1+\beta^2)V_\beta},
  \notag\\
  \operatorname{Sh}_2(\beta)
  &=\frac1{V_\beta},
  &
  \PME_2(\beta)
  &=\frac1{V_\beta},
  \notag\\
  \operatorname{Sh}_3(\beta)
  &=
  \frac{\beta^2+\rho\beta+\tfrac12\rho^2(1-\beta^2)}{V_\beta},
  &
  \PME_3(\beta)
  &=
  \frac{\beta^2(1+\beta^2+2\rho\beta)}
       {(1+\beta^2)V_\beta}.
  \label{eq:gaussian_beta_allocations}
\end{align}

At \(\beta=0\), \(X_3\) is a correlated proxy and
\[
  \operatorname{Sh}(0)
  =
  \left(\tfrac12-\tfrac{\rho^2}{4},\;
        \tfrac12,\;\tfrac{\rho^2}{4}\right),
  \qquad
  \PME(0)=\left(\tfrac12,\;\tfrac12,\;0\right).
\]
Thus, the proxy Shapley effect increases quadratically with \(|\rho|\), whereas its
PME remains zero irrespective of \(\rho\).  Their behavior for \(\beta>0\) is
shown in Figure~\ref{fig:shapley_pme_proxy}.  For the nonnegative correlations shown, increasing \(\rho\) substantially
inflates \(\operatorname{Sh}_3(\beta)\) near the origin, whereas
\(\PME_3(\beta)\) remains small for small \(\beta\).  Thus omitting a weak
coordinate perturbs the PME allocation continuously near \(\beta=0\), whereas
its Shapley allocation can remain substantial under strong correlation.  This
stability motivates the two-stage procedure proposed in the next section.

\begin{figure}[H]
\centering
\includegraphics[width=0.76\textwidth]{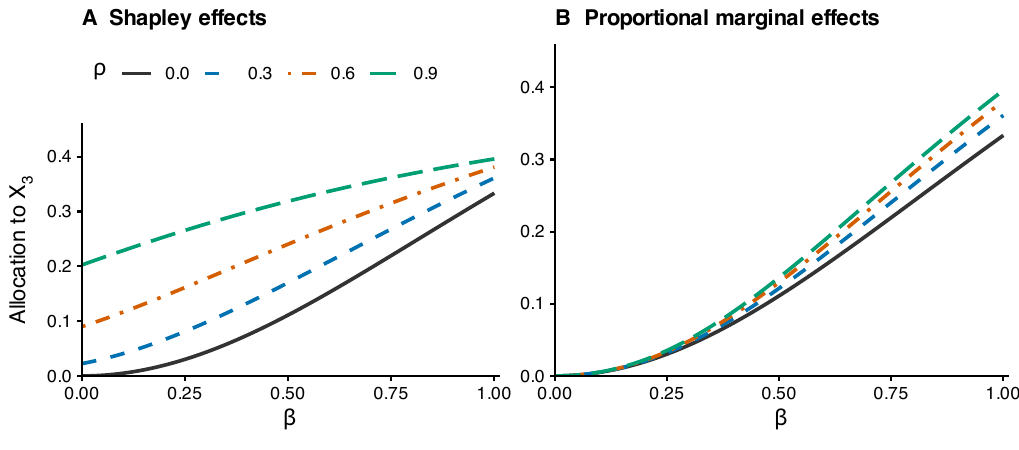}
\caption{Allocation to \(X_3\) in \eqref{eq:gaussian_beta_model}.  Curves show
(A) Shapley effects and (B) PME as the coefficient \(\beta\) increases from 0 to 1 for four correlation settings.}
\label{fig:shapley_pme_proxy}
\end{figure}

%% file: sections/03_estimation.tex
As explained in the previous section, the main advantage of PME over Shapley effects is that it automatically assigns small values to inactive inputs even when they are correlated with active inputs. When the models are deterministic, the inactive inputs can be identified using total Sobol' indices and can be assigned a zero value as in (\ref{eq:continuous_pv_unique_null}). With a finite i.i.d. input-output sample, estimated indices need not vanish even for inactive inputs. Moreover, an unadjusted total-effect calculation based on the observed-response residual variance includes response noise and is positive when that noise variance is positive, even for inactive inputs. Since we are dealing with noisy input-output data with no knowledge about the underlying model or distributions, we need an improved procedure to identify the active set and estimate the PME. Motivated by FIRST-SHAP \citep{huang2025first}, we propose a two-stage estimation procedure for PME:
\begin{enumerate}
    \item Identify the active and inactive inputs using a model-free variable selection procedure.
    \item Estimate the PME for only the active inputs and set the PME of the inactive inputs to be zero.
\end{enumerate}
Since we \emph{include} only the selected inputs in the PME computation, we call the resulting indices proportional inclusive marginal effects (PRIME). We first explain its construction, and then discuss the effect of noise on its estimation. Section~\ref{sec:framework} gives the selection and estimation procedure.

Let
\(\mathcal D_n=\{(X_i,Y_i):1\leq i\leq n\}\) be the observed data consisting of $n$ independent copies of
\((X,Y)\), where
\[
  Y=m(X)+\varepsilon,
  \qquad
  \E(\varepsilon\mid X)=0,
  \qquad
  \E(\varepsilon^2)<\infty.
\]
Call \(S\subseteq P\) mean-sufficient if
\begin{equation}
  \E(Y\mid X_S)=\E(Y\mid X)
  \quad\text{a.s.}
  \label{eq:mean_sufficient}
\end{equation}
Because \(X_S\subseteq X\), the \(L^2\) projection identity gives
\begin{equation}
  V(P)-V(S)
  =
  \E\left[
    \{m_P(X)-m_S(X_S)\}^2
  \right].
  \label{eq:mean_sufficiency_identity_theory}
\end{equation}
The right-hand side measures the prediction error from replacing the full
regression mean by its projection onto the subset of coordinates. It is zero
if and only if \eqref{eq:mean_sufficient} holds; equivalently, \(V(S)=V(P)\).
Let
\[
  \mathcal S_{\mathrm{ms}}
  =
  \{S\subseteq P:V(S)=V(P)\}
\]
be the family of mean-sufficient sets, and assume for identification that it
has a unique inclusion-minimal element \(A\). That is, \(A\) is a mean-sufficient set from which
no coordinate can be removed while preserving the regression mean. For any
\(C\subseteq P\) with \(V(C)>0\), define
\begin{equation}
  T_C(B)=\frac{V(C)-V(C\setminus B)}{V(C)},
  \qquad B\subseteq C.
  \label{eq:selected_total_effect}
\end{equation}
Applying \eqref{eq:pme} with \(C\) as the coordinate index set gives the
selected-set PME vector
\begin{equation}
  \PME(C):=\PV^0(C,T_C)\in\mathbb R^{|C|},
  \qquad
  \PME_j(C):=[\PME(C)]_j.
  \label{eq:selected_set_pme}
\end{equation}
Let
\begin{equation}
  \psi^\star:=\PME(P)\in\mathbb R^p.
  \label{eq:target_allocation}
\end{equation}
The complete PME vector \(\psi(C)\in\mathbb R^p\) is
\begin{equation}
  \psi_j(C)
  =
  \begin{cases}
    \PME_j(C), & j\in C,\\
    0, & j\notin C.
  \end{cases}
  \label{eq:padded_selected_pme}
\end{equation}

\begin{proposition}[Selected-set equivalence]
\label{prop:selected_set_equivalence}
For every \(C\in\mathcal S_{\mathrm{ms}}\), uniqueness implies \(A\subseteq C\),
and \(E=C\setminus A\) satisfies \(T_C(B)=0\) exactly when
\(B\subseteq E\). Thus \(E\) is the greatest null coalition. Moreover,
\begin{equation}
  \psi(C)=\psi^\star.
  \label{eq:mean_sufficient_invariance}
\end{equation}
\end{proposition}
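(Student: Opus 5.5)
The proof will rest on one lattice-type property of the family \(\mathcal S_{\mathrm{ms}}\): it is closed upward. If \(S\) is mean-sufficient and \(S\subseteq S'\), then \(V(S')=V(P)\). This holds because \(V\) is monotone, being the variance of a nested conditional expectation, and bounded by \(V(P)\). Granting that, any \(C\in\mathcal S_{\mathrm{ms}}\) contains some inclusion-minimal mean-sufficient set, because \(P\) is finite. By the assumed uniqueness, that minimal set is \(A\), so \(A\subseteq C\).

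For the null-coalition claim, fix \(C\in\mathcal S_{\mathrm{ms}}\) and \(B\subseteq C\). Since \(V(C)=V(P)>0\), we have \(T_C(B)=0\) if and only if \(V(C\setminus B)=V(P)\), that is, \(C\setminus B\in\mathcal S_{\mathrm{ms}}\).
\begin{itemize}
\item If \(B\subseteq E=C\setminus A\), then \(A\subseteq C\setminus B\), and upward closure gives \(T_C(B)=0\).
\item Conversely, if \(C\setminus B\) is mean-sufficient, the first paragraph gives \(A\subseteq C\setminus B\), hence \(B\subseteq C\setminus A=E\).
\end{itemize}
So the null coalitions of \(T_C\) are exactly the subsets of \(E\), and \(E\) is the greatest null coalition. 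The same argument applied with \(C=P\) shows that \(P\setminus A\) is the greatest null coalition of \(S^{\mathrm T}=T_P\).

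For the invariance \eqref{eq:mean_sufficient_invariance}, apply \eqref{eq:continuous_pv_unique_null} to both games.
\begin{itemize}
\item For \(T_C\), coordinates in \(E\) get zero, and coordinates in \(A\) get \(\PV_i(A,(T_C)_E)\).
\item For \(S^{\mathrm T}\), coordinates in \(P\setminus A\) get zero, and coordinates in \(A\) get \(\PV_i(A,(S^{\mathrm T})_{P\setminus A})\).
\end{itemize}
The zero padding in \eqref{eq:padded_selected_pme} zeroes \(P\setminus C\). So both vectors vanish off \(A\), and it remains to show that the two reduced games on \(A\) coincide. For \(B\subseteq A\):
\[
(T_C)_E(B)=T_C(E\cup B)=\frac{V(C)-V(A\setminus B)}{V(C)}=\frac{V(P)-V(A\setminus B)}{V(P)},
\]
using \(C\setminus(E\cup B)=A\setminus B\). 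Similarly,
\[
(S^{\mathrm T})_{P\setminus A}(B)=\frac{V(P)-V(P\setminus(P\setminus A\cup B))}{V(P)}=\frac{V(P)-V(A\setminus B)}{V(P)}.
\]
The two games are identical, so their proportional values agree, and \(\psi(C)=\psi^\star\).

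The main obstacle is checking that \eqref{eq:continuous_pv_unique_null} legitimately applies, i.e. that the reduced game is strictly positive on every nonempty \(B\subseteq A\) so that the ordinary \(\PV\) on \(A\) is defined. This is exactly the statement that no nonempty \(B\subseteq A\) is null, which follows because \(A\setminus B\) is not mean-sufficient by minimality of \(A\). It also requires the upward-closure and monotonicity of \(V\), which I would justify briefly via the tower property and conditional Jensen.
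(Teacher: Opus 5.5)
Your proposal is correct and follows essentially the same route as the paper: a minimal mean-sufficient subset inside $C$ plus uniqueness gives $A\subseteq C$, the two inclusions show that the null coalitions of $T_C$ are exactly the subsets of $E=C\setminus A$, and the canonical formula \eqref{eq:continuous_pv_unique_null} reduces both allocations to the same game $B\mapsto\{V(P)-V(A\setminus B)\}/V(P)$ on $A$. The differences are cosmetic: the paper identifies this reduced game with $T_A$ and then specializes to $C=P$, whereas you compare the reduced games of $T_C$ and $T_P$ directly; you also state explicitly that the reduced game is strictly positive on nonempty $B\subseteq A$, which the paper leaves implicit in the greatest-null-coalition property.
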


Let \(\widehat A\) be the set selected from the data \(\mathcal D_n\) to
estimate a mean-sufficient set. PRIME estimates the PME using only the
selected coordinates and assigns zero to coordinates outside \(\widehat A\). When \(\widehat A\) is mean-sufficient,
Proposition~\ref{prop:selected_set_equivalence} gives
\(\psi(\widehat A)=\psi^\star\), so the selected-set calculation preserves
the full-coordinate population target. Evaluating subsets of \(\widehat A\)
reduces their number from \(2^p\) to \(2^{|\widehat A|}\).
Section~\ref{sec:framework} gives the selection procedure and defines the
PRIME estimator in \eqref{eq:prime_estimator}.


\label{sec:noisy_subset_risks}

Some adjustments for the PRIME estimates are needed due to the noise in the output data. For \(S\subseteq P\), define the observed-response risk
\begin{equation}
  \mathcal L_Y(S)
  =
  \inf_{g\in L^2(F_{X_S})}\E\{Y-g(X_S)\}^2
  =
  \E\{\Var(Y\mid X_S)\}.
  \label{eq:observed_response_risk}
\end{equation}
Let \(\sigma_\varepsilon^2=\E(\varepsilon^2)\), where the conditional variance
of \(\varepsilon\) may depend on \(X\).

\begin{proposition}[Identification under response noise]
\label{prop:noise_decomposition}
For every \(S\subseteq P\),
\begin{align}
  \mathcal L_Y(S)
  &=
  \mathcal L_M(S)+\sigma_\varepsilon^2,
  \label{eq:noise_decomposition}\\
  V(S)
  &=
  \Var(M)-\mathcal L_M(S)
  =
  \Var(Y)-\mathcal L_Y(S).
  \label{eq:noisy_value_identification}
\end{align}
\end{proposition}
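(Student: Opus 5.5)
The plan is to express both observed-response quantities through the noise-free conditional mean. The key identity is
\[
  \E(Y\mid X_S)=\E\{m(X)\mid X_S\}=m_S(X_S),
\]
which follows from the tower property, since \(\E(\varepsilon\mid X_S)=\E\{\E(\varepsilon\mid X)\mid X_S\}=0\). So the minimizer in \eqref{eq:observed_response_risk} is \(g=m_S\), exactly as for \(\mathcal L_M(S)\). It then suffices to write
\[
  \mathcal L_Y(S)=\E\{Y-m_S(X_S)\}^2 .
\]

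Next expand \(Y-m_S(X_S)=\{M-m_S(X_S)\}+\varepsilon\) and square. There are three terms:
\begin{itemize}
  \item the first is \(\E\{M-m_S(X_S)\}^2=\mathcal L_M(S)\);
  \item the last is \(\E(\varepsilon^2)=\sigma_\varepsilon^2\);
  \item the cross term is \(2\E[\varepsilon\{M-m_S(X_S)\}]\).
\end{itemize}
The cross term vanishes by conditioning on \(X\). The factor \(M-m_S(X_S)\) is \(X\)-measurable, so
\[
  \E[\varepsilon\{M-m_S(X_S)\}]=\E[\{M-m_S(X_S)\}\,\E(\varepsilon\mid X)]=0 .
\]
This is valid without homoscedasticity, which is why only \(\sigma_\varepsilon^2=\E(\varepsilon^2)\) appears. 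Integrability of the product follows from Cauchy--Schwarz, since \(M\), \(m_S(X_S)\) and \(\varepsilon\) are all in \(L^2\). This gives \eqref{eq:noise_decomposition}.

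For \eqref{eq:noisy_value_identification}, the first equality is the law of total variance \eqref{eq:variance_decomposition}. For the second, apply the same decomposition with \(S=\varnothing\). The conditional mean is then the constant \(\E M\), so
\[
  \mathcal L_Y(\varnothing)=\Var(Y)
  \quad\text{and}\quad
  \mathcal L_M(\varnothing)=\Var(M),
\]
and \eqref{eq:noise_decomposition} yields \(\Var(Y)=\Var(M)+\sigma_\varepsilon^2\). Subtracting \eqref{eq:noise_decomposition} for general \(S\) cancels \(\sigma_\varepsilon^2\), giving
\[
  \Var(Y)-\mathcal L_Y(S)=\Var(M)-\mathcal L_M(S)=V(S).
\]

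There is no substantial obstacle here. The only point needing care is the cross term: we must condition on the full \(X\), not only on \(X_S\), so that heteroscedastic noise is handled.
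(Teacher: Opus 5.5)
Your proposal is correct and follows the paper's proof essentially step for step: you split \(Y-m_S(X_S)=\{M-m_S(X_S)\}+\varepsilon\), remove the cross term by conditioning on the full \(X\), and combine the law of total variance with \(\Var(Y)=\Var(M)+\sigma_\varepsilon^2\). Two steps the paper states without comment are made explicit in your version: the tower-property check that \(\E(Y\mid X_S)=m_S(X_S)\), which makes \(m_S\) the minimizer in \(\mathcal L_Y(S)\), and the derivation of \(\Var(Y)=\Var(M)+\sigma_\varepsilon^2\) as the \(S=\varnothing\) case.
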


Since $V(S)$ and $\mathcal L_M(S)$ are nonnegative, given the estimates of \(\Var(Y)\), \(\mathcal L_Y(S)\), and
\(\sigma_\varepsilon^2\), we obtain
\begin{align}
\widehat{\mathcal L}_M(S) &=
  \left[\widehat{\mathcal L}_Y(S)-\widehat\sigma_\varepsilon^2\right]_+,
  \label{eq:estimated_signal_risk}\\
  \widehat V(S)
  &=
  \left[\widehat{\Var}(Y)-\widehat{\mathcal L}_Y(S)\right]_+,
  \label{eq:noise_adjusted_value}
\end{align}
where $[x]_+=\max(x,0)$, which enforces the nonnegativity restriction. More details of the estimators are given in the next two sections.

%% file: sections/04_first.tex
Section~\ref{sec:estimation} shows that PRIME can be estimated by finding a
mean-sufficient coordinate set and its subset values \(V(S)\).
The construction can use any selector that consistently retains such a set.
We build on FIRST \citep{huang2025first} because its selection criterion uses
the total Sobol' effects underlying PME.  However, its nearest-neighbor risk estimator
\citep{broto2020variance} can be inaccurate when the sample size \(n\) is
small.  We therefore propose an improved estimator in this section.

\subsection{FIRST}
FIRST ranks and selects coordinates through estimated total Sobol' effects
\citep{huang2025first}.  As mentioned earlier, dependence complicates interpreting these indices as
overall variable importance, but their underlying squared-error deletion
losses remain useful for selection \citep{hart2018approximation}.  FIRST uses
these risk comparisons in one-coordinate-at-a-time forward and backward
searches.  For the regression mean \(M\), the unnormalized total effect for
deleting \(B\subseteq P\) is
\[
  V(P)-V(P\setminus B)
  =
  \mathcal L_M(P\setminus B)-\mathcal L_M(P)
  =
  \mathcal L_Y(P\setminus B)-\mathcal L_Y(P).
\]
Dividing this loss by \(V(P)\) gives the normalized total Sobol' index
\(S_B^{\mathrm T}\).
The quantity $V(S)$ can be estimated from (\ref{eq:noise_adjusted_value}), where $\widehat{\mathcal L}_Y(S)$ can be obtained using a nearest-neighbor local-constant estimator based on
\citet{broto2020variance}:
\begin{equation}
  \widehat{\mathcal L}_{\mathrm{LC}}(S)
  =
  \frac1n\sum_{i=1}^n
  \frac{1}{K-1}
  \sum_{j\in\mathcal N(i;S)}
  \{Y_j-\overline Y_{i,S}\}^2,
  \label{eq:local_constant}
\end{equation}
where \(\mathcal N(i;S)\) contains the \(K\) nearest other observations to
\(X_{iS}\) in standardized \(S\)-coordinates, and \(\overline Y_{i,S}\) is
their mean response.

When \(n\) is small, neighboring predictors can be far enough apart for their
conditional means to differ.  This mean variation enters the
local response variance in \eqref{eq:local_constant}.  To understand this, 
write a neighboring location of a point \(z\) as \(z+u\)
and let \(v_S(z)=\Var(Y\mid X_S=z)\).  A first-order expansion gives
\[
  m_S(z+u)
  =
  m_S(z)+\nabla m_S(z)^\top u+O(\|u\|^2).
\]
Consequently, the local response variance has the leading form
\begin{equation}
  \begin{aligned}
  \Var(Y\mid X_S\text{ near }z)
  &=\E\{v_S(X_S)\mid X_S\text{ near }z\}\\
  &\quad+\nabla m_S(z)^\top
  \Sigma_S(z)\nabla m_S(z)\\
  &\quad+\text{higher-order mean terms},
  \end{aligned}
  \label{eq:local_constant_bias}
\end{equation}
where \(\Sigma_S(z)\) is the covariance of the local predictor
displacements.  For a neighborhood of radius \(h\), Lipschitz continuity of
\(v_S\) gives
\(\E\{v_S(X_S)\mid X_S\text{ near }z\}=v_S(z)+O(h)\).  The second term in
\eqref{eq:local_constant_bias} is the systematic conditional-mean drift, which is
nonnegative and inflates \(\widehat{\mathcal L}_{\mathrm{LC}}(S)\).

This bias can obscure the gain from a weak coordinate during selection and
perturb the total-effect values used for PME.  Figure~\ref{fig:local_linear_drift} illustrates the
distinction for
\(m(z+u)=u+u^2\) and a symmetric neighborhood
\(u\sim\mathrm{Unif}[-h,h]\).  The conditional-mean contribution to a
local-constant variance is \(h^2/3+4h^4/45\).  Removing the best affine
approximation leaves only \(4h^4/45\).  The figure isolates this deterministic
mean-drift component; homoscedastic conditional noise would add the same
target variance to both local approximations. This observation suggests a local linear estimation for the conditional risk, which is discussed next.

\begin{figure}[!htbp]
\centering
\includegraphics[width=\textwidth]{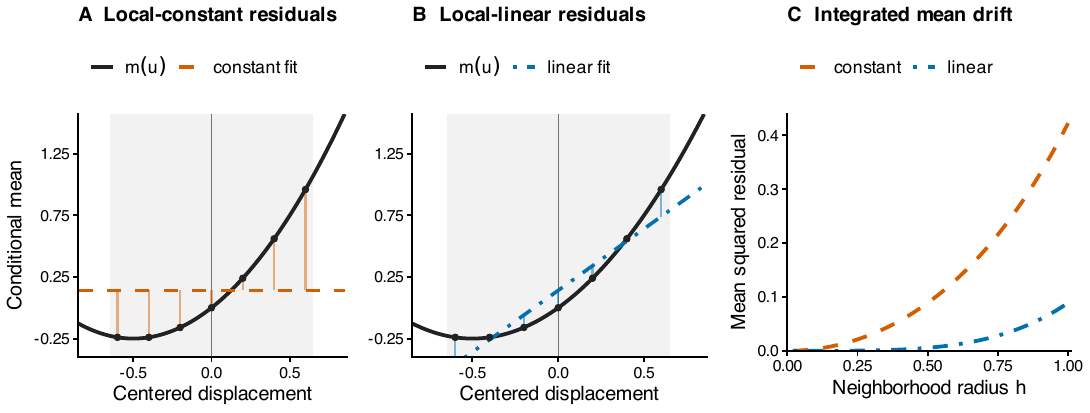}
\caption{Local-constant and local-linear residuals for
\(m(z+u)=u+u^2\) on \(u\in[-h,h]\). Panels A and B show the deterministic
residuals; Panel C gives their integrated squares. Local linearization removes
\(h^2/3\), leaving \(4h^4/45\).}
\label{fig:local_linear_drift}
\end{figure}

\subsection{Local-Linear Risk Estimation}
Let \(K_n(d)\) be the number of neighbors prescribed for a
\(d\)-dimensional local fit.  We replace the local-constant calculation in
\eqref{eq:local_constant} by a centered local-linear fit.  For a standalone
fit on a subset \(S\) of size \(d\), set \(K=K_n(d)\), center the local design
at \(X_{iS}\), and fit
\begin{equation}
  (\widehat\beta_{0,i},\widehat\beta_{1,i})
  =
  \arg\min_{\beta_0,\beta_1}
  \sum_{j\in\mathcal N(i;S)}
  \{Y_j-\beta_0-\beta_1^\top(X_{jS}-X_{iS})\}^2.
  \label{eq:local_linear}
\end{equation}
Centering makes \(\beta_0\) the fitted mean at \(X_{iS}\) and \(\beta_1\) the
local slope.  It leaves fitted values and residuals unchanged relative to the
corresponding raw-coordinate fit.  In our  implementation, we added a small ridge penalty
\(\lambda_{\mathrm{loc}}\|\beta_1\|_2^2\) to
\eqref{eq:local_linear} for numerical stability, leaving the intercept unpenalized.  Let
\begin{equation}
  g_i(S)
  =
  \widehat v^{\mathrm{LL}}_S(X_{iS})
  =
  \frac{\operatorname{SSE}_{i,S}}{K-d-1},
  \qquad
  \widehat{\mathcal L}_{\mathrm{LL}}(S)
  =
  \frac1n\sum_{i=1}^n g_i(S),
  \label{eq:local_linear_variance}
\end{equation}
where \(\operatorname{SSE}_{i,S}\) is the local residual sum of squares.
Thus \(g_i(S)\) is the pointwise local estimate of
\(v_S(X_{iS})=\Var(Y\mid X_S=X_{iS})\), and
\(\widehat{\mathcal L}_{\mathrm{LL}}(S)\) averages these estimates over the
sample.  Set \(g_i(\varnothing)=\widehat{\Var}(Y)\).
Now, using
\eqref{eq:noise_adjusted_value}, estimate the explained-variance value by
\begin{equation}
  \widehat V(S)
  =
  \left[
    \widehat{\Var}(Y)-\widehat{\mathcal L}_{\mathrm{LL}}(S)
  \right]_+,
  \qquad
  \widehat V(\varnothing)=0.
  \label{eq:estimated_explained_variance}
\end{equation}

\paragraph{Approximation and consistency.}

Fix \(S\subseteq P\), write \(Z=X_S\in\mathbb R^d\), and define
\[
  m_S(z)=\E(Y\mid Z=z),
  \qquad
  v_S(z)=\Var(Y\mid Z=z).
\]
For an interior point \(Z_i\), let
\[
  h_i
  =\max_{j\in\mathcal N(i;S)}\|Z_j-Z_i\|_2
\]
denote its projected neighborhood radius. The following result characterizes the residual bias after removing the
affine part of the conditional mean.
\begin{theorem}
\label{thm:local_linear_bias}
For the unpenalized local-linear fit, under Conditions (L1)--(L3), for an
interior point with
\(h_i\to0\), \(K\to\infty\), and fixed \(d\),
\begin{align}
&\E\{\widehat v^{\mathrm{LL}}_S(Z_i)\mid Z_1,\ldots,Z_n\}
-v_S(Z_i) \notag\\
&\quad =
O(h_i)
+O\{h_i^4\|H_{m_S}(Z_i)\|_F^2\}.
\label{eq:local_linear_bias_theory}
\end{align}
The corresponding local-constant estimator contains the additional term
\[
  O\{h_i^2\|\nabla m_S(Z_i)\|_2^2\}.
\]
Variation in \(v_S\) contributes \(O(h_i)\) to both local approximations.
The local-linear conditional-mean contribution is
\(O\{h_i^4\|H_{m_S}(Z_i)\|_F^2+h_i^6\}\), including at points where the
Hessian vanishes.
\end{theorem}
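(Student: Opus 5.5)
Conditional on the design \(Z_1,\ldots,Z_n\), the plan is to write each neighbor response as \(Y_j=m_S(Z_j)+\eta_j\), where \(\E(\eta_j\mid Z_{1:n})=0\) and \(\E(\eta_j^2\mid Z_{1:n})=v_S(Z_j)\). I will assume that (L1)--(L3) supply the following: \(m_S\) is \(C^2\) with a Lipschitz (or \(C^3\)-type) Hessian; \(v_S\) is Lipschitz; and the centered local design matrix \(D_i\), with rows \((1,(Z_j-Z_i)^\top)\), has full column rank \(d+1\). Conditional independence of the \(\eta_j\) given the design lets me write \(\operatorname{SSE}_{i,S}=\|(I-H_i)(\mu+\eta)\|^2\), where \(H_i\) is the hat matrix of \(D_i\) and \(\mu_j=m_S(Z_j)\). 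Taking the conditional expectation kills the cross term and gives
\[
\E(\operatorname{SSE}_{i,S}\mid Z_{1:n})=\|(I-H_i)\mu\|^2+\sum_{j}(I-H_i)_{jj}\,v_S(Z_j).
\]

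\textbf{Noise term.} Since \(\sum_j (I-H_i)_{jj}=K-d-1\) and \(0\le (I-H_i)_{jj}\le1\), the Lipschitz property gives \(|v_S(Z_j)-v_S(Z_i)|\le Lh_i\) on the neighborhood. Dividing by \(K-d-1\) therefore yields \(v_S(Z_i)+O(h_i)\). The same argument applies to the local-constant estimator, with trace \(K-1\). This establishes that variation in \(v_S\) contributes \(O(h_i)\) to both approximations.

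\textbf{Mean term.} Taylor-expand \(\mu_j=m_S(Z_i)+\nabla m_S(Z_i)^\top u_j+\tfrac12 u_j^\top H_{m_S}(Z_i)u_j+r_j\) with \(u_j=Z_j-Z_i\) and \(|r_j|\le C\|u_j\|^3\le Ch_i^3\). Because \(I-H_i\) annihilates the affine part exactly, only the quadratic term and the remainder survive. Since \(I-H_i\) is a projection, \(\|(I-H_i)w\|^2\le\|w\|^2\), and so
\[
\|(I-H_i)\mu\|^2\le 2\sum_j\bigl\{\tfrac14 h_i^4\|H_{m_S}(Z_i)\|_F^2+C^2h_i^6\bigr\}.
\]
Here I use \(|u^\top Hu|\le\|u\|^2\|H\|_2\le h_i^2\|H\|_F\). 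Dividing by \(K-d-1\) and using \(K/(K-d-1)\to1\) for fixed \(d\) gives the contribution \(O\{h_i^4\|H_{m_S}(Z_i)\|_F^2+h_i^6\}\). The \(h_i^6\) piece is what remains at points where the Hessian vanishes, and it is absorbed into \(O(h_i)\) in the displayed bias bound.

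\textbf{Local-constant comparison.} For the local-constant fit the projection is onto constants only, so the linear term survives. Its contribution is \(\tfrac1{K-1}\sum_j\{\nabla m_S(Z_i)^\top(u_j-\bar u)\}^2\le h_i^2\|\nabla m_S(Z_i)\|_2^2\cdot K/(K-1)\), up to cross terms with the quadratic part. Those cross terms are bounded by Cauchy--Schwarz by \(O(h_i^3\|\nabla m_S\|\,\|H_{m_S}\|)\), which is dominated by the two squared terms. This produces the extra \(O\{h_i^2\|\nabla m_S(Z_i)\|_2^2\}\) term.

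\textbf{Main obstacle.} The delicate step is making the noise-term trace argument rigorous under heteroscedasticity. The diagonal leverages \((I-H_i)_{jj}\) must be weighted against \(v_S(Z_j)\) rather than a constant, and the nonnegativity of \(1-(H_i)_{jj}\) is what allows \(\sum_j(I-H_i)_{jj}|v_S(Z_j)-v_S(Z_i)|\le (K-d-1)Lh_i\). The second point to check is that (L2)/(L3) indeed guarantee full-rank local designs, so that \(H_i\) is well defined and \(K-d-1>0\). I expect the conditions to be designed to deliver exactly this, so I would state it as the use of (L3).
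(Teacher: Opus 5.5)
Your proposal is correct and follows essentially the same route as the paper's proof. Both arguments split the conditional expectation of the local SSE into the squared projected mean vector plus $\operatorname{tr}(L_iV_i)$. Both then use the residual projector to annihilate the constant and affine Taylor terms, and bound the quadratic and cubic remainders through the contraction property before dividing by $K-d-1$. Your diagonal-leverage bound $\sum_j (I-H_i)_{jj}\,|v_S(Z_j)-v_S(Z_i)|\le (K-d-1)Lh_i$ is the same as the paper's $|\operatorname{tr}(L_i\Delta_i)|\le\|\Delta_i\|_{\mathrm{op}}\operatorname{tr}(L_i)$, because $\Delta_i$ is diagonal.
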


\Needspace{6\baselineskip}
The next result establishes consistency of the averaged risk estimator
for a fixed subset.
\begin{theorem}
\label{thm:subset_consistency}
Under Conditions (L1)--(L4), for fixed \(d\), if
\(K_n(d)\to\infty\) and \(K_n(d)/n\to0\), then
\begin{equation}
  \widehat{\mathcal L}_{\mathrm{LL}}(S)
  \xrightarrow{p}
  \mathcal L_Y(S).
  \label{eq:subset_consistency}
\end{equation}
\end{theorem}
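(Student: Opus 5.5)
We prove Theorem~\ref{thm:subset_consistency} by splitting the averaged estimator into a conditional-mean part and a fluctuation part. Write
\[
  \widehat{\mathcal L}_{\mathrm{LL}}(S)-\mathcal L_Y(S)
  =
  \underbrace{\frac1n\sum_{i=1}^n\Bigl[\E\{g_i(S)\mid Z_{1:n}\}-v_S(Z_i)\Bigr]}_{\mathrm{I}}
  +\underbrace{\frac1n\sum_{i=1}^n\Bigl[g_i(S)-\E\{g_i(S)\mid Z_{1:n}\}\Bigr]}_{\mathrm{II}}
  +\underbrace{\frac1n\sum_{i=1}^n v_S(Z_i)-\E\{v_S(Z)\}}_{\mathrm{III}},
\]
using $\mathcal L_Y(S)=\E\{v_S(Z)\}$ from \eqref{eq:observed_response_risk}.

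Term III is $o_p(1)$ by the weak law of large numbers, since $\E\,v_S(Z)\le\E Y^2<\infty$.

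For term I we would use Theorem~\ref{thm:local_linear_bias}. The first step is to show that $\max_i h_i$ is small over interior points. Under the density and support conditions we expect in (L1)--(L4) (a density bounded below on a compact support), the standard $K$-NN radius bound applies. When $K_n/n\to0$ and $K_n\to\infty$, it gives $h_i=O_p\{(K_n/n)^{1/d}\}\to0$, uniformly up to a $\log n$ factor. Theorem~\ref{thm:local_linear_bias} then bounds each interior summand by $O(h_i)+O\{h_i^4\|H_{m_S}(Z_i)\|_F^2+h_i^6\}$. Under bounded Hessian and Lipschitz $v_S$, the average of these bounds is $o_p(1)$.

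Boundary points, whose neighborhoods are not interior, need separate treatment. We would show that their number is $o_p(n)$, because the fraction of points within distance $\max_i h_i$ of the boundary vanishes. We would also need their summands to be uniformly integrable. To get this, we would bound $\E\{g_i\mid Z_{1:n}\}$ by the local response variance, since residual SSE$\le$ local centered SS. This gives a bound of order $\sup v_S+\sup$ of squared mean oscillation, both finite by (L1)--(L3). The same uniform integrability would control any interior points where the remainder constants are not uniform.

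Term II is where we expect the main obstacle: the $g_i$ are dependent through shared neighbors. Conditionally on $Z_{1:n}$, each $g_i$ is a quadratic form $Y^\top Q_iY/(K-d-1)$ in the responses of the $K$ neighbors of $Z_i$, with $Q_i$ the residual projector. Conditional independence of the $Y_j$ given $Z$, together with bounded fourth conditional moments (which we expect in (L4)), gives $\Var(g_i\mid Z_{1:n})=O(1/K)$. For the covariances, $\operatorname{Cov}(g_i,g_k\mid Z_{1:n})$ vanishes unless the neighborhoods of $i$ and $k$ overlap. A point serves as a $K$-neighbor of at most $c_dK$ others, by the Stone/kissing-number lemma in fixed $d$. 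Hence each $i$ overlaps with $O(K^2)$ indices $k$, and each overlapping covariance is $O(1/K)$, obtained by writing the cross terms through shared indices and bounding each by $O(1/K^2)$ times the overlap size. Summing,
\[
  \Var(\mathrm{II}\mid Z_{1:n})=O\Bigl(\frac{1}{n^2}\cdot n\cdot K^2\cdot\frac1K\Bigr)=O\Bigl(\frac{K}{n}\Bigr),
\]
which tends to zero because $K_n/n\to0$. We would first try to sharpen this with the finer overlap-weighted bound. The crude bound already suffices, so Chebyshev's inequality gives $\mathrm{II}=o_p(1)$.

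Combining the three terms yields \eqref{eq:subset_consistency}. Conditional design invertibility for the unpenalized fit holds with probability tending to one for $K\gg d$. A small ridge penalty $\lambda_{\mathrm{loc}}\to0$ perturbs $g_i$ only negligibly.
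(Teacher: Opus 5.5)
Your proof is correct and has the same skeleton as the paper's proof: averaged conditional bias (I), conditional fluctuation (II), and a law of large numbers for $v_S(Z_i)$ (III). The differences are in how I and II are controlled.

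For I, the paper does not invoke Theorem~\ref{thm:local_linear_bias}. With $q=K-d-1$ and $m_i$ the vector of neighbor conditional means, it uses only that the residual operator $R_i$ is a symmetric contraction with $R_i\mathbf 1=0$. This gives $\|R_im_i\|_2^2=\|R_i\{m_i-m_S(Z_i)\mathbf 1\}\|_2^2\le K a(h_i)^2$, where $a$ is the modulus of continuity of $m_S$. Together with Lipschitz $v_S$, this yields $|\E\{g_i(S)\mid Z_{1:n}\}-v_S(Z_i)|\le (K/q)\{a(h_i)^2+Ch_i\}+Cd/q$ at every sample point, boundary included. The paper then needs only that a single neighbor radius tends to zero in probability, and finishes with dominated convergence and exchangeability. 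This removes your uniform bound on $\max_i h_i$, your interior/boundary count, and your reliance on the Hessian and (L3).

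The same contraction bound also covers the fixed slope nugget $\lambda_{\mathrm{loc}}=10^{-6}$ used in the implementation, so you do not need $\lambda_{\mathrm{loc}}\to0$. Your inequality $\mathrm{SSE}\le$ centered sum of squares is essentially this idea, but you applied it only at the boundary. In exchange, your route gives an explicit interior bias rate.

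For II, with $T_i=g_i(S)-\E\{g_i(S)\mid Z_{1:n}\}$, the paper uses $\Var(\frac1n\sum_iT_i\mid Z_{1:n})\le\{\frac1n\sum_i\Var(T_i\mid Z_{1:n})^{1/2}\}^2=O(K^{-1})$. That is, every pairwise covariance is bounded by Cauchy--Schwarz with no overlap bookkeeping, so the step you flagged as the main obstacle disappears once $K\to\infty$. Neither argument actually uses (L4), which in the paper is the overlap condition; the fourth-moment bound is in (L2), not (L4). Your Stone-lemma count, giving $O(K/n)$, is where an overlap condition like (L4) would naturally enter. It also has a genuine merit: it exploits averaging over $i$ rather than concentration of each $g_i$, so it would still control II if $K$ stayed bounded.
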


%% file: sections/05_framework.tex
As in FIRST \citep{huang2025first}, selection proceeds in two steps: (i)
forward selection and (ii) backward elimination.  Efficient versions of these two steps are developed in this section.  Finally, we compute PRIME on the selected set.
\subsection{Forward Selection}

The main purpose of forward selection is to reduce the candidate set to a manageable size in
high-dimensional problems. We initialize the forward path with a global standardized
ridge fit.
Let \(Z\) denote the standardized
predictor matrix and let
\[
  \widetilde Z_j
  =
  \frac{Z_j}{\{n^{-1}\|Z_j\|_2^2\}^{1/2}}.
\]
Let \(\widehat\pi\) order the ridge coefficients by decreasing absolute value:
\begin{equation}
  \widehat b
  =
  \arg\min_b
  \|\by-\bar Y\mathbf 1-\widetilde Zb\|_2^2
  +
  \lambda_{\mathrm{init}}\|b\|_2^2,
  \qquad
  |\widehat b_{\widehat\pi_1}|
  \geq\cdots\geq
  |\widehat b_{\widehat\pi_p}|.
  \label{eq:ridge_seed}
\end{equation}
Admissible local fits require \(K_n(d)>d+1\) and
\(K_n(d)\leq\lfloor n/2\rfloor\). Define
\begin{equation}
  d_{\max,n}
  =
  \max\left\{d\leq p:
  d+1<K_n(d)\leq\lfloor n/2\rfloor\right\}.
  \label{eq:local_dimension_limit}
\end{equation}
Let \(r_p\leq d_{\max,n}\) bound the ridge path.  For
\(r=0,\ldots,r_p\), fit the global linear model using the first \(r\)
coordinates in \(\widehat\pi\), with the intercept-only model at \(r=0\), and
let \(\operatorname{RSS}(r)\) and \(d_r\) be its residual sum of squares and
slope rank.  We select the prefix size using the Bayesian information
criterion \citep{schwarz1978estimating}:
\begin{equation}
  \operatorname{BIC}(r)
  =
  n\log\{\operatorname{RSS}(r)/n\}
  +
  d_r\log n.
  \label{eq:ridge_bic}
\end{equation}
Set
\begin{equation}
  \widehat r\in\argmin_{0\leq r\leq r_p}\operatorname{BIC}(r)\;\; \textrm{and}
  \;\;
  \widehat S_{\mathrm R}
  =
  \{\widehat\pi_1,\ldots,\widehat\pi_{\widehat r}\},
  \label{eq:bic_ridge_seed}
\end{equation}
which gives the initialization for the local search discussed next. 

\paragraph{Paired risk contrasts.}

For nested sets \(S\subseteq T\subseteq P\), let \(\Gamma(S,T)\) denote the
increase in explained variance from using \(T\) rather than \(S\).
Proposition~\ref{prop:noise_decomposition} gives
\[
  \Gamma(S,T)
  =
  V(T)-V(S)
  =
  \mathcal L_M(S)-\mathcal L_M(T)
  =
  \mathcal L_Y(S)-\mathcal L_Y(T).
\]
For each comparison \(S\subseteq T\), both fits use \(K=K_n(|T|)\) neighbors,
selected separately in their respective coordinate spaces.
Define the difference at observation \(i\) by
\begin{align}
  H_i(S,T)
  &=
  g_i(S)
  -
  g_i(T), \; i=1,\ldots,n.
  \label{eq:query_risk_contrast}
  \end{align}
Its sample average estimates the gain:
  \begin{align}
  \widehat\Gamma(S,T)
  &=
  \frac1n\sum_{i=1}^n H_i(S,T),
  \label{eq:estimated_risk_contrast}
  \end{align}
For calibration, define the empirical scale of the pointwise differences
\begin{align}
  \widehat\omega^2(S,T)
  &=
  \frac{1}{n(n-1)}
  \sum_{i=1}^n
  \{H_i(S,T)-\widehat\Gamma(S,T)\}^2.
  \label{eq:query_contrast_scale}
\end{align}
For the larger-set dimension \(m=|T|\geq1\), define
\begin{equation}
  u_m=\sqrt{2\log\{\max(2,m)\}},
  \qquad
  \widehat\Gamma^{\mathrm{cal}}_m(S,T)
  =
  [\widehat\Gamma(S,T)-u_m\widehat\omega(S,T)]_+.
  \label{eq:calibrated_risk_contrast}
\end{equation}
{\widowpenalty=10000
Sampling variation can produce a positive estimated gain even when the
population gain is zero. Forward selection therefore adds a coordinate only
when its estimated gain exceeds the buffer \(u_m\widehat\omega(S,T)\).
Backward elimination discussed in Section \ref{sec:backward} then removes a coordinate when its estimated deletion loss
does not exceed the same buffer.\par}

Thus, starting from \(C_0=\widehat S_{\mathrm R}\) in (\ref{eq:bic_ridge_seed}), use
\(K=K_n(|C_t|+1)\) for both fits in every candidate comparison and define the
estimated explained-variance gain of
adding coordinate \(j\notin C_t\) by
\begin{equation}
  \widehat\Delta_j(C_t)
  =
  \widehat\Gamma(C_t,C_t\cup\{j\}).
  \label{eq:forward_gain}
\end{equation}
Rank candidates by the raw gain and add its maximizer only if the calibrated
gain is positive:
\begin{equation}
  j_{t+1}\in
  \arg\max_{j\notin C_t}\widehat\Delta_j(C_t),
  \qquad
  C_{t+1}=C_t\cup\{j_{t+1}\},
  \label{eq:forward_step}
\end{equation}
provided
\[
  \widehat\Gamma^{\mathrm{cal}}_{|C_t|+1}
  (C_t,C_t\cup\{j_{t+1}\})>0.
\]
The path continues under this condition while
\(|C_t|<d_{\max,n}\), and stops otherwise.  Denote the terminal set by
\begin{equation}
  \widehat C=C_{\widehat t}.
  \label{eq:ridge_start_screen}
\end{equation}
Backward elimination described in Section \ref{sec:backward} then removes coordinates whose deletion losses do not
exceed their buffers.  The neighborhood
rule and the numerical choices for \(r_p\) and ridge stabilization are given
in the supplement.

We next give conditions under which this ridge-start screen retains a
mean-sufficient set.  Under uniqueness, this is equivalent to covering \(A\).
Write
\[
  \Delta_j(S)=V(S\cup\{j\})-V(S).
\]
For both fixed and growing \(p\), assume that the complete ridge-ranking and
BIC rule has a stable deterministic output \(S_{\mathrm R}^\star\), which
may depend on \(n\), in the following sense:
\begin{equation}
  \Pr(\widehat S_{\mathrm R}=S_{\mathrm R}^\star)\longrightarrow1.
  \label{eq:ridge_seed_stability}
\end{equation}
If \(A\subseteq S_{\mathrm R}^\star\), this stability condition already gives
forward coverage.  Otherwise, define
\[
  \mathcal C_A
  =
  \{S_{\mathrm R}^\star\cup D:
    D\subseteq A,\ A\nsubseteq S_{\mathrm R}^\star\cup D\}.
\]

The following assumption separates the gain of the best remaining active
coordinate from those of inactive coordinates.
\begin{assumption}
\label{ass:forward_margin}
The set \(S_{\mathrm R}^\star\cup A\) has size at most
\(d_{\max,n}\).  For every \(C\in\mathcal C_A\),
\begin{align*}
  \max_{j\in A\setminus C}\Delta_j(C)&\geq\delta,\\
  \max_{j\in A\setminus C}\Delta_j(C)
  -\max_{j\in P\setminus(A\cup C)}\Delta_j(C)&\geq\eta
\end{align*}
for constants \(\delta,\eta>0\), with the second maximum interpreted as
\(-\infty\) when its index set is empty.
\end{assumption}

Let
\[
  \mathcal E_n
  =
  \{(C,j):C\in\mathcal C_A,\ j\notin C\}.
\]
Assume
\begin{equation}
  \max_{(S,j)\in\mathcal E_n}
  |\widehat\Delta_j(S)-\Delta_j(S)|
  =
  o_p(1).
  \label{eq:forward_uniform_error}
\end{equation}
Let \(M_n\geq1\) bound \(|S|+1\) over \((S,j)\in\mathcal E_n\).  Also assume
\begin{equation}
  \max_{(S,j)\in\mathcal E_n}
  \frac1n\sum_{i=1}^n
  H_i^2(S,S\cup\{j\})=O_p(1),
  \qquad
  \frac{\log\{\max(2,M_n)\}}{n}\longrightarrow0.
  \label{eq:forward_scale_condition}
\end{equation}

Now we show that the forward selection retains the active
set under these conditions.
\begin{proposition}
\label{prop:forward_coverage}
Under Assumption~\ref{ass:forward_margin},
\eqref{eq:ridge_seed_stability}--\eqref{eq:forward_scale_condition}, the
empirical forward path covers the minimal mean-sufficient set:
\begin{equation}
  \Pr(A\subseteq\widehat C)\longrightarrow1.
  \label{eq:forward_coverage}
\end{equation}
\end{proposition}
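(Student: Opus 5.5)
We argue on a high-probability event on which the forward path is forced to add only active coordinates until \(A\) is covered. First, by \eqref{eq:ridge_seed_stability} we may restrict to the event \(\{\widehat S_{\mathrm R}=S_{\mathrm R}^\star\}\), so \(C_0=S_{\mathrm R}^\star\). If \(A\subseteq S_{\mathrm R}^\star\), coverage is immediate, because the forward path only adds coordinates and hence \(C_0\subseteq\widehat C\). Otherwise \(C_0\in\mathcal C_A\), with \(D=\varnothing\). We then show by induction that, as long as \(A\nsubseteq C_t\), the current set satisfies \(C_t=S_{\mathrm R}^\star\cup D_t\) with \(D_t\subseteq A\), so that \(C_t\in\mathcal C_A\). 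At each such step an active coordinate is added, and the path does not stop. Since \(|A|\) is finite (or at least \(|S_{\mathrm R}^\star\cup A|\le d_{\max,n}\)), the induction terminates with \(A\subseteq C_t\subseteq\widehat C\).

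Define the good event \(\mathcal G_n\) to be the intersection of \(\{\widehat S_{\mathrm R}=S_{\mathrm R}^\star\}\), the event \(\{\max_{\mathcal E_n}|\widehat\Delta_j(S)-\Delta_j(S)|<\min(\eta/2,\delta/4)\}\), and the event \(\{\max_{\mathcal E_n}u_{M_n}\widehat\omega(S,S\cup\{j\})<\delta/4\}\). On \(\mathcal G_n\), for any \(C\in\mathcal C_A\) let \(j^\ast\in A\setminus C\) attain the active maximum. Assumption~\ref{ass:forward_margin} then gives, for every inactive \(k\notin A\cup C\),
\[
\widehat\Delta_{j^\ast}(C)>\Delta_{j^\ast}(C)-\eta/2\ge\Delta_k(C)+\eta/2>\widehat\Delta_k(C).
\]
Hence the empirical argmax in \eqref{eq:forward_step} lies in \(A\setminus C\). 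The argmax \(j_{t+1}\) need not equal \(j^\ast\), so its population gain need not be large. However, \(\widehat\Delta_{j_{t+1}}(C)\ge\widehat\Delta_{j^\ast}(C)>\delta-\delta/4\). Together with \(u_m\le u_{M_n}\) (monotone in \(m\)), this gives \(\widehat\Gamma^{\mathrm{cal}}>\delta/2>0\), so the step is accepted. The dimension constraint \(|C_t|<d_{\max,n}\) holds because \(C_t\subsetneq S_{\mathrm R}^\star\cup A\) and \(|S_{\mathrm R}^\star\cup A|\le d_{\max,n}\). This completes the induction on \(\mathcal G_n\).

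It remains to show \(\Pr(\mathcal G_n)\to1\). The first two pieces follow directly from \eqref{eq:ridge_seed_stability} and \eqref{eq:forward_uniform_error}. The main obstacle is the buffer term, which must be controlled uniformly over \(\mathcal E_n\). From \eqref{eq:query_contrast_scale},
\[
\widehat\omega^2\le\frac{1}{n-1}\cdot\frac1n\sum_i H_i^2(S,S\cup\{j\}).
\]
The first condition in \eqref{eq:forward_scale_condition} therefore gives \(\max_{\mathcal E_n}\widehat\omega^2=O_p(1/n)\). Multiplying by \(u_{M_n}^2=2\log\max(2,M_n)\) and using the second condition yields \(\max_{\mathcal E_n}u_{M_n}^2\widehat\omega^2=O_p(\log\max(2,M_n)/n)=o_p(1)\). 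Thus the third event also has probability tending to one, and \(\Pr(A\subseteq\widehat C)\ge\Pr(\mathcal G_n)\to1\).

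One subtlety must be checked: all quantities along the path are indexed by pairs \((C_t,j)\) that lie in \(\mathcal E_n\). This is exactly what the induction on \(\mathcal G_n\) guarantees, since every visited \(C_t\) belongs to \(\mathcal C_A\) until coverage. The uniform bounds therefore apply without needing control over data-dependent sets outside \(\mathcal C_A\).
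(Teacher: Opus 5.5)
Your proof is correct and follows the paper's argument essentially step for step: restrict to $\{\widehat S_{\mathrm R}=S_{\mathrm R}^\star\}$, use the margin $\eta$ to force every empirical maximizer into $A\setminus C$, and use $\delta$ together with the vanishing buffer (from $\widehat\omega^2\le (n-1)^{-1}n^{-1}\sum_i H_i^2$) to keep the calibrated gain positive. As in the paper, you then induct over $\mathcal C_A$ under the dimension bound $|S_{\mathrm R}^\star\cup A|\le d_{\max,n}$; your explicit constants $\eta/2$ and $\delta/4$, and the monotonicity $u_m\le u_{M_n}$, just spell out what the paper's event $\{2\epsilon_n<\eta,\ \epsilon_n+\max u_{|S|+1}\widehat\omega<\delta\}$ leaves implicit.
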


Assumption~\ref{ass:forward_margin} permits ties among active coordinates.  It
requires the best remaining active gain to be positive and separated from all
inactive gains until \(A\) is covered.  Uniform gain estimation and the
vanishing calibration buffer transfer this ordering to the empirical search.

\subsection{Backward Elimination}\label{sec:backward}

Let \(C\in\mathcal S_{\mathrm{ms}}\), equivalently \(A\subseteq C\) under the
uniqueness assumption.  The population deletion loss of coordinate \(j\in C\)
is
\begin{equation}
  D_j(C)
  =
  V(C)-V(C\setminus\{j\})
  =
  \mathcal L_M(C\setminus\{j\})-\mathcal L_M(C)
  =
  \mathcal L_Y(C\setminus\{j\})-\mathcal L_Y(C)
  =
  V(C)T_C(\{j\}).
  \label{eq:deletion_loss}
\end{equation}
It is the same one-coordinate total effect that enters the PME set function.
Deleting a coordinate with \(D_j(C)=0\) preserves mean sufficiency.

For a current set \(C\) of size \(d\), use the common neighborhood size
\(K=K_n(d)\) for the full and reduced fits, while recomputing neighborhoods in
the reduced coordinates.  The empirical deletion loss and its calibrated
version are
\begin{equation}
  \widehat D_j(C)
  =
  \widehat\Gamma(C\setminus\{j\},C),
  \qquad
  \widetilde D_j(C)
  =
  \widehat\Gamma^{\mathrm{cal}}_d(C\setminus\{j\},C).
  \label{eq:universal_deletion_loss}
\end{equation}
Here \(u_d\widehat\omega(C\setminus\{j\},C)\) is the buffer for the deletion
loss.  The multiplier follows the universal-threshold form
\citep{donoho1994ideal}, while the empirical scale adapts the buffer to the
observed variation across sample points.

Backward elimination removes one coordinate attaining
\(\min_{j\in C}\widetilde D_j(C)=0\), recomputes every full and reduced fit on
the smaller set, and stops when all calibrated deletion losses are positive.
Denote the terminal set by \(\widehat A\).  Sequential recomputation matters
because both the projected neighborhoods and the remaining deletion losses
change after a coordinate is removed. The following proposition gives conditions under which backward elimination
preserves the active coordinates retained by forward selection.
\begin{proposition}
\label{prop:active_retention}
Suppose \(\Pr(A\subseteq\widehat C)\to1\), and let \(M_n\) bound
the dimensions visited by backward elimination. In the following conditions,
\(S\) ranges over the visited sets satisfying \(A\subseteq S\), and
\(j\) ranges over \(S\). Uniformly over these sets, assume that
\[
  \mathcal L_Y(S\setminus\{j\})-\mathcal L_Y(S)\geq\Delta
\]
for some \(\Delta>0\) and every \(j\in A\), and that the same-\(K\) deletion
losses are uniformly consistent.  If
\begin{equation}
  \max_{S,j}
  \frac1n\sum_{i=1}^n
  H_i^2(S\setminus\{j\},S)=O_p(1),
  \qquad
  \frac{\log\{\max(2,M_n)\}}{n}\longrightarrow0,
  \label{eq:vanishing_query_threshold}
\end{equation}
then
\[
  \Pr(A\subseteq\widehat A)\longrightarrow1.
\]
\end{proposition}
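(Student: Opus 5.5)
The strategy is to show that, with probability tending to one, no active coordinate is ever removed along the empirical backward path. We work on an event $\Omega_n$ built from three ingredients: $A\subseteq\widehat C$; uniform consistency of the same-$K$ deletion losses, $\max_{S,j}|\widehat D_j(S)-D_j(S)|\le\Delta/4$; and a uniform bound on the buffers, $\max_{S,j}u_{|S|}\widehat\omega(S\setminus\{j\},S)\le\Delta/4$. Here $S$ ranges over visited supersets of $A$ and $j\in S$. Proposition~\ref{prop:noise_decomposition} gives $D_j(S)=\mathcal L_Y(S\setminus\{j\})-\mathcal L_Y(S)$, so the margin assumption reads $D_j(S)\ge\Delta$ for $j\in A$. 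The first two ingredients hold with probability tending to one by hypothesis, so we must verify the third and then argue deterministically on $\Omega_n$.

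\textbf{Buffer step (expected main obstacle).} From \eqref{eq:query_contrast_scale},
\[
\widehat\omega^2(S\setminus\{j\},S)\le\frac{1}{n-1}\cdot\frac1n\sum_{i=1}^n H_i^2(S\setminus\{j\},S).
\]
Since the maximum of the right-hand side over $(S,j)$ is $O_p(1)/(n-1)$ by \eqref{eq:vanishing_query_threshold}, and $u_{|S|}^2\le 2\log\{\max(2,M_n)\}$, we get
\[
\max_{S,j}u_{|S|}^2\widehat\omega^2\le 2\log\{\max(2,M_n)\}\,O_p(1)/(n-1)=o_p(1).
\]
This yields the third ingredient with probability tending to one. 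The delicate point is that the visited sets are random, since they depend on the data path. We handle this by reading the hypotheses as uniform over all sets the algorithm can visit, as the statement stipulates. The deterministic induction below then stays inside that class.

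\textbf{Induction along the path.} Let $C^{(0)}=\widehat C\supseteq A$ and let $C^{(k)}$ be the successive sets. Suppose $A\subseteq C^{(k)}$. For any $j\in A$, on $\Omega_n$,
\[
\widehat\Gamma(C^{(k)}\setminus\{j\},C^{(k)})-u_d\widehat\omega\ge\Delta-\Delta/4-\Delta/4=\Delta/2>0,
\]
so $\widetilde D_j(C^{(k)})>0$ by \eqref{eq:calibrated_risk_contrast}. The algorithm removes only a coordinate with $\widetilde D_j=0$, so the removed coordinate lies outside $A$, and $A\subseteq C^{(k+1)}$. Each step recomputes fits on a smaller visited superset of $A$, and the uniform bounds apply there. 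The path has at most $|\widehat C|$ steps, so it terminates at some $\widehat A\supseteq A$.

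Hence $\Pr(A\subseteq\widehat A)\ge\Pr(\Omega_n)\to1$. No control on inactive coordinates is needed, since the statement asserts only retention.
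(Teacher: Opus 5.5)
Your proof is correct and follows the paper's argument. You use the same bound \(\widehat\omega^2\le (n-1)^{-1}n^{-1}\sum_i H_i^2\) together with \(u_{|S|}\le\sqrt{2\log\{\max(2,M_n)\}}\) to make the buffers vanish uniformly, and then uniform consistency plus the margin \(\Delta\) keeps every active calibrated deletion loss positive along the recomputed path, so induction shows no active coordinate is removed. Your explicit \(\Delta/4\) event \(\Omega_n\) is only a more concrete bookkeeping of the paper's \(o_p(1)\) statements.
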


Condition~\eqref{eq:vanishing_query_threshold} makes the buffers vanish
uniformly, so uniformly consistent estimates of the positive active
deletion losses exceed them with probability tending to one.
Thus backward elimination preserves the coordinates needed for PRIME;
by Proposition~\ref{prop:selected_set_equivalence}, surplus coordinates
do not affect the population allocation.

\input{sections/after_selection}
\FloatBarrier

\subsection{PRIME Allocation on the Selected Coordinates}

Now that we have identified the active set $ \widehat A$, we can compute the PME using only $ \widehat A$, which gives PRIME.  Define
\[
  \widehat{\mathcal L}_{\widehat A}
  =\widehat{\mathcal L}_{\mathrm{LL}}(\widehat A),
  \qquad
  \widehat V_{\widehat A}
  =
  \left[\widehat{\Var}(Y)-\widehat{\mathcal L}_{\widehat A}\right]_+.
\]
On the coverage event \(A\subseteq\widehat A\),
Proposition~\ref{prop:noise_decomposition} gives
\(\mathcal L_Y(\widehat A)=\sigma_\varepsilon^2\), so we use
\(\widehat\sigma_\varepsilon^2=\widehat{\mathcal L}_{\widehat A}\) as the
selected-set noise-floor estimate in \eqref{eq:estimated_signal_risk}.  If
coverage fails, this risk can also contain omitted-signal variation.  For
\(U\subseteq\widehat A\), this gives
the noise-adjusted risk
\begin{equation}
  \widehat{\mathcal R}_{\widehat A}(U)
  =
  \left[
    \widehat{\mathcal L}_{\mathrm{LL}}(U)
    -\widehat{\mathcal L}_{\widehat A}
  \right]_+,
  \qquad
  \widehat{\mathcal R}_{\widehat A}(\varnothing)
  =
  \widehat V_{\widehat A}.
  \label{eq:noise_adjusted_subset_risk}
\end{equation}
When \(\widehat V_{\widehat A}>0\), estimate the selected-set total effect by
\begin{equation}
  \widehat T_{\widehat A}(B)
  =
  \frac{\widehat{\mathcal R}_{\widehat A}(\widehat A\setminus B)}
       {\widehat V_{\widehat A}},
  \label{eq:estimated_total_effect}
\end{equation}
with the values at \(B=\varnothing\) and \(B=\widehat A\) fixed at zero and
one.  Although, the positive part forces estimates below the selected-set noise floor to zero, the resulting total-effect estimates can still fail to increase with
the deleted set \(B\), as the population values do.  We enforce this ordering
by projecting onto the closed convex set
\begin{equation}
  \mathcal M_{\widehat A}
  =
  \left\{
  t:t(\varnothing)=0,\;t(\widehat A)=1,\;
  t(S)\leq t(U)\text{ whenever }S\subseteq U
  \right\}.
  \label{eq:normalized_monotone_class}
\end{equation}
Let \(\Pi_C\) denote Euclidean projection onto \(\mathcal M_C\), and set
\begin{equation}
  \widetilde T_{\widehat A}
  =
  \Pi_{\widehat A}(\widehat T_{\widehat A})
  =
  \arg\min_{t\in\mathcal M_{\widehat A}}
  \sum_{B\subseteq\widehat A}
  \{t(B)-\widehat T_{\widehat A}(B)\}^2.
  \label{eq:monotone_projection}
\end{equation}
The reduced PRIME estimate is
\begin{equation}
  \widehat\psi_{\widehat A}
  =
  \PV^0(\widehat A,\widetilde T_{\widehat A})
  \in\mathbb R^{|\widehat A|},
  \label{eq:estimated_selected_pme}
\end{equation}
and the zero-padded PRIME vector is
\begin{equation}
  \widehat\psi_j
  =
  \begin{cases}
    (\widehat\psi_{\widehat A})_j, & j\in\widehat A,\\
    0, & j\notin\widehat A.
  \end{cases}
  \label{eq:prime_estimator}
\end{equation}
The complete PRIME procedure is detailed in Algorithm~\ref{alg:framework}.

In the following results, \(\widehat{\mathcal L}_Y(U)\) denotes the
allocation-stage local-linear risk estimate, with
\(\widehat{\mathcal L}_Y(\varnothing)=\widehat{\Var}(Y)\).
For fixed nonempty \(C\), construct \(\widehat T_C\) by
\eqref{eq:noise_adjusted_subset_risk}--\eqref{eq:estimated_total_effect},
with \(C\) in place of \(\widehat A\). Proposition \ref{prop:pme_stability} shows how consistency of the subset-risk estimates
carries over to the PME allocation on a fixed coordinate set.
\begin{proposition}
\label{prop:pme_stability}
For fixed \(C\) with \(V(C)>0\), suppose that \(T_C\) has a greatest null
coalition.  If
\[
  \max_{S\subseteq C}
  |\widehat{\mathcal L}_Y(S)-\mathcal L_Y(S)|
  \xrightarrow{p}0,
\]
then \(\Pr(\widehat V_C>0)\to1\).  Conditional on \(\widehat V_C>0\),
\[
  \|\Pi_C(\widehat T_C)-T_C\|_2\xrightarrow{p}0
\]
and
\[
  \PV^0\{C,\Pi_C(\widehat T_C)\}
  \xrightarrow{p}
  \PV^0(C,T_C).
\]
\end{proposition}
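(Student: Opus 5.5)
The argument has three steps: consistency of the plug-in values, continuity of the projection, and continuity of \(\PV^0\) at games with a greatest null coalition.

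\textbf{Step 1: plug-in consistency.} The statement assumes \(\max_{S\subseteq C}|\widehat{\mathcal L}_Y(S)-\mathcal L_Y(S)|\to0\) in probability. Proposition~\ref{prop:noise_decomposition} gives \(\mathcal L_Y(C)=\mathcal L_M(C)+\sigma_\varepsilon^2\) and \(\widehat{\mathcal L}_Y(\varnothing)=\widehat{\Var}(Y)\).

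\begin{itemize}
\item \emph{Denominator.} We have \(\widehat V_C=[\widehat{\mathcal L}_Y(\varnothing)-\widehat{\mathcal L}_Y(C)]_+\). Since \(\mathcal L_Y(\varnothing)-\mathcal L_Y(C)=V(C)\), this converges in probability to \(V(C)>0\) by continuity of \([\cdot]_+\). This gives \(\Pr(\widehat V_C>0)\to1\).
\item \emph{Numerators.} For \(U\subseteq C\), write \(\mathcal L_Y(U)-\mathcal L_Y(C)=V(C)-V(U)\). Then \(\widehat{\mathcal R}_C(U)\to V(C)-V(U)\) in probability, again because \([\cdot]_+\) is continuous. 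The operation \([\cdot]_+\) is harmless here because the population quantity is nonnegative.
\item \emph{Ratio.} On \(\{\widehat V_C>0\}\), the continuous mapping theorem applied to the ratio gives \(\widehat T_C(B)\to\{V(C)-V(C\setminus B)\}/V(C)=T_C(B)\) for each \(B\).
\item \emph{Endpoints.} The values at \(B=\varnothing\) and \(B=C\) are fixed at \(0\) and \(1\), which agree with \(T_C\).
\end{itemize}

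Since there are finitely many \(B\), this yields \(\|\widehat T_C-T_C\|_2\to0\) in probability.

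\textbf{Step 2: the projection.} The set \(\mathcal M_C\) is nonempty, closed and convex in \(\mathbb R^{2^{|C|}}\), so \(\Pi_C\) is 1-Lipschitz. The population game \(T_C\) is monotone because \(V\) is monotone under nesting, since conditioning on more coordinates increases explained variance. It also satisfies \(T_C(\varnothing)=0\) and \(T_C(C)=1\). Hence \(T_C\in\mathcal M_C\) and \(\Pi_C(T_C)=T_C\). Therefore
\[
\|\Pi_C(\widehat T_C)-T_C\|_2\le\|\widehat T_C-T_C\|_2\xrightarrow{p}0 .
\]

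\textbf{Step 3: continuity of \(\PV^0\) (main obstacle).} The remaining task is to show that \(t\mapsto\PV^0(C,t)\), restricted to \(\mathcal M_C\), is continuous at \(T_C\). Once that holds, the continuous mapping theorem finishes the proof. The difficulty is that \(\widetilde T_C=\Pi_C(\widehat T_C)\) will typically be strictly positive on the null sets \(B\subseteq E\) of \(T_C\), where \(E\) is the greatest null coalition. Ordinary \(\PV\) is then evaluated on a game with tiny values where the limit has zeros, so continuity must be proved directly; the paper says this is established in the Supplement.

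I would use the permutation representation \eqref{eq:pme_permute}--\eqref{eq:pv_order_weight}. Fix a monotone game \(t\) near \(T_C\) that is positive on nonempty sets.

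\begin{enumerate}
\item Call an ordering \(\pi\) \emph{\(E\)-first} if every element of \(E\) precedes every element of \(C\setminus E\). The positive values of \(T_C\) are bounded below by a constant \(c>0\).
\item For an \(E\)-first \(\pi\), the product \(\prod_\ell t\{\operatorname{Pre}_\ell(\pi)\cup\{\ell\}\}\) splits into two factors. The first is \(\prod_{\ell\in E}t(\cdot)\), which depends only on the order within \(E\). The second is a product of values of \(t(E\cup\cdot)\approx T_{C,E}\).
\item For a non-\(E\)-first \(\pi\), some element of \(C\setminus E\) comes before the last element of \(E\). Swapping that element toward the end turns a small factor into one of size at least \(c\).
\end{enumerate}

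Making this comparison precise is the crux. I would try to show, by an exchange argument, that the total weight of non-\(E\)-first orderings is \(O(\max_{B\subseteq E}t(B)/c)\) relative to the \(E\)-first weight. A workable alternative is an induction on \(|E|\) through the ratio potential \eqref{eq:ratio_potential}. One can show \(R(B,t)\to0\) for \(B\subseteq E\), while \(R(B,t)/R(B\cap E,t)\) converges for sets meeting \(C\setminus E\), because the terms \(R(\cdot)^{-1}\) with small potentials dominate each harmonic sum.

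Given this, the normalized weights concentrate on \(E\)-first orderings. Within those orderings, the \(E\)-internal factors cancel between numerator and denominator. The resulting weights on orderings of \(C\setminus E\) converge to the proportional weights of \(T_{C,E}\).

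Marginal contributions then behave as follows. For \(i\in E\), \(i\)'s marginal contribution under an \(E\)-first ordering is at most \(\max_{B\subseteq E}t(B)\to0\), while non-\(E\)-first orderings carry vanishing weight and contributions bounded by \(1\), so \(\PV_i\to0\). For \(i\notin E\), \(\PV_i\to\PV_i(C\setminus E,T_{C,E})\).

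This is exactly \eqref{eq:continuous_pv_unique_null}. It also covers the case where \(\widetilde T_C\) itself has zeros, because \(\PV^0\) is a limit of the same form and a diagonal argument applies.
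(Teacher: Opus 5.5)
Your proposal follows the paper's proof. Steps 1--2 are its plug-in bounds, which the paper makes explicit as \(|\widehat V_C-V(C)|\le 2e_n\) and \(\|\widehat T_C-T_C\|_\infty\le 8e_n/V(C)\) on \(\{e_n<V(C)/4\}\), together with nonexpansiveness of \(\Pi_C\). Step 3 is the supplement's continuity lemma at games with a greatest null coalition, proved by the same \(E\)-first versus non-\(E\)-first ordering comparison, the same factorization onto the shifted game, and the same regularize-then-diagonalize step for games with zeros.

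The exchange estimate you left open is closed there as follows. Each non-\(E\)-first ordering is mapped to the ordering that moves all of \(E\) to the front while preserving the order within each group. Take \(r=\|t-T_C\|_\infty<c/2\) and \(d=|C|\). Then the factors before the first player outside \(E\) coincide. That player's factor ratio is at most \(2r/c\), because its prefix lies in \(E\) in the good ordering but meets \(C\setminus E\) in the bad one. Every later ratio is at most \(2/c\), and each good ordering has at most \(d!\) preimages. Hence the non-\(E\)-first orderings carry total probability at most \(d!\,r(2/c)^d\).

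It must be the first player outside \(E\): an arbitrary one preceding some element of \(E\) need not produce a small factor.
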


\begin{algorithm}[!b]
\footnotesize
\caption{Proportional inclusive marginal effects (PRIME)}
\label{alg:framework}
\begin{algorithmic}[1]
\Require Data \((\bX,\by)\); \(K_n(\cdot)\);
\(r_p\); \((\lambda_{\mathrm{init}},\lambda_{\mathrm{loc}})\)
\Ensure Reduced coordinates \(\widehat A\) and importance vector
\(\widehat\psi\) when allocation is defined
\Statex \textbf{Ridge initialization}
\State Standardize \(\bX\)
\State Compute \(\widehat b\) and \(\widehat\pi\) as in
\eqref{eq:ridge_seed}
\State Compute \(\widehat r\) and \(\widehat S_{\mathrm R}\) as in
\eqref{eq:ridge_bic}--\eqref{eq:bic_ridge_seed}
\State Compute \(d_{\max,n}\) as in \eqref{eq:local_dimension_limit}
\State \(C\gets\widehat S_{\mathrm R}\)
\Statex \textbf{Forward selection}
\While{\(C\neq P\) and \(|C|<d_{\max,n}\)}
  \State \(K\gets K_n(|C|+1)\)
  \State Evaluate \(\widehat\Gamma(C,C\cup\{j\})\) and
  \(\widehat\Gamma^{\mathrm{cal}}_{|C|+1}(C,C\cup\{j\})\),
  \(j\in P\setminus C\), as in
  \eqref{eq:local_linear_variance} and
  \eqref{eq:query_risk_contrast}--\eqref{eq:calibrated_risk_contrast}
  \State Choose \(j^\star\in
  \arg\max_{j\notin C}\widehat\Gamma(C,C\cup\{j\})\)
  \If{\(\widehat\Gamma^{\mathrm{cal}}_{|C|+1}
  (C,C\cup\{j^\star\})=0\)}
    \State \textbf{break}
  \EndIf
  \State \(C\gets C\cup\{j^\star\}\)
\EndWhile
\Statex \textbf{Backward elimination}
\While{\(|C|>0\)}
  \State \(K\gets K_n(|C|)\)
  \State Evaluate \(\widetilde D_j(C)\), \(j\in C\), as in
  \eqref{eq:universal_deletion_loss}
  \If{\(\min_{j\in C}\widetilde D_j(C)>0\)}
    \State \textbf{break}
  \EndIf
  \State Choose \(j^\star\in\arg\min_{j\in C}\widetilde D_j(C)\)
  \State \(C\gets C\setminus\{j^\star\}\)
\EndWhile
\Statex \textbf{PRIME allocation}
\State \(\widehat A\gets C\)
\If{\(\widehat A=\varnothing\)}
  \State Report unavailable allocation and \textbf{stop}
\EndIf
\State Compute the retained explained variance \(\widehat V_{\widehat A}\)
\If{\(\widehat V_{\widehat A}\leq0\)}
  \State Report unavailable allocation and \textbf{stop}
\EndIf
\State Form \(\widehat T_{\widehat A}(B)\),
\(B\subseteq\widehat A\), as in
\eqref{eq:noise_adjusted_subset_risk}--\eqref{eq:estimated_total_effect}
\State \(\widetilde T_{\widehat A}\gets
\Pi_{\widehat A}(\widehat T_{\widehat A})\) according to
\eqref{eq:monotone_projection}
\State Set \(\widehat\psi\) as in
\eqref{eq:estimated_selected_pme}--\eqref{eq:prime_estimator}
\State \Return \((\widehat A,\widehat\psi)\)
\end{algorithmic}
\end{algorithm}


The following theorem combines the selection and allocation results to
establish consistency of PRIME.
\begin{theorem}
\label{thm:end_to_end}
Let \(p=p_n\) be allowed to grow while the marginal law of the
square-integrable response \(Y\) remains fixed, and write \(A_n\) for the
corresponding unique minimal mean-sufficient set.  Suppose that \(|A_n|\)
and the terminal selected dimension are bounded by a fixed integer \(q\),
\(\inf_n V(A_n)>0\), and the conditions of
Propositions~\ref{prop:forward_coverage} and \ref{prop:active_retention}
hold with margins independent of \(n\).  Assume explicitly that the risks
are uniformly consistent over all subsets of the data-dependent selected set:
\begin{equation}
  \max_{U\subseteq\widehat A}
  |\widehat{\mathcal L}_Y(U)-\mathcal L_Y(U)|
  \xrightarrow{p}0.
  \label{eq:selected_risk_uniform_consistency}
\end{equation}
Let \(\mathcal G_n=\{\widehat A\neq\varnothing,
\widehat V_{\widehat A}>0\}\) be the event that the allocation is available.
Then \(\Pr(\mathcal G_n)\to1\), and, for every \(\epsilon>0\),
\begin{equation}
  \Pr\!\left\{
    \max_{j\in P}|\widehat\psi_j-\psi_j^\star|>\epsilon
    \,\middle|\,\mathcal G_n
  \right\}\longrightarrow0.
  \label{eq:end_to_end}
\end{equation}
\end{theorem}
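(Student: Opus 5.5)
The plan is to intersect a handful of high-probability events and then work on the finitely many possible values of the selected set. The first event is coverage. Proposition~\ref{prop:forward_coverage} gives \(\Pr(A_n\subseteq\widehat C)\to1\). Proposition~\ref{prop:active_retention}, whose margins do not depend on \(n\), then gives \(\Pr(A_n\subseteq\widehat A)\to1\). On this event \(\widehat A\ne\varnothing\), since \(\inf_n V(A_n)>0\) forces \(A_n\neq\varnothing\). Moreover \(\widehat A\) is mean-sufficient, because \(V(\widehat A)\ge V(A_n)=V(P)\).

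Proposition~\ref{prop:selected_set_equivalence} then gives \(\psi(\widehat A)=\psi^\star_n\) on this event, where \(\psi^\star_n\) is the target for \(p=p_n\). Outside \(\widehat A\) both \(\widehat\psi_j\) and \(\psi_j^\star\) vanish. The problem therefore reduces to bounding \(\max_{j\in\widehat A}|(\widehat\psi_{\widehat A})_j-\PME_j(\widehat A)|\).

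Next I show availability. On the coverage event, the uniform consistency assumption \eqref{eq:selected_risk_uniform_consistency} applied to \(U=\varnothing\) and \(U=\widehat A\) gives
\[
\widehat V_{\widehat A}\ge V(\widehat A)-o_p(1)\ge \inf_n V(A_n)-o_p(1).
\]
Hence \(\Pr(\mathcal G_n)\to1\). Because \(\Pr(\mathcal G_n)\to1\), conditioning on \(\mathcal G_n\) changes probabilities by at most \(o(1)\). It is therefore enough to show that the unconditional probability of the event \(\{\mathcal G_n,\ \max_j|\widehat\psi_j-\psi^\star_j|>\epsilon\}\) tends to zero.

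The main step is to transfer Proposition~\ref{prop:pme_stability} to the random and dimension-varying set \(\widehat A\). I would handle this through a uniform version of that proposition. Once selection succeeds, the allocation depends only on \(|\widehat A|\le q\) and on the values of \(T_{\widehat A}\) on at most \(2^q\) subsets. More concretely, it depends only on the vector of subset values \((V(U))_{U\subseteq\widehat A}\), normalised by \(V(\widehat A)\ge\inf_nV(A_n)\), and these values are bounded because the law of \(Y\) is fixed.

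I would prove the following deterministic continuity lemma. Consider the map that sends the raw noise-adjusted values \(\widehat T\), through the projection \(\Pi\) and then \(\PV^0\), to an allocation. Fix a size \(k\le q\) and a compact family \(\mathcal K_k\) of normalised monotone games on \(k\) players that have a greatest null coalition \(E\), with \(T=0\) exactly on subsets of \(E\). The lemma asserts that this map is continuous at every game in \(\mathcal K_k\), uniformly over the family. Projection onto the convex set \(\mathcal M\) is \(1\)-Lipschitz, and \(\widehat T_C\) converges to \(T_C\) uniformly by the same argument as in Proposition~\ref{prop:pme_stability}, using the risk consistency together with the lower bound on \(V\). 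Uniform continuity of \(\PV^0\) should follow from the continuity argument in the Supplement, via compactness of \(\mathcal K_k\) together with pointwise continuity.

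The obstacle is that along \(n\) the true games \(T_{\widehat A}\) might approach the boundary, where a positive value on a non-null coalition tends to zero and the null structure changes. Near such games, \(\PV^0\) is not continuous. My first attempt would be to rule this out by using the margins. Every coalition \(B\) with \(B\cap A_n\neq\varnothing\) satisfies \(V(\widehat A)T(B)\ge V(\widehat A)T(\{j\})=D_j(\widehat A)\ge\Delta\) for some \(j\in B\cap A_n\), by monotonicity and the deletion-loss margin of Proposition~\ref{prop:active_retention}. So non-null values are bounded below by \(\Delta/\Var(Y)\), which confines the relevant games to a compact set with a fixed null pattern. Since \(\widehat A\) takes at most finitely many size and null-pattern configurations, a union bound finishes the proof.
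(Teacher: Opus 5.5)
Your proposal is correct and follows essentially the same route as the paper's proof. The steps match: coverage via Propositions~\ref{prop:forward_coverage} and \ref{prop:active_retention}; selected-set equivalence to reduce to $\widehat A$; a ratio bound on $\widehat T_{\widehat A}$ combined with nonexpansiveness of the projection; and the key separation $T_{\widehat A}(B)\geq\Delta/\Var(Y)$ for $B\nsubseteq\widehat A\setminus A_n$, which places the population game in a compact class with fixed null pattern where the Supplement's continuity lemma gives a uniform modulus. Your finite union over size and null-pattern configurations plays the same role as the paper's compactness-and-subsequence construction of $\omega_{q,c}$ over the finitely many dimensions at most $q$.
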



%% file: sections/after_selection.tex
For illustration, consider the Friedman function \citep{friedman1991multivariate}
\[
 Y=10\sin(\pi X_{a_1}X_{a_2})+20(X_{a_3}-1/2)^2
   +10X_{a_4}+5X_{a_5}+\epsilon,
 \qquad \epsilon\sim N(0,1),
\]
with noise independent of the five active predictors and \(p=100\).
Let \(X_j=\Phi(Z_j)\), where \(Z\sim N(0,\Sigma)\). We use three different simulation designs.
(i) The independent design uses \(\Sigma=I\) and \(a=\{1,\ldots,5\}\).
(ii) The spaced AR(1) design uses \(\Sigma_{jk}=0.5^{|j-k|}\) and
\(a=\{10,30,50,70,90\}\). Adjacent latent coordinates are correlated,
while consecutive active coordinates are separated by 20 positions.
(iii) The proxy design uses \(a=\{1,\ldots,5\}\),
\(\Sigma_{j,5+j}=\Sigma_{5+j,j}=0.5\) for \(j\leq5\),
unit diagonal, and all other off-diagonal entries zero. The covariance
parameters describe the latent Gaussian variables before applying the $\Phi$-transformation.
In addition to the proposed screening method (PRIME screen), we include a few other variable selection methods FIRST \citep{huang2025first}, LASSO
\citep{tibshirani1996regression}, FOCI \citep{azadkia2021simple}, and FORD
\citep{azadkia2025integrated} in the comparison. We use the exact active-set recovery as the performance measure, which is defined as the proportion of replications for which
\(\widehat A=A\). Figure \ref{fig:friedman_selection_illustration} shows the  exact active-set recovery of the five methods for a range of values of $n$ from 100 to 2000 each with 50 replications. We can see that the PRIME screen attains the highest observed exact-recovery rate in all three simulation designs.

\begin{figure}[!htbp]
\centering
\includegraphics[width=\textwidth]{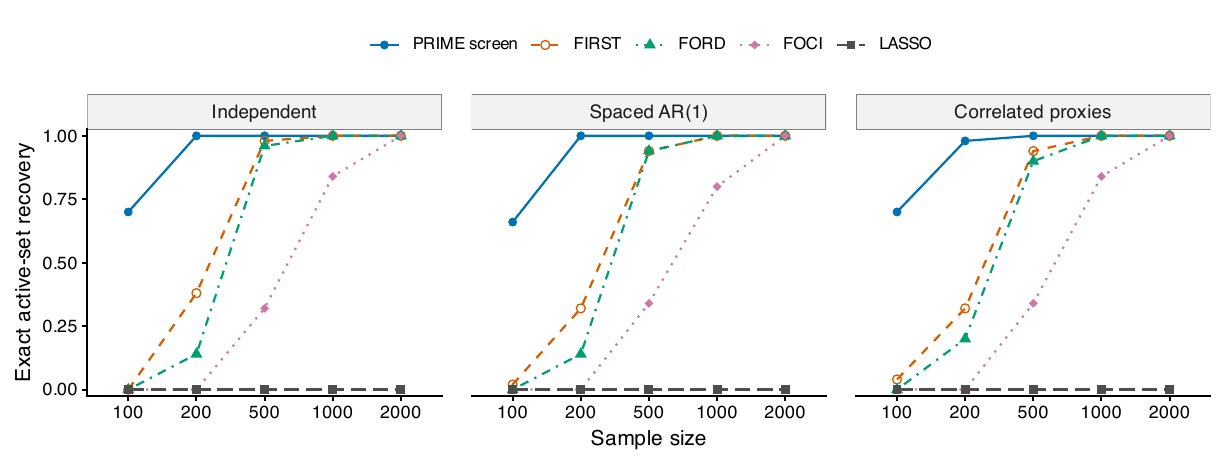}
\caption{Exact active-set recovery for Friedman function at \(p=100\) based on 50 replications.}
\label{fig:friedman_selection_illustration}
\end{figure}

%% file: sections/06_experiments.tex
In this section, we compare PRIME to the other competing methods on a test function and a real dataset. Comparisons with additional test functions and datasets are provided in the supplementary file.

\input{sections/new_friedman}

\ifConcreteMain
  \input{sections/new_concrete}
\else
  \input{sections/new_auto_mpg}
\fi

%% file: sections/new_friedman.tex
\subsection{A Test Function}
Consider again the Friedman function \citep{friedman1991multivariate}:
\[
Y=10\sin(\pi X_1X_2)+20(X_3-1/2)^2+10X_4+5X_5+\epsilon,
\qquad \epsilon\overset{iid}{\sim} N(0,1).
\]
The inputs are generated using
\[
X_j=\Phi(Z_j),\qquad
X_{5+j}=\Phi\!\left(\rho Z_j+\sqrt{1-\rho^2}E_j\right),
\qquad j=1,\ldots,5,
\]
where $Z_j, E_j \overset{iid}{\sim} N(0,1)$. Thus, $X_1,\ldots,X_5$ are active and $X_6,\ldots,X_{10}$ are paired proxies.


We compare PRIME to PME. Although \cite{herin2024proportional} discusses only deterministic outputs, a simple adjustment for noisy outputs is provided in the R package \texttt{sensitivity} \citep{iooss2022sensitivity} in which the total Sobol' index is adjusted as
\[\widehat T_P^{\mathrm{kNN}}(B)=\dfrac{\widehat{\mathcal L}_{\mathrm{kNN}}(P\setminus B)
       -\widehat{\mathcal L}_{\mathrm{kNN}}(P)}
      {\widehat{\Var}(Y)},\]
where \(\widehat{\mathcal L}_{\mathrm{kNN}}(S)\) denotes the
average sample response variance over neighborhoods in \(S\)-coordinates with six neighbors and \(\widehat{\mathcal L}_{\mathrm{kNN}}(\varnothing)=\widehat{\Var}(Y)\). 
For a baseline reference, we also compare PRIME to the Shapley effects after screening given by the FIRST-SHAP method \citep{huang2025first}. SHAP \citep{lundberg2017unified} is a widely used Shapley-based technique for interpreting complex black-box models, but the SHAP values computed for the entire dataset can be aggregated to produce a global importance measure. Here we compute the mean absolute SHAP values as in \cite{williamson2020spvim} by fitting gradient-boosted trees to each dataset and averaging absolute tree-path-dependent SHAP values over the same observations. Tree depth and number of trees are selected by three-fold cross-validation using the one-standard-error rule. The results with $n=100$ and $\rho=0.9$ with 30 replications are shown in Figure \ref{fig:friedman-methods}.

\begin{figure}[H]
\centering
\includegraphics[width=0.85\textwidth]{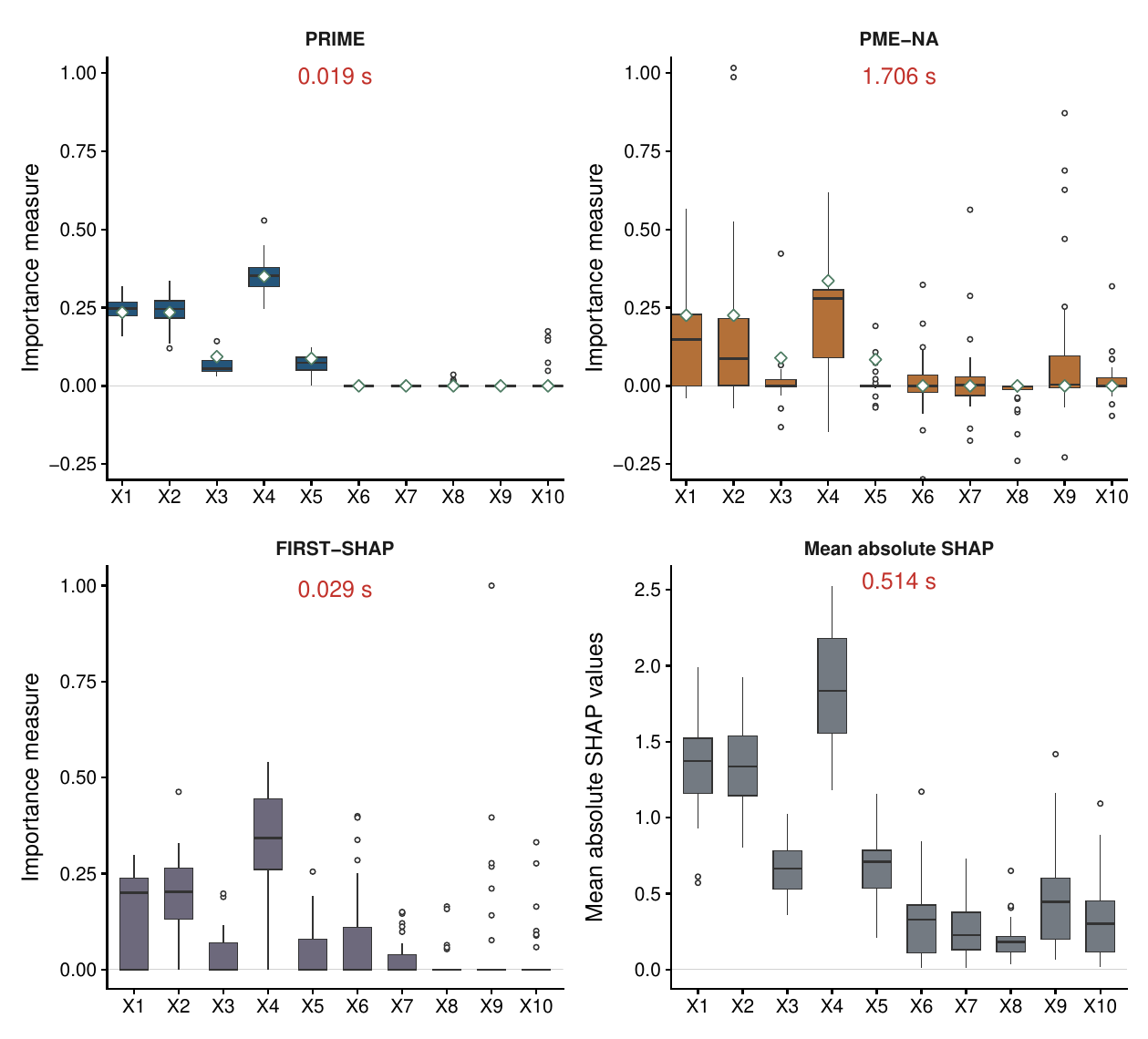}
\caption{Variable importance for Friedman function. Mean absolute SHAP is in response units. Diamonds mark the method-specific population PME targets: unit-signal PME for PRIME and response-variance PME for PME-NA. Computational times shown in red are median elapsed seconds per replication.
Thirteen estimates of PME-NA lie outside the displayed range.}
\label{fig:friedman-methods}
\end{figure}

We can see that PRIME gives the best performance. It identifies $X_1,\ldots,X_5$ as the important variables  and assigns zero importance to the inactive variables $X_6,\ldots,X_{10}$ most of the time. On the other hand, the noise-adjusted PME (denoted as PME-NA) shows large variation in the importance measures of each variable and even include negative values. The FIRST-SHAP also suffers from large variation and many times misidentifies $X_6$ and $X_7$ as important variables. The disadvantage of using Shapley-based effects can be clearly seen in SHAP which allocates nonzero importance measures to the inactive variables $X_6,\ldots,X_{10}$. 


The computational times of the four methods are also shown in Figure \ref{fig:friedman-methods}. We can see that PRIME is the fastest of the four. PME-NA evaluates $2^{10}=1,024$ subsets for allocation, whereas PRIME evaluates $2^{|\widehat A|}$ subsets after screening, with a median of 32 subsets in this comparison. The computational time for mean absolute SHAP also includes the model fitting time of gradient-boosted trees.

We repeated the PRIME Vs PME comparison for all combinations of $n\in\{100,200,500\}$ and $\rho\in\{0,0.5,0.9\}$. Table \ref{tab:pme_accuracy} shows the mean relative half-$L_1$ PME error of PRIME and PME-NA. PRIME has lower mean relative PME error than PME-NA in each of the nine conditions. To check the effectiveness of variable selection, we also included PRIME with all the 10 variables (PRIME-Full). We can see that its error is higher than that of PRIME showing that variable selection indeed helps in improving the estimates of the importance measure.

\input{tables/pme_accuracy}

%% file: tables/pme_accuracy.tex
\begin{table}[H]
\centering
\caption{Mean relative half-\(L_1\) PME error over 30 datasets per condition. }
\begin{tabular}{rrccc}
\toprule
\(n\) & \(\rho\) & PRIME & PRIME-Full & PME-NA \\
\midrule
100 & 0 & 0.101 & 0.140 & 0.644 \\
100 & 0.5 & 0.107 & 0.143 & 0.404 \\
100 & 0.9 & 0.103 & 0.134 & 0.748 \\
\addlinespace
200 & 0 & 0.067 & 0.104 & 0.428 \\
200 & 0.5 & 0.062 & 0.094 & 0.398 \\
200 & 0.9 & 0.083 & 0.109 & 0.773 \\
\addlinespace
500 & 0 & 0.047 & 0.080 & 0.741 \\
500 & 0.5 & 0.047 & 0.072 & 0.294 \\
500 & 0.9 & 0.045 & 0.058 & 0.843 \\
\bottomrule
\end{tabular}
\label{tab:pme_accuracy}
\end{table}

%% file: sections/new_concrete.tex
\subsection{ A Real Dataset}
The UCI Concrete data \citep{yeh2007concretedata} contains 1,030 strength measurements and eight 
inputs  pertaining to concrete ingredients and age, and one response: compressive strength. In our analysis, we replace water by the
water-to-binder ratio given by
\(\mathrm{water}/(\mathrm{cement}+\mathrm{slag}+\mathrm{ash})\), which has an established engineering relevance to strength \citep{yeh1998concrete}.

We took 100 repeated samples
of 824 observations without replacement and estimated importance measures for
PRIME, PME-NA, FIRST-SHAP, and mean absolute SHAP following the same protocol mentioned in the previous section.
The results are shown in Figure \ref{fig:concrete-p8-methods}. Their computational times are also shown in the same figure.

We can see that PRIME identifies the water-to-binder ratio and age as the two most important variables. Cement is also given some importance which shows that the ratio is not enough to capture the effect of both cement and water. On the other hand, PME-NA identifies age as the most important variable and then cement. The water-to-binder ratio is given low importance most of the time, which contradicts the engineering knowledge \citep{yeh1998concrete}. FIRST-SHAP did a good job in this example, although it identifies fly ash to be more important than cement, which is suspicious. Mean absolute SHAP agrees with PRIME on the three most important variables, but gives nonzero important measures to all other inputs which may not be realistic.

\begin{figure}[H]
\centering
\includegraphics[width=0.85\textwidth]{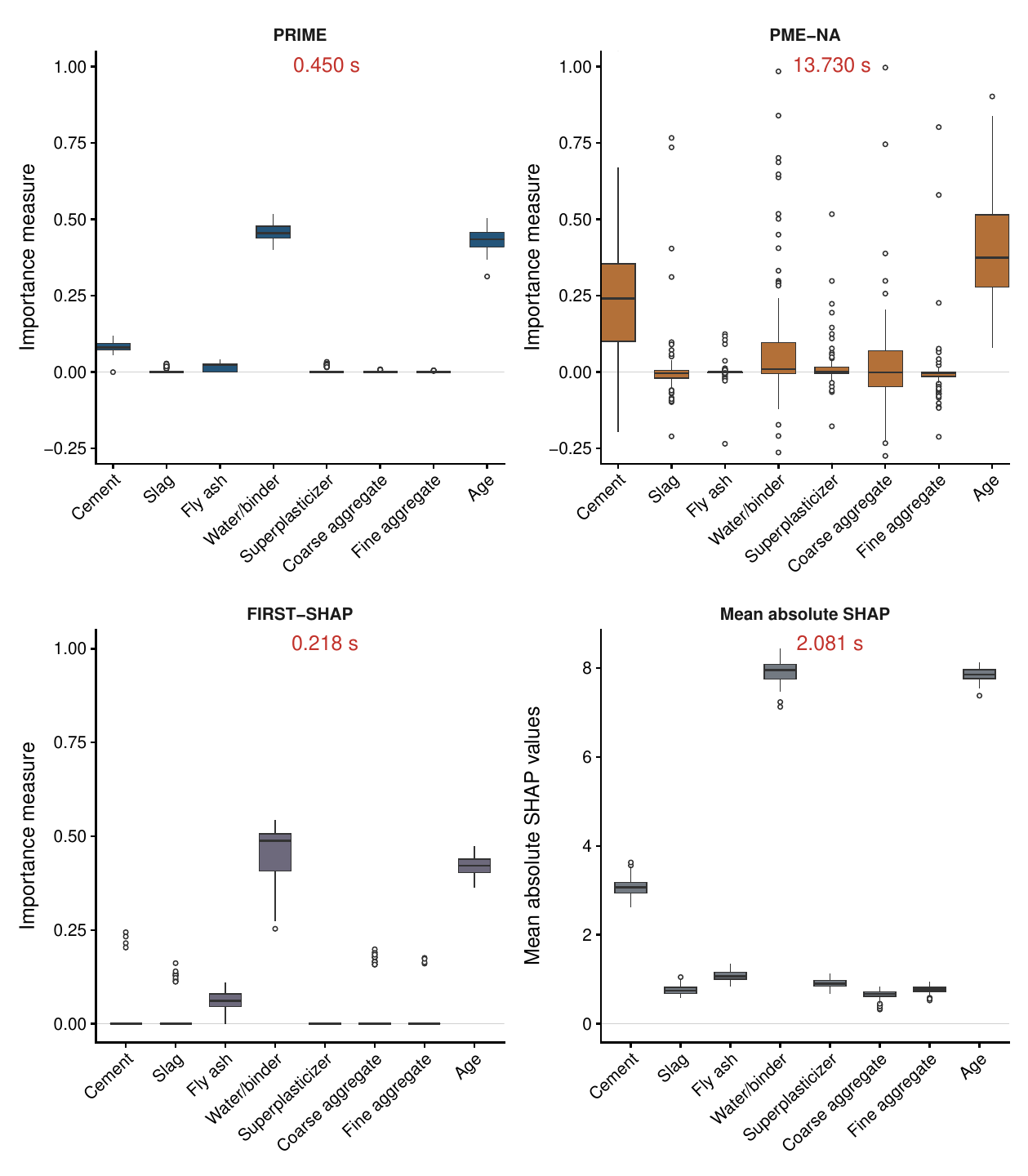}
\caption{Variable importance for Concrete. Mean absolute SHAP is in response units. Median computation times are shown in red. Twenty-nine estimates of PME-NA lie outside the displayed range.}
\label{fig:concrete-p8-methods}
\end{figure}


%% file: sections/07_conclusion.tex
Shapley effect is thought be the go-to measure for variable importance, but it has the drawback that inactive variables that are correlated with active variables can get a substantial share of the importance. Proportional marginal effects (PME) overcome this issue, which is in the same spirit of using PMVD instead of the LMG measure in linear regression.  However, PME has two limitations: it is developed for deterministic models and has high computational cost. This work is an attempt to overcome these two limitations. Under an additive error assumption, we have developed estimators that will remove the effect of noise in the estimation of PME. Moreover, we proactively use a novel model-free variable selection technique to identify the active set and include only them in the computation of PME, thereby drastically reducing the computational cost. The new PME measure is named as PRIME. An R package \texttt{prime} and Python codes  are available at \url{https://github.com/ChoYoungSeo/prime-importance-paper}.

Another major contribution of our work is in the estimation. Most of the previous works used nearest neighbor-based techniques for estimation of global sensitivity measures. We have replaced it with a local-linear regression-based estimate. We found this to be much more accurate, especially when the sample size is low. Our simulation studies show that PRIME is able to provide a reliable and meaningful measure for the importance of variables compared to the other methods in the literature.

This article deals only with continuous outputs and therefore cannot be used in classification problems. Extending PRIME to binary and categorical outputs is an important direction for future research.

 \vspace{.25in}
 \noindent  {\Large\bf Acknowledgments}

 \noindent This research is supported by  a U.S. National Science Foundation grant DMS-2310637.

 \vspace{.25in}
 \noindent  {\Large\bf \hyperlink{prime.supplement.start}{Supplementary Materials}}

 \noindent The file includes proofs, experimental protocols, and additional simulations.
 



%% file: sections/supplement_proofs.tex
We use the neighbor-count convention of Section~\ref{sec:first}.

\subsection{Proof of Proposition~\ref{prop:selected_set_equivalence}}

\begin{proof}
Fix \(C\in\mathcal S_{\mathrm{ms}}\).  Since \(P\) is finite, \(C\) contains
an inclusion-minimal mean-sufficient subset.  Uniqueness makes that subset
\(A\), so \(A\subseteq C\).  If \(B\subseteq C\setminus A\), then
\(A\subseteq C\setminus B\).  Mean sufficiency and monotonicity give
\[
  V(C\setminus B)=V(C)=V(A),
\]
and hence \(T_C(B)=0\).

Conversely, suppose \(T_C(B)=0\).  Then
\[
  V(C\setminus B)=V(C)=V(A)=V(P).
\]
Equation~\eqref{eq:mean_sufficiency_identity_theory} shows that
\(C\setminus B\) is
mean sufficient.  It contains an inclusion-minimal mean-sufficient subset.
Uniqueness makes that subset \(A\), so \(A\subseteq C\setminus B\) and
\(B\subseteq C\setminus A\).  Therefore
\[
  E=C\setminus A
\]
is the greatest null coalition of \(T_C\): it contains every subset with
zero total effect.

For every \(D\subseteq A\),
\begin{align*}
  T_C(E\cup D)
  &=
  \frac{V(C)-V\{C\setminus(E\cup D)\}}{V(C)}\\
  &=
  \frac{V(A)-V(A\setminus D)}{V(A)}
  =
  T_A(D).
\end{align*}
Theorem~1 of \citet{herin2024proportional} gives the canonical extended
proportional value.  Here the greatest null coalition \(E\) is also the
unique null coalition of maximum cardinality.  The formula assigns zero to every
coordinate in \(E\) and evaluates the remaining coordinates through the
shifted set function \(D\mapsto T_C(E\cup D)\) on \(C\setminus E=A\).  The
preceding identity shows that this set function is exactly \(T_A\), proving
the equality between the allocations on \(C\) and \(A\).  Taking \(C=P\)
identifies their common padded allocation with \(\psi^\star\), proving
\eqref{eq:mean_sufficient_invariance}.
\end{proof}

\paragraph{Nonunique minimal representations.}
When the family of mean-sufficient sets has several inclusion-minimal elements,
the full-coordinate PME is still defined through the null coalitions of maximum
cardinality in the canonical extension \citep{herin2024proportional}.  A selected minimal
representative need not preserve that full-coordinate allocation.  For example, two
almost-surely identical coordinates can each form a sufficient singleton; PME
on either singleton assigns all mass to the selected representative, whereas
the symmetric game on all coordinates divides the mass between them.  The selected-set
equivalence and subsequent selection theory therefore assume a unique minimal
set.

\subsection{Proof of Proposition~\ref{prop:noise_decomposition}}

\begin{proof}
For any \(S\subseteq P\),
\[
  Y-m_S(X_S)
  =
  M-m_S(X_S)+\varepsilon.
\]
The cross term vanishes because
\[
  \E\left[\{M-m_S(X_S)\}\varepsilon\right]
  =
  \E\left[
    \{M-m_S(X_S)\}\E(\varepsilon\mid X)
  \right]
  =0.
\]
Hence
\[
  \mathcal L_Y(S)
  =
  \E\{Y-m_S(X_S)\}^2
  =
  \mathcal L_M(S)+\sigma_\varepsilon^2,
\]
which proves \eqref{eq:noise_decomposition}.
Moreover,
\(
\Var(Y)=\Var(M)+\sigma_\varepsilon^2
\)
under \(\E(\varepsilon\mid X)=0\).  Equation~\eqref{eq:variance_decomposition}
gives \(V(S)=\Var(M)-\mathcal L_M(S)\); substituting the two preceding
identities proves \eqref{eq:noisy_value_identification}.
\end{proof}

\subsection{Conditions for the Local-Linear Results}

Fix a subset \(S\), write \(Z=X_S\in\mathbb R^d\), and let \(h_i\) be the
radius of the projected \(K_n(d)\)-nearest-neighbor set at point \(Z_i\).  If
\(U_i\) contains the centered neighbor displacements as rows, set
\[
  M_{K_n(d)}=I_{K_n(d)}-K_n(d)^{-1}\mathbf 1\mathbf 1^\top,
  \qquad
  G_i=U_i^\top M_{K_n(d)}U_i.
\]
The local-linear results use the following conditions.

\medskip
\noindent\textbf{(L1) Projected design.}
The support of \(Z\) is compact with Lipschitz boundary, and its density is
bounded above and away from zero on the support.

\smallskip
\noindent\textbf{(L2) Regression functions and moments.}
The function \(m_S\) has bounded continuous third derivatives,
\(v_S(z)=\Var(Y\mid Z=z)\) is Lipschitz and bounded above and away from
zero, and the conditional fourth moment of \(Y-m_S(Z)\) is uniformly bounded.

\smallskip
\noindent\textbf{(L3) Local design.}
Uniformly over interior points, the eigenvalues of
\(G_i/\{K_n(d)h_i^2\}\) are bounded above and away from zero.

\smallskip
\noindent\textbf{(L4) Neighborhood overlap.}
No observation belongs to more than \(B_n(S)\) projected neighborhoods, where
\(B_n(S)/n\to0\).

Conditions (L1)--(L3) are standard fixed-dimensional random-design conditions
for local polynomial regression \citep{fan1996local,ruppert2003semiparametric}.
Condition (L4) controls dependence among pointwise residual variances induced
by overlapping neighborhoods.

\subsection{Proof of Theorem~\ref{thm:local_linear_bias}}

\begin{proof}
Fix an interior point \(Z_i=z_i\), write
\[
  u_{ij}=Z_j-z_i,
\]
let \(U_i\) be the \(K\)-by-\(d\) matrix with rows \(u_{ij}^{\top}\), and set
\[
  M_K=I_K-K^{-1}\mathbf 1\mathbf 1^\top,
  \qquad
  G_i=U_i^\top M_KU_i.
\]
The local-linear hat matrix and residual operator are
\[
  \mathbf H_i
  =K^{-1}\mathbf 1\mathbf 1^\top
   +M_KU_iG_i^{-1}U_i^\top M_K,
  \qquad
  L_i=I_K-\mathbf H_i.
\]
Condition (L3) makes \(G_i\) nonsingular.  Thus \(\mathbf H_i\)
is the orthogonal projector onto the local affine design, so
\[
  L_i\mathbf 1=0,
  \qquad
  L_iU_i=0,
  \qquad
  L_i^2=L_i,
  \qquad
  \operatorname{tr}(L_i)=K-d-1.
\]
Taylor expansion gives
\[
  m_S(z_i+u_{ij})
  =
  m_S(z_i)
  +
  \nabla m_S(z_i)^\top u_{ij}
  +
  \frac12u_{ij}^\top H_{m_S}(z_i)u_{ij}
  +
  O(h_i^3).
\]
Let \(\xi_{S,j}=Y_j-m_S(Z_j)\), and let
\(\boldsymbol\xi_{S,i}\) collect these conditional regression residuals over
the \(K\) neighbors of observation \(i\).  Conditional on the predictor locations,
the local residual is
\[
  L_i(Y_j)_{j\in\mathcal N(i;S)}
  =
  L_i\{m_S(Z_j)\}_{j\in\mathcal N(i;S)}
  +
  L_i\boldsymbol\xi_{S,i}.
\]

The constant and affine terms are annihilated.  Since
\(\|L_i\|_{\mathrm{op}}=1\), the quadratic Taylor term contributes
\[
  O\{
    K h_i^4
    \|H_{m_S}(z_i)\|_F^2
  \}.
\]
The third-order remainder contributes \(O(Kh_i^6)\).  The cross product
between the quadratic and third-order terms is bounded by twice their
geometric mean.  Dividing by \(K-d-1\), using fixed \(d\) and \(K\to\infty\),
gives \(O\{h_i^4\|H_{m_S}(z_i)\|_F^2+h_i^6\}\).
The \(O(h_i^6)\) remainder remains in the conditional-mean contribution,
including when the Hessian vanishes; it is absorbed by the \(O(h_i)\) term
in the stated overall bound.

For the stochastic component, write
\(V_i=\operatorname{diag}\{v_S(Z_j):j\in\mathcal N(i;S)\}\).  Then
\[
  \E\{
    \boldsymbol\xi_{S,i}^\top
    L_i
    \boldsymbol\xi_{S,i}
    \mid Z_1,\ldots,Z_n
  \}
  =\operatorname{tr}(L_iV_i).
\]
Write \(V_i=v_S(Z_i)I_K+\Delta_i\).  Lipschitz continuity gives
\(\|\Delta_i\|_{\mathrm{op}}=O(h_i)\).  Because \(L_i\) is a positive
semidefinite projector,
\[
  |\operatorname{tr}(L_i\Delta_i)|
  \leq
  \|\Delta_i\|_{\mathrm{op}}\operatorname{tr}(L_i)
  =O\{h_i(K-d-1)\}.
\]
After division by \(K-d-1\), the stochastic expectation therefore equals
\(v_S(Z_i)+O(h_i)\).  Combining these bounds proves
\eqref{eq:local_linear_bias_theory}.

For a local-constant fit, the affine term is not projected out.  Its squared
local average is
\[
  O\{
    h_i^2\|\nabla m_S(z_i)\|_2^2
  \}.
\]
\end{proof}

\paragraph{Numerical slope nugget.}
The implementation adds the slope-only nugget
\(\lambda_{\mathrm{loc}}\) when solving the local normal equations.  Let
\(\mu_{\min,i}\) be the smallest eigenvalue of \(G_i\) and define
\[
  \omega_{\mathrm{loc},i}
  =\frac{\lambda_{\mathrm{loc}}}
  {\mu_{\min,i}+\lambda_{\mathrm{loc}}}.
\]
Replacing \(G_i^{-1}\) by
\((G_i+\lambda_{\mathrm{loc}}I_d)^{-1}\) leaves the intercept unpenalized and
gives
\[
  L_{\mathrm{loc},i}U_i\nabla m_S(z_i)
  =M_KU_i\lambda_{\mathrm{loc}}
  (G_i+\lambda_{\mathrm{loc}}I_d)^{-1}\nabla m_S(z_i),
\]
while
\[
  \operatorname{tr}(L_{\mathrm{loc},i}^2)
  =K-d-1+
  \sum_{\ell=1}^d
  \left\{\frac{\lambda_{\mathrm{loc}}}
  {\mu_{i\ell}+\lambda_{\mathrm{loc}}}\right\}^2.
\]
The same calculation therefore adds
\[
  O\{h_i^2\|\nabla m_S(Z_i)\|_2^2\omega_{\mathrm{loc},i}^2\}
  +O(d/K)
\]
to the bound in Theorem~\ref{thm:local_linear_bias}.  These terms vanish in
the fixed-subset consistency argument whenever \(h_i\to0\) and \(d/K\to0\).
For a fixed positive nugget, \(\omega_{\mathrm{loc},i}\) need not tend to zero:
the relevant comparison is with \(\mu_{\min,i}\), which has order
\(Kh_i^2\) under (L3).  Thus the ridge fit can retain an affine residual even
as the neighborhoods shrink.  The preceding bound and the proof below give
fixed-subset consistency for this fit, while the affine-removal bound in
Theorem~\ref{thm:local_linear_bias} concerns the unpenalized projection.

\subsection{Proof of Theorem~\ref{thm:subset_consistency}}

\begin{proof}
Write \(K=K_n(d)\), \(q=K-d-1\), and
\(\mathcal Z_n=(Z_1,\ldots,Z_n)\).  The argument also covers the numerical
slope nugget.  Let \(R_i\) denote either \(L_i\) or
\(L_{\mathrm{loc},i}\).  Both are symmetric contractions satisfying
\(R_i\mathbf 1=0\), and
\[
  \operatorname{tr}(R_i^2)=q+\delta_i,
  \qquad 0\leq\delta_i\leq d,
\]
with \(\delta_i=0\) for the unpenalized fit.  The projected density in (L1)
ensures full affine rank almost surely, including for boundary neighborhoods.

Compactness and (L2) make \(m_S\) bounded and uniformly continuous.  Let
\(a(t)\) be its modulus of continuity on the support, so \(a(t)\to0\) as
\(t\downarrow0\).  If \(m_i\) collects the neighbor conditional means, then
\[
  \|R_im_i\|_2^2
  =\|R_i\{m_i-m_S(Z_i)\mathbf 1\}\|_2^2
  \leq K a(h_i)^2.
\]
Writing \(V_i=v_S(Z_i)I_K+\Delta_i\) as above gives
\(\|\Delta_i\|_{\mathrm{op}}\leq C h_i\).  Hence, for either residual fit,
\[
  \left|\E\{g_i(S)\mid\mathcal Z_n\}-v_S(Z_i)\right|
  \leq\frac Kq\{a(h_i)^2+C h_i\}+\frac{C d}{q}.
\]
This bound applies at the boundary as well as in the interior.  Under (L1)
and \(K/n\to0\), the nearest-neighbor radius at a fixed sample point
converges to zero in probability: every fixed-radius ball about an interior
support point has positive probability, so its neighbor count eventually
exceeds \(K\) with probability tending to one.  Boundary points have
probability zero.  Empirical coordinate standardization preserves this
shrinkage because the sample coordinate variances converge to positive
limits under (L1).  Since the support is compact, the preceding bound is
uniformly bounded.  Dominated convergence and exchangeability therefore give
\[
  \frac1n\sum_{i=1}^n
  \left|\E\{g_i(S)\mid\mathcal Z_n\}-v_S(Z_i)\right|
  \xrightarrow{p}0.
\]

For the stochastic terms, put \(A_i=R_i^2/q\) and
\(T_i=g_i(S)-\E\{g_i(S)\mid\mathcal Z_n\}\).  Conditional independence of
the observation-level residuals gives
\[
  T_i=\boldsymbol\xi_{S,i}^{\top}A_i\boldsymbol\xi_{S,i}
      -\operatorname{tr}(A_iV_i)
      +2m_i^{\top}A_i\boldsymbol\xi_{S,i}.
\]
The conditional fourth-moment bound and Cauchy--Schwarz bound the variance
of this quadratic-plus-linear expression by
\[
  \Var(T_i\mid\mathcal Z_n)
  \leq C\{\|A_i\|_F^2+\|A_im_i\|_2^2\}
  \leq \frac{C K}{q^2}=O(K^{-1}),
\]
uniformly over all sample points.  Here \(\|R_i\|_{\mathrm{op}}\leq1\) and
\(\|m_i\|_2^2\leq C K\).  Cauchy--Schwarz controls the dependence from
overlapping neighborhoods directly:
\[
  \Var\left(\frac1n\sum_{i=1}^nT_i\,\middle|\,\mathcal Z_n\right)
  \leq
  \left\{\frac1n\sum_{i=1}^n
  \sqrt{\Var(T_i\mid\mathcal Z_n)}\right\}^2
  =O(K^{-1})=o(1).
\]
Combining these bounds with the law of large numbers for \(v_S(Z_i)\) yields
\[
  \widehat{\mathcal L}_{\mathrm{LL}}(S)
  \xrightarrow{p}
  \E\{v_S(Z)\}=\mathcal L_Y(S).
\]
\end{proof}

\subsection{Proof of Proposition~\ref{prop:forward_coverage}}

\begin{proof}
If \(A\subseteq S_{\mathrm R}^\star\),
\eqref{eq:ridge_seed_stability} proves the result directly.  Consider the
remaining case.
Let
\[
  \epsilon_n
  =
  \max_{(S,j)\in\mathcal E_n}
  |\widehat\Delta_j(S)-\Delta_j(S)|.
\]
By \eqref{eq:query_contrast_scale},
\[
  \widehat\omega^2(S,S\cup\{j\})
  \leq
  \frac{1}{n-1}
  \left\{
    \frac1n\sum_{i=1}^n
    H_i^2(S,S\cup\{j\})
  \right\}.
\]
Equation~\eqref{eq:forward_scale_condition} therefore gives
\[
  \max_{(S,j)\in\mathcal E_n}
  u_{|S|+1}
  \widehat\omega(S,S\cup\{j\})=o_p(1).
\]
On the event
\[
  2\epsilon_n<\eta
  \quad\text{and}\quad
  \epsilon_n+
  \max_{(S,j)\in\mathcal E_n}
  u_{|S|+1}
  \widehat\omega(S,S\cup\{j\})<\delta,
\]
intersect with \(\widehat S_{\mathrm R}=S_{\mathrm R}^\star\).  Both events
have probability tending to one.  Suppose the current empirical set
\(C\in\mathcal C_A\) does not yet contain \(A\), and let \(j_A\) maximize
\(\Delta_j(C)\) over \(j\in A\setminus C\).  For every
\(k\in P\setminus(A\cup C)\), Assumption~\ref{ass:forward_margin} gives
\begin{align*}
  \widehat\Delta_{j_A}(C)
  &\geq \Delta_{j_A}(C)-\epsilon_n\\
  &\geq \Delta_k(C)+\eta-\epsilon_n\\
  &>\widehat\Delta_k(C).
\end{align*}
Thus every empirical maximizer lies in \(A\setminus C\).  Its raw gain is at
least \(\delta-\epsilon_n\), so its calibrated gain is positive on the same
event.  The next set again belongs to \(\mathcal C_A\), unless it already
contains \(A\).  Induction adds one previously unselected active coordinate at
each step.  Since \(|S_{\mathrm R}^\star\cup A|\leq d_{\max,n}\), the path
covers \(A\) before reaching the dimension limit.
\end{proof}

\subsection{Proof of Proposition~\ref{prop:active_retention}}

\begin{proof}
By \eqref{eq:query_contrast_scale},
\[
  \widehat\omega^2(S\setminus\{j\},S)
  \leq
  \frac{1}{n-1}
  \left\{
    \frac1n\sum_{i=1}^n
    H_i^2(S\setminus\{j\},S)
  \right\}.
\]
Since \(u_{|S|}\leq\sqrt{2\log\{\max(2,M_n)\}}\),
\eqref{eq:vanishing_query_threshold} gives
\[
  \max_{S,j}
  u_{|S|}
  \widehat\omega(S\setminus\{j\},S)=o_p(1).
\]
For \(j\in A\), the deletion gap is at least \(\Delta\).  Uniform consistency
gives, uniformly over the visited active-containing sets,
\[
  \widehat D_j(S)\geq\Delta+o_p(1).
\]
Equation~\eqref{eq:vanishing_query_threshold} therefore yields
\[
  \widetilde D_j(S)
  \geq
  \Delta+o_p(1)>0
\]
uniformly for active coordinates.  The backward rule removes only a
coordinate with zero calibrated loss, so no active coordinate is removed with
probability tending to one.  Induction over the sequentially recomputed path
proves the result.
\end{proof}

\subsection{Continuity at Games with a Greatest Null Coalition}

For a finite nonempty player set \(D\), let \(\mathcal M_D\) be the
normalized monotone class in \eqref{eq:normalized_monotone_class}.
The norm \(\|s-t\|_\infty\) is the maximum absolute difference over
subsets of \(D\).  We first establish the boundary property used in both
allocation results.

\begin{lemma}[Local and uniform continuity of the canonical extension]
\label{lem:greatest_null_continuity}
If \(t\in\mathcal M_D\) has a greatest null coalition \(E\), then
\(s\mapsto\PV^0(D,s)\) is continuous at \(t\) relative to
\(\mathcal M_D\).
For \(0<c\leq1\), let \(\mathcal K_{D,c}\) consist of the games in
\(\mathcal M_D\) for which some \(E\subsetneq D\) satisfies
\[
  t(B)=0\quad(B\subseteq E),
  \qquad
  t(B)\geq c\quad(B\nsubseteq E).
\]
For every fixed integer \(q\), there is a nondecreasing function
\(\omega_{q,c}(r)\to0\) as \(r\downarrow0\) such that, whenever
\(1\leq|D|\leq q\), \(t\in\mathcal K_{D,c}\), and
\(s\in\mathcal M_D\),
\[
  \|\PV^0(D,s)-\PV^0(D,t)\|_\infty
  \leq\omega_{q,c}(\|s-t\|_\infty).
\]
\end{lemma}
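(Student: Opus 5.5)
Fix \(t\in\mathcal M_D\) with greatest null coalition \(E\), and set \(c_t=\min\{t(B):B\nsubseteq E\}>0\). Because \(E\) is greatest, every \(B\nsubseteq E\) has \(t(B)>0\), so this minimum is positive and \(t\in\mathcal K_{D,c_t}\). The plan has three steps. First, express \(\PV^0\) near \(t\) through the \(\epsilon\)-regularized game. Second, prove a single estimate that is uniform over \(\mathcal K_{D,c}\) for \(|D|\le q\). Third, read off local continuity as the special case \(c=c_t\).

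\textbf{Step 1: the \(\epsilon\)-limit as a ratio potential.} Take \(s\in\mathcal M_D\) with \(\|s-t\|_\infty=r<c/2\). Then \(s(B)>c/2\) for \(B\nsubseteq E\), while \(0\le s(B)\le r\) for \(B\subseteq E\). Nothing forces \(s\) to vanish on \(E\), so \(s\) may have no greatest null coalition, or a smaller one. I will therefore work directly with the defining limit \(\PV^0(D,s)=\lim_{\epsilon\downarrow0}\PV(D,s^{(\epsilon)})\). I will write \(\PV\) through the order-weight formula \eqref{eq:pv_order_weight}, equivalently through the ratio potential \eqref{eq:ratio_potential}--\eqref{eq:proportional_value}.

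\textbf{Step 2: splitting the orderings.} Under the weights \(p_s(\pi)\), the products \(\prod_\ell s\{\operatorname{Pre}_\ell(\pi)\cup\{\ell\}\}\) split into two kinds of factors. Factors indexed by sets inside \(E\) are small, at most \(r\) or equal to \(\epsilon\). All other factors are at least \(c/2\). Orderings that list all of \(E\) first have the maximal number \(|E|\) of small factors. Any other ordering has fewer small factors: each prefix lying in \(E\) is an initial segment of the ordering, and once the prefix leaves \(E\) it never returns. Two quantities need to be handled.

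The first is the relative mass of orderings that do not start with \(E\). This ratio is a product that compares the small factors with factors bounded below by \(c/2\). I will bound it using only \(r\), \(c\) and \(q\). The second is the remaining part, the orderings that start with \(E\). On these, the marginal contributions of players outside \(E\) are governed by the shifted game \(B\mapsto s(E\cup B)\). This shifted game is within \(r\) of \(t_E\), and \(t_E\) is strictly positive with values at least \(c\). The ordinary proportional value \(\PV(D\setminus E,\cdot)\) is a rational function with no poles on games bounded below by \(c/2\), and there are finitely many coalitions because \(|D|\le q\). So it is uniformly Lipschitz there, with a constant depending only on \(q\) and \(c\). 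The marginal contributions of players in \(E\) are at most \(r\). Combined with \eqref{eq:continuous_pv_unique_null}, this gives \(|\PV^0_i(D,s)-\PV^0_i(D,t)|\le\omega_{q,c}(r)\). Here \(\omega_{q,c}\) is obtained by maximizing the bound over the finitely many player-set sizes and over the choices of \(E\). For \(r\ge c/2\), the trivial bound of \(1\) on the allocations, which are probability vectors, can be used.

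\textbf{Main obstacle.} The hard part is controlling the first quantity, the weight of orderings that do not start with \(E\), uniformly in \(\epsilon\) and in the arbitrary small values \(s|_{2^E}\). The issue is that inside \(E\), ratios such as \(\epsilon/s(B)\) need not be small. My first approach is not to compare orderings one by one. Instead I will use the ratio potential: show that \(R(B,s^{(\epsilon)})\) for \(B\supsetneq E\) is comparable to \(R(E,s^{(\epsilon)})\) times a factor determined by values at least \(c/2\). This comes from the recursion, since \(R(B)^{-1}\) sums over removals, and the removals keeping \(E\) intact dominate up to an \(O(r/c)\) correction. The ratios \(R(D)/R(D\setminus\{i\})\) then reduce to potentials of the shifted game plus \(O_{q,c}(r)\) errors. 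If that estimate only holds along subsequences, I will use compactness of \(\mathcal M_D\) to obtain pointwise continuity first. Uniformity would then follow by a contradiction argument over the compact set \(\mathcal K_{D,c}\), which is closed because the conditions \(t(B)=0\) and \(t(B)\ge c\) are preserved under limits and there are finitely many choices of \(E\).
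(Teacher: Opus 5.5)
Your architecture is the same as the paper's. You separate the good orderings (all of $E$ before $D\setminus E$) from the bad ones and show the bad ones carry vanishing weight. On the good ones you identify the conditional allocation with $\PV(D\setminus E,h_s)$, where $h_s(B)=s(E\cup B)$, padded by zeros. The error there is $O(r)$: it comes from the $E$-players' total contribution $s(E)\leq r$ and the extra subtraction of $s(E)$ from the first non-$E$ player. You should also state that the order weight factors over good orderings, so that summing out the $E$-part leaves exactly the proportional-order weights of $h_s$.

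\textbf{The gap.} The estimate that carries the whole lemma is never proved: a bound on the weight of the bad orderings, uniform in $\epsilon$ and in $s|_{2^E}$. Step 2 asserts it. Your ``main obstacle'' paragraph concedes it is open, rejects ordering-by-ordering comparison, and substitutes an unproved domination claim for the ratio potential. The fallback does not close this. Compactness can upgrade a local estimate to a uniform one, but it cannot produce the domination estimate itself.

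\textbf{The missing idea.} It is a specific pairing, and it removes your worry about ratios such as $\epsilon/s(B)$ inside $E$. Write $W(\pi)=\prod_k s^{(\epsilon)}\{C_k(\pi)\}^{-1}$, where $C_k(\pi)$ is the set of the first $k$ players. For a bad $\pi$, let $\pi^\circ$ move the $E$ players to the front while preserving the relative order within $E$ and within $D\setminus E$. Let $k$ be the position of the first non-$E$ player in $\pi$; then $k\leq|E|$.
\begin{itemize}
\item The prefixes $C_1,\ldots,C_{k-1}$ of $\pi$ and $\pi^\circ$ coincide. Those factors cancel exactly, so no two small values are ever compared.
\item At position $k$, the prefix of $\pi^\circ$ lies in $E$, with value at most $\max(r,\epsilon)$. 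The prefix of $\pi$ contains a non-$E$ player, with value at least $c/2$.
\item Every later prefix of $\pi$ also contains a non-$E$ player, so each remaining ratio $s^{(\epsilon)}\{C_j(\pi^\circ)\}/s^{(\epsilon)}\{C_j(\pi)\}$ is at most $2/c$.
\end{itemize}
Hence $W(\pi)\leq\max(r,\epsilon)(2/c)^dW(\pi^\circ)$. Since each good ordering has at most $d!$ preimages, the bad mass is at most $d!\max(r,\epsilon)(2/c)^d$. The small values enter only through upper bounds.

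Your potential route rests on the same mechanism. Iterating $R(B\setminus\{i\},s^{(\epsilon)})^{-1}\leq s^{(\epsilon)}(B)R(B,s^{(\epsilon)})^{-1}$ along a chain from $F\subseteq E$ up to $E$ gives $R(F,s^{(\epsilon)})^{-1}\leq\max(r,\epsilon)^{|E\setminus F|}R(E,s^{(\epsilon)})^{-1}$.

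With this bound in place, your remaining steps go through: the $E$-contribution, the Lipschitz continuity of the ordinary proportional value on games bounded below by $c/2$, and $\epsilon\downarrow0$. They even give an explicit modulus, linear in $r$ for $r<c/2$, with constants depending only on $q$ and $c$. The compactness-by-contradiction argument that the paper uses for uniformity then becomes optional.
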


\begin{proof}
Write \(d=|D|\) and \(A=D\setminus E\).  Since \(t(D)=1\),
\(A\neq\varnothing\).  The greatest-null property and finiteness imply
\[
  c_t:=\min_{B\nsubseteq E}t(B)>0.
\]
If \(E=\varnothing\), all nonempty values of \(t\) are positive, and
continuity follows directly from the finite products and normalization in
\eqref{eq:pv_order_weight}.  Suppose \(E\neq\varnothing\), and first let
\(s\in\mathcal M_D\) be positive on every nonempty subset, with
\(r=\|s-t\|_\infty<c_t/2\).  Then
\[
  s(B)\leq r\quad(B\subseteq E),
  \qquad
  c_t/2\leq s(B)\leq1\quad(B\nsubseteq E).
\]
For a permutation \(\pi\), denote its first \(k\) players by
\(C_k(\pi)\) and its unnormalized weight by
\[
  W_s(\pi)=\prod_{k=1}^d s\{C_k(\pi)\}^{-1}.
\]
Call an ordering good if all players in \(E\) precede all players in
\(A\).  For a bad ordering \(\pi\), form the good ordering
\(\pi^\circ\) by moving its \(E\) players first while preserving the
relative order within each of \(E\) and \(A\).  In
\[
  \frac{W_s(\pi)}{W_s(\pi^\circ)}
  =\prod_{k=1}^d
    \frac{s\{C_k(\pi^\circ)\}}{s\{C_k(\pi)\}},
\]
the factors before the first \(A\) player in \(\pi\) equal one.
That first player occurs by position \(|E|\), so its factor is at most
\(2r/c_t\).  Every subsequent denominator corresponds to a set containing
an \(A\) player and is at least \(c_t/2\), while its numerator is at most
one.  Therefore
\[
  W_s(\pi)\leq r(2/c_t)^d W_s(\pi^\circ).
\]
Each good ordering has at most \(d!\) preimages under this map.  Hence the
probability \(\alpha_s\) of a bad ordering under the normalized weights
\(p_s\) satisfies
\begin{equation}
  \alpha_s\leq d!r(2/c_t)^d.
  \label{eq:bad_order_weight_bound}
\end{equation}

Conditional on a good ordering, the weight factors into the product for the
ordering within \(E\) and the product for the ordering within \(A\).
Summing out the former gives precisely the proportional-order weights for
the positive normalized game
\[
  h_s(\varnothing)=0,
  \qquad h_s(B)=s(E\cup B),\quad\varnothing\neq B\subseteq A.
\]
The total marginal contribution of the \(E\) players in a good ordering
is \(s(E)\leq r\).  For an \(A\) player, its marginal contribution agrees
with that in \(h_s\), except that the first \(A\) player has the additional
subtraction \(s(E)\).  Thus the conditional expected allocation differs
coordinatewise by at most \(r\) from \(\PV(A,h_s)\), padded with zeros
on \(E\).  Every marginal contribution is in \([0,1]\), so including
the bad orderings increases this bound by at most \(\alpha_s\).  It follows
that
\[
  \|\PV(D,s)-\operatorname{pad}_E\PV(A,h_s)\|_\infty
  \leq r+\alpha_s\longrightarrow0.
\]
As \(s\to t\), the positive game \(h_s\) converges to \(t_E\).
Its nonempty values are bounded below by \(c_t/2\), so ordinary proportional
values converge by \eqref{eq:pv_order_weight}.  The canonical formula
\eqref{eq:continuous_pv_unique_null} identifies
\(\operatorname{pad}_E\PV(A,t_E)\) with \(\PV^0(D,t)\).
This proves convergence for positive perturbations.

For an arbitrary sequence \(s_n\in\mathcal M_D\) converging to \(t\),
use the defining equal-zero regularization of \(s_n\).  For each \(n\),
choose a sufficiently small positive regularization \(s_n^+\) so that
\[
  \|s_n^+-s_n\|_\infty<1/n,
  \qquad
  \|\PV(D,s_n^+)-\PV^0(D,s_n)\|_\infty<1/n.
\]
Such a choice exists by the canonical-extension definition; the endpoint
value one remains unchanged.  Since \(s_n^+\to t\), the positive-case
result and the triangle inequality prove local continuity for all
nonnegative monotone perturbations.

For uniformity, relabel \(D\) as \(\{1,\ldots,d\}\), which preserves
the allocation rule and the norms.  For a fixed \(d\) and \(E\), the
conditions defining \(\mathcal K_{D,c}\) are closed conditions in
\([0,1]^{2^d}\).  Their finite union over \(E\subsetneq D\) is therefore
compact.  If the stated uniform conclusion failed, there would be
\(d_n\leq q\), \(t_n\in\mathcal K_{D_n,c}\), and
\(s_n\in\mathcal M_{D_n}\) with
\(\|s_n-t_n\|_\infty\to0\), but allocation differences bounded away from
zero.  Pass to a subsequence with a fixed dimension and a fixed greatest
null set.  Compactness gives a further subsequence \(t_n\to t\) retaining
the same null set and non-null lower bound \(c\); also \(s_n\to t\).
Local continuity at \(t\) makes both allocations converge to
\(\PV^0(D,t)\), a contradiction.  Taking the supremum of the allocation
difference over games within distance \(r\), and the finitely many
dimensions \(d\leq q\), defines the required modulus
\(\omega_{q,c}\).
\end{proof}

The canonical extension is not continuous on the entire nonnegative domain.
For a two-player normalized game with singleton values \(a,b>0\),
\(\PV=(a/(a+b),b/(a+b))\).  At \(a=b=0\), the canonical equal-zero
regularization gives \((1/2,1/2)\), while the path
\((a,b)=(\epsilon,\epsilon^2)\) converges to \((1,0)\).
This distinguishes the equal-zero limiting construction in
\citet[Theorem~1 and Appendix~B]{herin2024proportional} from continuity
under arbitrary perturbations.  The latter is supplied here by
Lemma~\ref{lem:greatest_null_continuity} at games with a greatest null coalition.

\subsection{Proof of Proposition~\ref{prop:pme_stability}}

\begin{proof}
Write
\(e_n=\max_{U\subseteq C}|\widehat{\mathcal L}_Y(U)-\mathcal L_Y(U)|\).
Since the empty-subset risk is the response variance, the positive-part
map is 1-Lipschitz, and \(V(C)>0\),
\[
  |\widehat V_C-V(C)|\leq2e_n,
  \qquad \Pr\{\widehat V_C>0\}\longrightarrow1.
\]
For a proper subset \(B\subset C\), let
\(r_C(B)=\mathcal L_Y(C\setminus B)-\mathcal L_Y(C)\).
Then \(0\leq r_C(B)\leq V(C)\), and its positive-part estimate differs
from it by at most \(2e_n\).  On \(e_n<V(C)/4\), division by
\(\widehat V_C\geq V(C)/2\) gives
\[
  \|\widehat T_C-T_C\|_\infty\leq\frac{8e_n}{V(C)}.
\]
The two endpoints are exact.  Fixed dimension therefore gives
\(\|\widehat T_C-T_C\|_2\to_p0\).
Since \(T_C\in\mathcal M_C\) and Euclidean projection onto a closed
convex set is nonexpansive,
\[
  \|\Pi_C(\widehat T_C)-T_C\|_2
  \leq\|\widehat T_C-T_C\|_2\xrightarrow{p}0.
\]
Apply Lemma~\ref{lem:greatest_null_continuity} at the fixed game \(T_C\)
to obtain allocation convergence.  All these conclusions hold on the
available-allocation events, whose probability tends to one.
\end{proof}

\subsection{Proof of Theorem~\ref{thm:end_to_end}}

\begin{proof}
Propositions~\ref{prop:forward_coverage} and \ref{prop:active_retention}
give \(\Pr(\mathcal B_n)\to1\), where
\(\mathcal B_n=\{A_n\subseteq\widehat A\}\).
Let \(v_0=\inf_n V(A_n)>0\) and \(v_Y=\Var(Y)<\infty\).
The fixed marginal law of \(Y\) supplies a common upper bound:
\(V(A_n)\leq v_Y\) for every \(n\).
On \(\mathcal B_n\), write \(C=\widehat A\).  Proposition
\ref{prop:selected_set_equivalence} gives \(V(C)=V(A_n)\), greatest
null coalition \(E=C\setminus A_n\), and equality of the zero-padded
population allocation with \(\psi^\star\).
For any \(B\nsubseteq E\), choose \(j\in B\cap A_n\).  Monotonicity
and the active-deletion margin in Proposition~\ref{prop:active_retention}
give
\begin{equation}
  T_C(B)\geq T_C(\{j\})
  =\frac{\mathcal L_Y(C\setminus\{j\})-\mathcal L_Y(C)}{V(C)}
  \geq\frac{\Delta}{v_Y}=:c>0.
  \label{eq:selected_game_separation}
\end{equation}
Thus \(T_C\in\mathcal K_{C,c}\) uniformly over the covered selected
sets.  In particular, \(T_{A_n}(D)=T_C(E\cup D)\geq c\) for every
nonempty \(D\subseteq A_n\); a separate shifted-game separation
assumption is redundant.

Set
\[
  e_n=\max_{U\subseteq\widehat A}
  |\widehat{\mathcal L}_Y(U)-\mathcal L_Y(U)|=o_p(1).
\]
On \(\mathcal B_n\cap\{e_n<v_0/4\}\), the same positive-part and
ratio bounds as in the preceding proof give
\[
  \widehat V_C\geq v_0/2,
  \qquad
  \|\widehat T_C-T_C\|_\infty\leq8e_n/v_0.
\]
Since \(V(A_n)>0\), \(A_n\neq\varnothing\).  This event therefore
implies \(C\neq\varnothing\) and \(\mathcal G_n\), proving
\(\Pr(\mathcal G_n)\to1\).  For \(|C|\leq q\), nonexpansiveness and
the finite-dimensional norm inequalities yield
\[
  \|\Pi_C(\widehat T_C)-T_C\|_\infty
  \leq\|\Pi_C(\widehat T_C)-T_C\|_2
  \leq\|\widehat T_C-T_C\|_2
  \leq 2^{q/2}\frac{8e_n}{v_0}.
\]
The uniform part of Lemma~\ref{lem:greatest_null_continuity}, with the
common \(q\) and \(c\) in \eqref{eq:selected_game_separation}, now gives
\[
  \|\PV^0\{C,\Pi_C(\widehat T_C)\}-\PV^0(C,T_C)\|_\infty
  \leq\omega_{q,c}(2^{q/2}8e_n/v_0)\xrightarrow{p}0
\]
on events whose probability tends to one.  On \(\mathcal B_n\),
zero padding preserves this maximum error because the full-coordinate population target \(\psi^\star\) is
zero outside \(C\).  Hence, for every \(\epsilon>0\),
\[
  \Pr\!\left(\mathcal G_n\cap
    \left\{\max_{j\in P}|\widehat\psi_j-\psi_j^\star|>\epsilon\right\}
  \right)\longrightarrow0.
\]
Divide by \(\Pr(\mathcal G_n)\to1\) to obtain
\eqref{eq:end_to_end}.
\end{proof}

%% file: sections/supplement_experiments.tex
\subsection{Common implementation}

The implementation requires \(n\geq10\) and \(p\geq1\).
For the primary comparisons, each predictor is centered and divided by its
sample standard deviation within the current dataset. PRIME, PME-kNN, and
FIRST-SHAP receive the identical standardized matrix, with further internal
rescaling disabled. SHAP uses the same observations without standardization.
The local neighborhood size is
\[
  K_n(d)=\left\lceil\max\{1.5\sqrt n,3d\}\right\rceil,
\]
subject to \(K_n(d)\leq\lfloor n/2\rfloor\).  The numerical regularizers are
\(\lambda_{\mathrm{loc}}=\lambda_{\mathrm{init}}=10^{-6}\), applied to the
local slopes and global ridge fit, respectively.  The local-linear theory uses
the unpenalized affine projection.
The global ridge ordering defines a nested path through
\[
  r_p=\min\{\lceil1.5\log\{\max(2,p)\}\rceil,d_{\max,n}\}
\]
coordinates.  BIC selects the seed size along this path, including the
intercept-only choice \(r=0\).  The local path starts from the resulting ridge
set and continues while its leading calibrated gain is positive.
Backward elimination uses
\[
  \widehat\Gamma^{\mathrm{cal}}_{|C|}
  (C\setminus\{j\},C)
\]
with the common neighbor count \(K=K_n(|C|)\) for both fits.  After every
removal, it recomputes the projected neighborhoods and paired pointwise
deletion losses on the remaining set.
For the stated neighborhood rule,
\(d_{\max,n}=\min\{p,\lfloor n/6\rfloor\}\), which ensures
\(3d\leq\lfloor n/2\rfloor\).
The default calculation uses every observation as a point at which to fit a local model.  Exact
projected nearest neighbors are used throughout.

All methods receive the same generated data within a replication.  A task
seed is a deterministic function of the response mechanism, dependence
design, sample size, ambient dimension, and replication. The additional
experiments run datasets in parallel with one computational thread per task.
Timings were
recorded under R 4.6.1 on the aarch64 macOS platform with native and BLAS
thread counts fixed at one.  Raw selected sets, coordinate allocations,
subset-value counts, timing components, and errors are stored before summaries
are formed.

The displayed selector-to-PME comparison evaluates exact proportional
allocation when the selected set is nonempty, has at most 12 coordinates,
and has retained explained variance exceeding
\(\sqrt{\epsilon_{\mathrm{mach}}}\,\widehat{\Var}(Y)\);
remaining runs are recorded as incomplete.  After monotone
projection, values at most
\(\sqrt{\epsilon_{\mathrm{mach}}}\) are treated as numerical zeros in
the extended proportional-value recursion.
In the implementation used for these experiments, final effects with absolute value at most
\(\sqrt{\epsilon_{\mathrm{mach}}}\) are also treated as numerical zeros. These fixed numerical
tolerances approximate the exact operations used in the consistency statements.
Half-\(L_1\) loss,
\[
  \frac12\sum_{j=1}^p|\widehat\psi_j-\psi_j^\star|,
\]
is the primary allocation endpoint. For nonnegative normalized allocations,
it lies in \([0,1]\) and equals their total variation distance. The normalized estimator ablations use this loss; native FIRST-SHAP distributions are shown
coordinatewise without assigning an ambient-Shapley estimation error.
Loss summaries omit incomplete allocations; completion counts retain every
attempt.  Reported subset-value counts include fixed endpoint values.

\subsection{Friedman and primary observed-data comparisons}
The main Friedman comparison uses five independent uniform active inputs,
five paired Gaussian-copula proxies, and independent $N(0,1)$ noise.
There are 30 datasets for each of the nine $(n,\rho)$ conditions. The unit-signal
regression-mean PME reference is evaluated by deterministic quadrature of the active-input
functional ANOVA followed by the proportional-value recursion; it is zero
on all proxies. The reference used by these new comparisons is separate from
the Monte Carlo reference files of the earlier additional experiments.

PRIME retains both statistical buffers and its neighborhood and ridge settings.
PME-kNN uses the unmodified \texttt{pme\_knn} implementation from
\texttt{sensitivity} 1.27.1, the version specified in the authors' public code.
We use six neighbors with
\texttt{rescale=FALSE}, \texttt{marg=TRUE}, \texttt{tol=NULL}, and
\texttt{n.limit=2001}. Table~\ref{tab:pme_accuracy} uses
PME-kNN with \texttt{noise=TRUE} (PME-NA).
Figures~\ref{fig:friedman-methods} and~\ref{fig:concrete-p8-methods}
compare PRIME, PME-NA, FIRST-SHAP, and mean absolute SHAP. The supplementary comparisons also
include \texttt{noise=FALSE}. The PRIME, PME-kNN, and FIRST-SHAP procedures receive these standardized predictors
and return the allocations used in the comparisons. External paired
replications provide the sampling design; each call computes one point estimate.
Mean absolute SHAP uses XGBoost 2.1.4 and SHAP 0.46.0. The candidate models
have 200, 500, or 1,000 trees with maximum depth 2 or 4. We fix the learning
rate at 0.05, the squared-error objective, \texttt{reg\_lambda=1},
\texttt{min\_child\_weight=1}, and \texttt{tree\_method=hist}, with no row
or column subsampling. Three-fold cross-validation uses mean validation
\(R^2\) and the one-standard-error rule, preferring fewer trees and then
shallower depth. The selected model is refitted to all observations, and
exact tree-path-dependent SHAP values are averaged in absolute value over
the same observations.

For the Friedman design, let \(T_P\) be the normalized regression-mean
total-effect function in \eqref{eq:selected_total_effect} on the full
coordinate set. For nonempty \(B\subseteq P\), the two noise conventions have
population games
\[
 T_P^{\mathrm{FALSE}}(B)=(1-\gamma)+\gamma T_P(B),
 \qquad T_P^{\mathrm{TRUE}}(B)=\gamma T_P(B),
\]
with both games zero at the empty set. The FALSE convention therefore retains
a positive noise contribution and has positive population proxy allocations.
Diamonds in Figure~\ref{fig:friedman-methods} mark unit-signal PME for PRIME
and the proportional value of \(T_P^{\mathrm{TRUE}}\) for PME-NA.
For Table~\ref{tab:pme_accuracy}, let \(\psi\) be unit-signal population
PME and let \(c_m\psi\) be method \(m\)'s native population target. We evaluate
\[
 E_m=\frac{1}{2c_m}\sum_{j=1}^{10}
       |\widehat\phi_{m,j}-c_m\psi_j|.
\]
Here \(c_m=1\) for PRIME and PRIME-Full, and
\(c_m=\gamma=\Var\{m(X)\}/\Var(Y)=0.9597204\) for PME-kNN
with \texttt{noise=TRUE}.
The population scaling in the loss leaves the native estimates unchanged;
it retains their total-allocation error. Signed estimates can give \(E_m>1\).
The controlled estimator ablations below use the common
noise-adjusted projected game. The timing labels for PRIME, PME-NA, and FIRST-SHAP in
Figure~\ref{fig:friedman-methods} summarize a separate serial run over the same
30 datasets, with one native thread, rotating method order, and input loading
outside the timed interval. The 120 timed calls cover PRIME, FIRST-SHAP,
and both PME-kNN settings; one warm-up per procedure is excluded.
Each of these intervals includes standardization and the complete method call. Friedman SHAP timings are measured separately and include model selection, fitting, and explanation. All four methods were timed serially on the same 100 Concrete samples using a single thread.
FIRST-SHAP uses its native selection step, two neighbors, two forward passes,
and local fits at all observations, with internal rescaling disabled in both
stages. It computes Shapley effects on its retained
set and pads the remaining coordinates by zero. Signed estimates are retained.
All 810 evaluations of the three complete procedures (PRIME, PME-kNN
with \texttt{noise=TRUE}, and FIRST-SHAP) across the nine Friedman conditions
are available. The 30 additional \texttt{noise=FALSE} fits use the
\(n=100,\rho=0.9\) datasets displayed in Figure~\ref{fig:friedman-methods}. The allocation-rule comparison applies PME and
Shapley to identical full-variable monotone-projected local-linear subset
estimates, changing only the allocation rule.
The Full LC-\(K_n\) ablation uses the same ten-coordinate ground set and, for
each nonempty subset \(U\), the same \(K_n(|U|)\) nearest other observations as
the local-linear estimator.  It replaces the affine fit by the local sample
variance with divisor \(K_n(|U|)-1\), uses the full-set risk as the noise floor,
and applies the same positive-part normalization and monotone projection.
Thus Full LL-PME and Full LC-\(K_n\) differ only in local fit order. Both are
evaluated on all 270 Friedman datasets.
Full-variable Shapley population references are calculated by deterministic
Gaussian quadrature and independently checked by conditional integration.
They use the same normalized signal-variance target as the PME reference.

\ifConcreteMain
  \input{sections/supplement_concrete_case}
\else
  \input{sections/supplement_auto_mpg_case}
\fi

\subsection{Population references}

For the additional Branin, Friedman, OTL, piston, and borehole experiments, population closed
Sobol' effects are approximated by conditional Gaussian-copula pick--freeze samples
of size 100,000.  Ackley--3 uses 500,000 reference draws.  The reference
files store every subset value, the random seed, and half-sample stability
checks.  Bratley references are
obtained from exact moments of its nested products, and the quadratic-function reference is
analytic.  Each closed-effect set function is converted to total-effect form
before applying the extended proportional value.

In the paired-proxy designs, each proxy is correlated with one active input
but is absent from the regression function.  The active-input joint law is
unchanged as the proxy correlation varies.  The reference PME therefore
equals the active-only allocation padded by zeros, as established by
Proposition~\ref{prop:selected_set_equivalence}.

The subset-value study uses the quadratic response
\[
  f_{\mathrm{quad}}(x)=\tfrac12(x_1+x_2)^2+x_3^2.
\]
For independent standard uniform inputs, its nonzero ANOVA variance
components are \(61/720\), \(61/720\), \(64/720\), and \(5/720\) for
\(X_1\), \(X_2\), \(X_3\), and the \(X_1\)--\(X_2\) interaction,
respectively.  The resulting
population PME is
\((0.33246,0.33246,0.33508)\).

The coordinate-recovery and selector-to-PME studies use the three-variable Ackley function
\citep{ackley1987connectionist}, restricted to
\([-5,5]^3\),
\[
  f_{\mathrm A}(x)
  =-20\exp\left\{-0.2
    \left(\frac13\sum_{j=1}^3z_j^2\right)^{1/2}\right\}
   -\exp\left\{\frac13\sum_{j=1}^3\cos(2\pi z_j)\right\}
   +20+e,
  \qquad z_j=10x_j-5.
\]
Under the independent active-input law used for the allocation references,
permutation symmetry and efficiency give the exact population PME
\[
  (1/3,\;1/3,\;1/3).
\]
The maximum coordinatewise difference between the two half-reference
allocations is 0.00277.

The initialization study uses the same \(n=p=100\) benchmark designs with 10
replications per predictor law.  It compares the fixed ridge seed, the
BIC-gated ridge path, and an empty start.  A separate suppressor design uses
50 replications at each displayed sample size and noise level.

Conditional on the generated inputs, independent Gaussian errors are added
to every response.  In the
screening and selector-link studies, their standard deviations are 2 for
Branin, 1 for Friedman, and 10\% of the sample signal standard deviation for
Ackley--3.  The low-dimensional and correlated-proxy studies use 10\% of
the sample signal standard deviation.  In the ambient-scaling study, Branin and
Friedman retain their absolute standard deviations and the remaining
responses use the 10\% rule.

\subsection{Simulation designs}

The synthetic predictors have uniform \([0,1]\) margins.  Write
\(X_j=\Phi(Z_j)\), where \(Z\sim\mathcal N_p(0,\Sigma)\) and \(\Phi\) is the
standard normal distribution function.  All simulation correlation parameters
refer to \(Z\).  Independent inputs use \(\Sigma=I_p\).  Spaced AR(1) uses
\(\Sigma_{jk}=0.5^{|j-k|}\); for a response with \(s\) active inputs, their
positions are \(\operatorname{round}\{(j-1/2)p/s\}\), \(j=1,\ldots,s\).
At \(p=100\), these are \(\{25,75\}\) for Branin, \(\{17,50,83\}\) for
Ackley--3, and \(\{10,30,50,70,90\}\) for Friedman.  Independent and
paired-proxy designs place the active inputs first.  In the paired-proxy
design, \(Z_j\) and \(Z_{s+j}\) have correlation \(\rho\),
\(j=1,\ldots,s\), and all other off-diagonal covariances are zero.
The screening and selector-to-PME proxy designs use
\(\rho=0.5\).

The conditional-risk comparison uses Branin, the quadratic response, Friedman, Ishigami, OTL,
piston, and borehole responses at \(n\in\{100,300\}\), under independent and
AR(1) inputs with latent correlation 0.5 and the first \(s\) inputs active.
Each cell contains 100 paired
replications.  Both estimators use the known active set \(A\), for which
\(\mathcal L_Y(A)\) equals the known additive-noise variance.  In each
replication, the estimation error is divided by the sample variance of
\(Y\); the reported normalized RMSE is the root mean square of these
normalized errors across replications.  This diagnostic evaluates risk
estimation on \(A\), rather than across all of its subsets.
Coordinate recovery uses Branin, Ackley--3, and Friedman in
\(p=100\) dimensions, five sample sizes, three predictor laws, and 50 paired
replications per cell. At \(n=p=100\), the same three responses under the
independent and paired-proxy laws are used to compare the PME error induced
by five screening methods over 30 paired replications.

The estimator comparison uses Branin and Bratley in dimensions eight and ten.
The sample sizes are 100, 250, and 500, and the proxy correlations are 0,
0.5, and 0.9, with 30 paired replications per cell.  The correlated-proxy
comparison embeds the same responses in \(p=20\) dimensions at
\(n\in\{100,200,500\}\) and \(\rho=0.9\), with 20 replications.  Ambient
dimension is varied from 20 to 500 at \(n=500\) for Branin, Friedman, OTL,
piston, and borehole, with independent uniform inputs, the active inputs
first, and 30 replications per cell.

\subsection{Additional observed-data analyses}

The Concrete Compressive Strength data contain 1,030 complete observations
on cement, blast-furnace slag, fly ash, water, superplasticizer, coarse
aggregate, fine aggregate, curing age, and compressive strength
\citep{yeh2007concretedata}.  The engineering representation replaces water by
the water-to-binder ratio while retaining the other seven predictors.
Airfoil Self-Noise contains 1,503 observations on five aerodynamic and
acoustic predictors \citep{brooks1989airfoil}.  Auto MPG contains 392 complete
observations after removing six records with missing horsepower; region of
origin is represented by two indicators, giving eight predictors
\citep{quinlan1993autompg}.

PME intervals use 200 paired nonparametric bootstrap samples.  The primary
representation of each dataset is analyzed by PRIME and full-coordinate
LL-PME.  Prediction uses k-nearest-neighbor regression evaluated by five
repeats of five-fold nested cross-validation.  Screening is fit on each
outer training fold, and the neighbor count is selected separately for each
coordinate set by five-fold validation within that training fold from
\(\{5,10,20,30,40\}\).

\subsection{Comparisons}

FIRST uses total-index selection \citep{huang2025first}.  LASSO \citep{tibshirani1996regression} uses Gaussian
linear regression with ten-fold cross-validation and the one-standard-error
penalty choice. FOCI and FORD use their
native stopping rules and scale-standardized predictors
\citep{azadkia2021simple,azadkia2025integrated}.  NNVS applies the greedy
NCMD screening algorithm with ten nearest neighbors, stopping when the best
available score decreases \citep{chatterjee2026ncmd}.

To compare screening methods, each selected set from local-linear screening,
FIRST, LASSO, FOCI, and FORD is passed to the same local-linear total-effect
estimator and PME operator.  The low-dimensional experiment instead compares
subset-value estimators: PRIME, a full-coordinate
local-linear estimator, the controlled 2NN-PME estimator, and an NCMD plug-in
estimator. In these historical comparisons, 2NN-PME standardizes predictors
and estimates conditional risk from the sample response variance within each
two-point neighborhood comprising the evaluation point and its nearest other observation
\citep{broto2020variance}. It subtracts the full retained-set risk, normalizes
by estimated signal variance, and applies our monotone projection and PME
allocator. NCMD-PME uses five
nearest neighbors: their averaged response products estimate the second
moment of the conditional mean, from which the cross-pair estimate of
\(\{\E(Y)\}^2\) is subtracted before normalization by the sample response
variance \citep{chatterjee2026ncmd}.  Oracle-coordinate local-linear PME measures the
remaining estimation error when the active coordinates are known.  Direct
2NN-PME evaluates all \(2^p\) subset values, comprising the empty subset and
\(2^p-1\) nonempty subsets. At \(p=20\), these counts are
1,048,576 and 1,048,575. The correlated-proxy and ambient-dimension
experiments compare complete screen--reduce--allocate methods and record
screening and allocation times separately.  Exact PME is evaluated for
retained sets of at most 12 coordinates, or 4,096 total (4,095 nonempty)
coalition values;
larger selected sets are recorded as computationally incomplete.
The selector-to-PME figure displays four selectors; LASSO is also evaluated
and completes 74 of its 180 allocations.

\subsection{Acceleration using representative observations}

The main procedure uses all \(n\) observations in the outer average.  Data
Twinning provides a representative-observation approximation.  For the standardized
row \(W_i=(X_i,Y_i)\), let
\(\mathcal Q\subseteq\{1,\ldots,n\}\) contain \(n_Q\) indices and write
\[
  \mathbb P_n=\frac1n\sum_{i=1}^n\delta_{W_i},
  \qquad
  \mathbb P_{\mathcal Q}
  =\frac1{n_Q}\sum_{i\in\mathcal Q}\delta_{W_i}.
\]
Support points approximate a target distribution by minimizing energy distance
\citep{mak2018support}.  Restricting representatives to observed rows gives
\begin{align}
  \mathcal E(\mathbb P_{\mathcal Q},\mathbb P_n)
  ={}&\frac{2}{n_Qn}\sum_{i\in\mathcal Q}\sum_{\ell=1}^n
      \|W_i-W_\ell\|_2
     -\frac1{n_Q^2}\sum_{i,j\in\mathcal Q}\|W_i-W_j\|_2 \notag\\
    &-\frac1{n^2}\sum_{\ell,m=1}^n\|W_\ell-W_m\|_2.
  \label{eq:supp_empirical_energy_distance}
\end{align}
Data Twinning efficiently constructs statistically similar subsets under this
criterion \citep{vakayil2022twinning}.  Using one such subset for the outer
average gives
\begin{equation}
  \widehat{\mathcal L}_{\mathrm{LL},\mathcal Q}(S)
  =
  \frac1{n_Q}\sum_{i\in\mathcal Q}g_i(S).
  \label{eq:supp_twinning_query_average}
\end{equation}
All \(n\) observations remain eligible as neighbors.  If
\(N_{\mathrm{sub}}\) subset values are evaluated, the number of local fits
falls from \(N_{\mathrm{sub}}n\) to \(N_{\mathrm{sub}}n_Q\).  The following
experiment measures the resulting approximation and runtime tradeoff.

The comparison uses Branin, Ackley--3, and Friedman functions \citep{virtuallib} with
\(p\in\{20,100\}\), \(n=500\), \(\rho\in\{0,0.5\}\), five replications, and
inverse split ratios \(r\in\{2,3,5\}\).  Selected-set agreement and runtime
are compared with the calculation using all observations.  The predictors are independent
at \(\rho=0\) and follow the spaced AR(1) design at \(\rho=0.5\).

%% file: sections/supplement_concrete_case.tex
\paragraph{Concrete eight-coordinate analysis.}
The Concrete data contain 1,030 complete mixtures \citep{yeh2007concretedata}.
The predictors are cement, blast-furnace slag, fly ash, water-to-binder ratio,
superplasticizer, coarse aggregate, fine aggregate, and age at testing. The ratio
is water divided by the sum of cement, slag, and ash; it replaces raw water.
These three binder components remain as separate coordinates. Each sample
and the full-data reference are standardized separately for PRIME, PME-kNN,
and FIRST-SHAP, with further internal rescaling disabled. SHAP uses the same
observations without standardization.
For the additional comparison of the two PME-kNN noise settings, we fit
PRIME, PME-kNN under each setting, and FIRST-SHAP to the same 100 samples
of 824 distinct rows,
using seeds \(202609260+r\), \(r=1,\ldots,100\). Each sample takes the first
824 rows of a uniform random permutation of all observations.
Each configuration is also fitted to all 1,030 observations.

PRIME uses its stated screen and local-linear PME estimator. PME-kNN uses
the original six-neighbor implementation under both noise settings on all
eight coordinates. FIRST-SHAP uses its native screen, two forward passes,
and two-neighbor Shapley estimates on retained inputs. All 400 sample fits
and four full-data fits completed.

PRIME's median water/binder allocation is 0.455, with interquartile range
\([0.438,0.478]\). The corresponding full-data estimate is 0.385.
Its full-data age allocation is 0.513. Mean coordinate interquartile widths
are 0.017 for PRIME, 0.036 for PME-kNN with \texttt{noise=FALSE},
0.096 for PME-kNN with \texttt{noise=TRUE}, and 0.021 for FIRST-SHAP.
The FALSE setting gives positive allocations to all eight coordinates.
All 100 TRUE vectors contain negative entries; their median total
is 0.708 and their full-data total is 0.729. These totals estimate explained
fractions of response variance.

\paragraph{Native PME-kNN risk contrasts.}
On the full predictor set \(P\), the \texttt{noise=TRUE} option uses the
noise-adjusted total-effect estimate \(\widehat T_P^{\mathrm{kNN}}\).
The neighborhoods vary with the conditioning subset, so the estimated differences can be
negative. Mixed signs permit cancellation in the reciprocal sums used by
the original proportional-value recursion. In the most extreme Concrete
replication, the native allocations range from \(-3.583\) to \(2.545\),
while their sum remains \(0.706\). The returned values are retained.
The PRIME and PME-NA panels in Figures~\ref{fig:friedman-methods} and~\ref{fig:concrete-p8-methods}
display \([-0.30,1.05]\), whereas FIRST-SHAP displays \([0,1.05]\).
All estimates for each displayed method are included in its box statistics.
\begin{table}[!htbp]
\centering
\caption{Native six-neighbor PME-kNN outputs under both noise settings.}
\begin{tabular}{llrrr}
\toprule
Dataset & Noise & Negative entries & Full range & Outside plot range \\
\midrule
Friedman & FALSE & 0/300 & $[0.030,\,0.319]$ & 0 \\
         & TRUE  & 116/300 & $[-2.618,\,3.771]$ & 13 \\
Concrete & FALSE & 0/800 & $[0.021,\,0.575]$ & 0 \\
         & TRUE  & 314/800 & $[-3.583,\,3.080]$ & 29 \\
\bottomrule
\end{tabular}
\label{tab:native_pme_ranges}
\end{table}

%% file: sections/supplement_results.tex
\subsection{Screening and downstream allocation}
The Friedman screening comparison in
Figure~\ref{fig:friedman_selection_illustration} is complemented by Branin
and Ackley--3 under the same three predictor laws.
\begin{figure}[!htbp]
\centering
\includegraphics[width=0.92\textwidth]{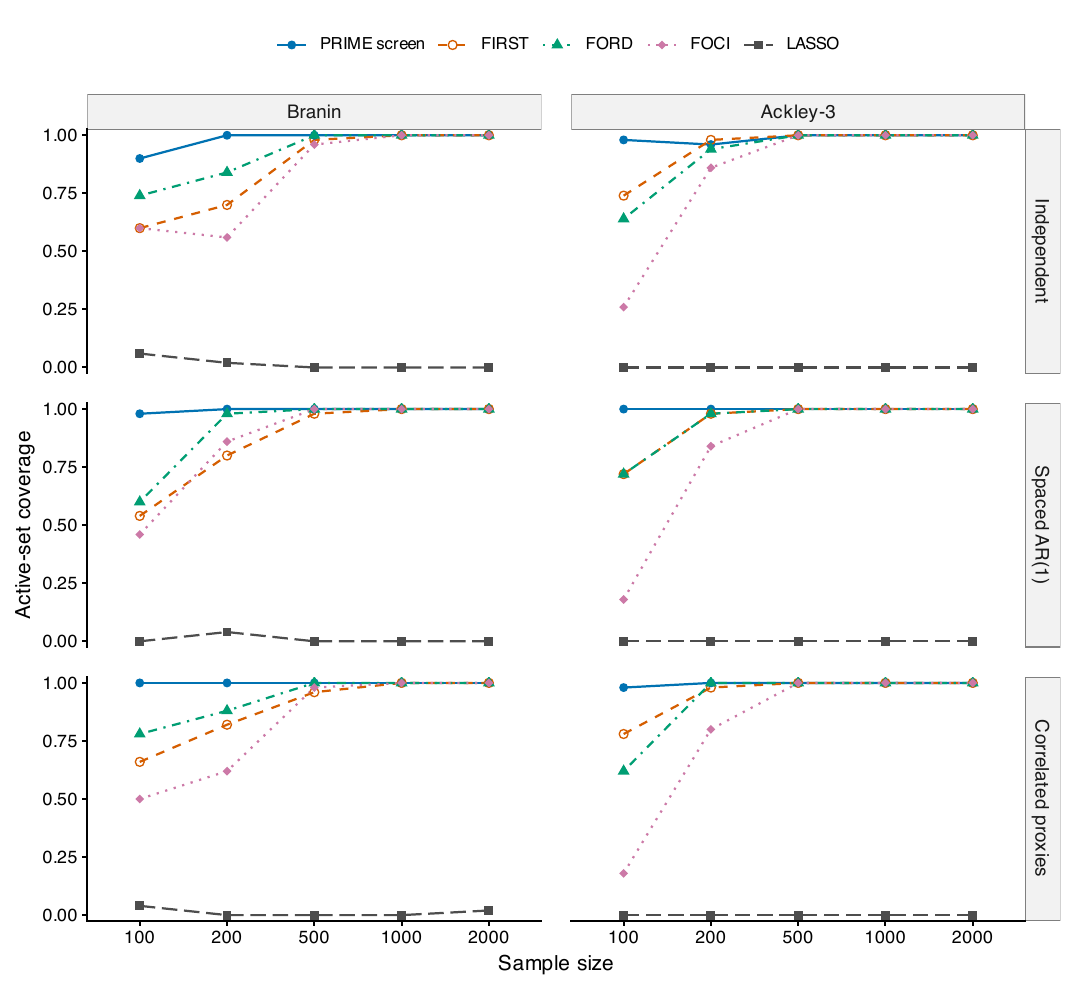}
\caption{Active-set coverage for Branin and Ackley--3 at \(p=100\), with 50
paired replications per cell. Columns identify the response and rows the
predictor law.}
\label{fig:other_functions_selection}
\end{figure}

\begin{figure}[!htbp]
\centering
\includegraphics[width=\textwidth]{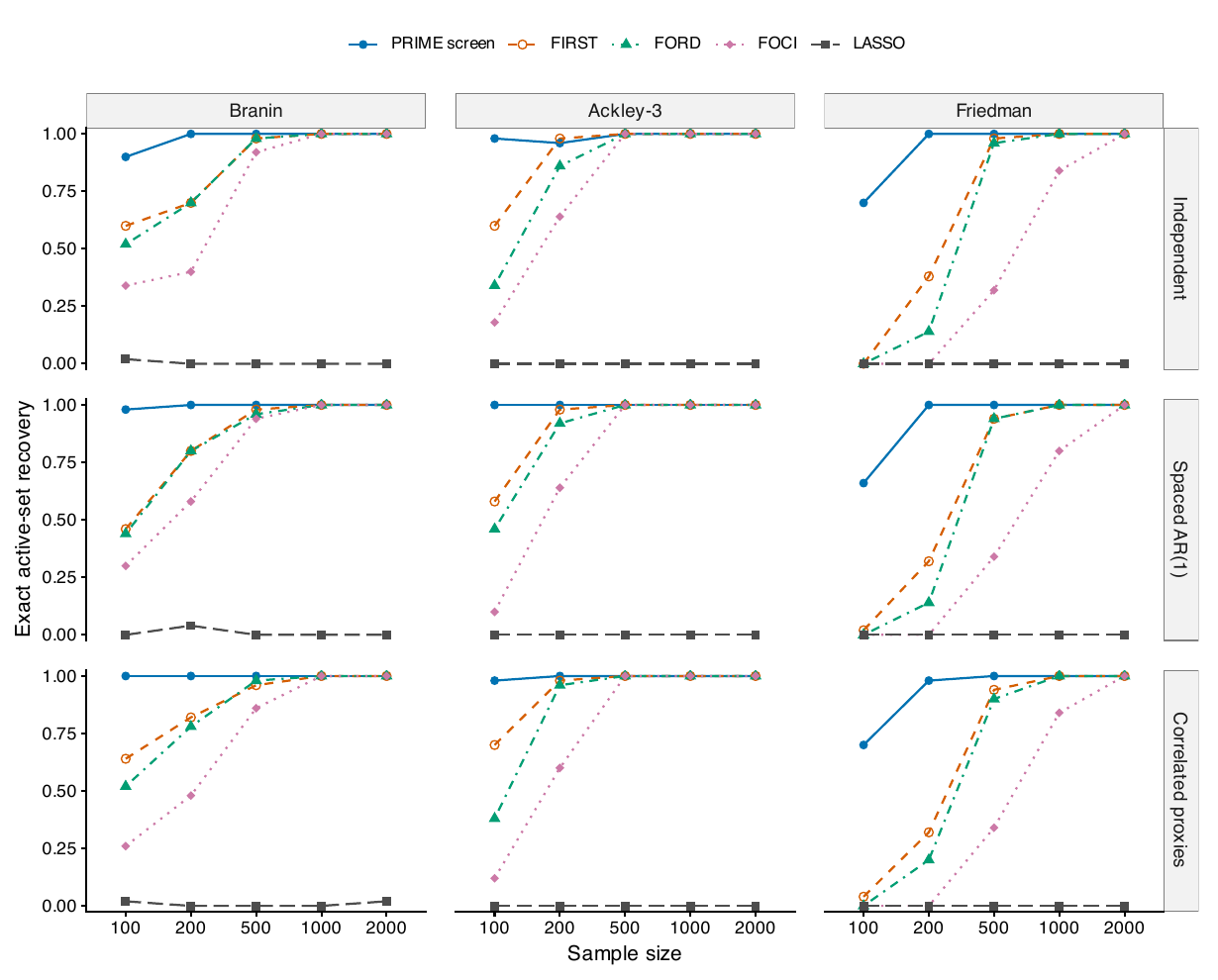}
\caption{Exact active-set recovery at \(p=100\). Each point is the proportion
of 50 paired replications with \(\widehat A=A\).}
\label{fig:supp_selection_exact}
\end{figure}

\begin{figure}[!htbp]
\centering
\includegraphics[width=\textwidth]{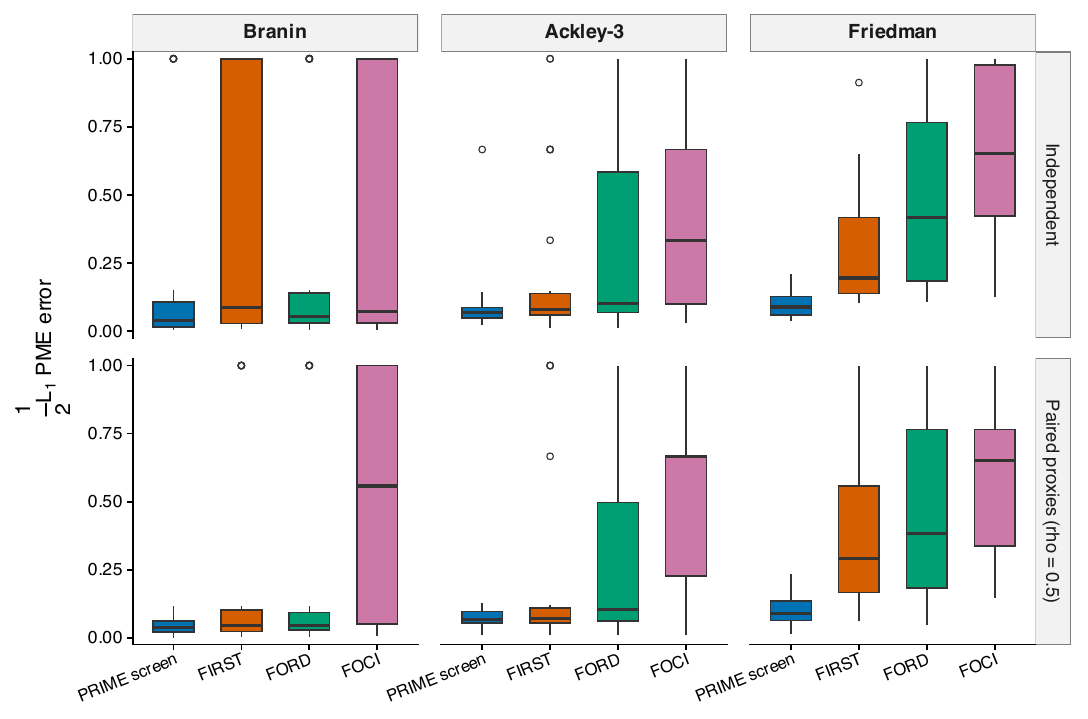}
\caption{PME error after coordinate selection at \(n=p=100\) over 30 paired
replications. Rows identify the predictor law and columns the response; boxes
use completed allocations.}
\label{fig:selection_to_pme_combined}
\end{figure}

Completion totals over the six cells are 180/180 for PRIME, 171/180 for FIRST,
172/180 for FORD, and 159/180 for FOCI. LASSO is omitted because only 74/180
allocations completed.

\FloatBarrier
\paragraph{Ridge initialization.}

\input{tables/ridge_seed_diagnostic}

The empty start has the highest coverage for Ackley--3 and Friedman,
and fixed ridge for the suppressor settings. BIC-gated initialization gives
intermediate coverage.
\paragraph{Conditional-risk estimation.}

\begin{figure}[!htbp]
\centering
\includegraphics[width=\textwidth]{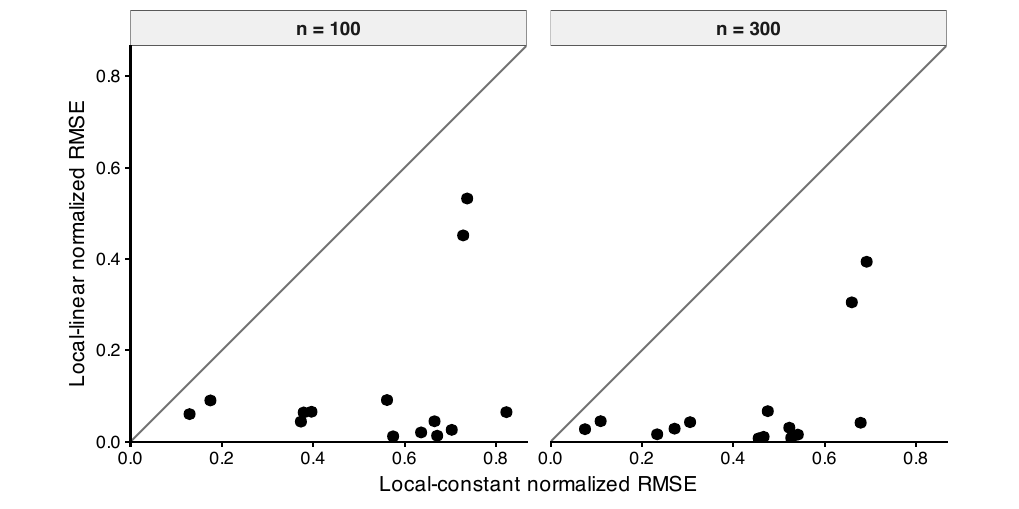}
\caption{Normalized RMSE of local-linear and local-constant risk estimates on
the known active set \(A\) over 100 paired replications. Each point is one
response--predictor-law setting; the diagonal marks equal error.}
\label{fig:supp_conditional_variance_rmse}
\end{figure}

All 28 response--sample-size--dependence combinations lie below the diagonal.
The median local-linear to local-constant RMSE ratio is 0.112, with a range
from 0.016 to 0.723.
\paragraph{PME estimator calibration.}

\begin{figure}[!htbp]
\centering
\includegraphics[width=\textwidth]{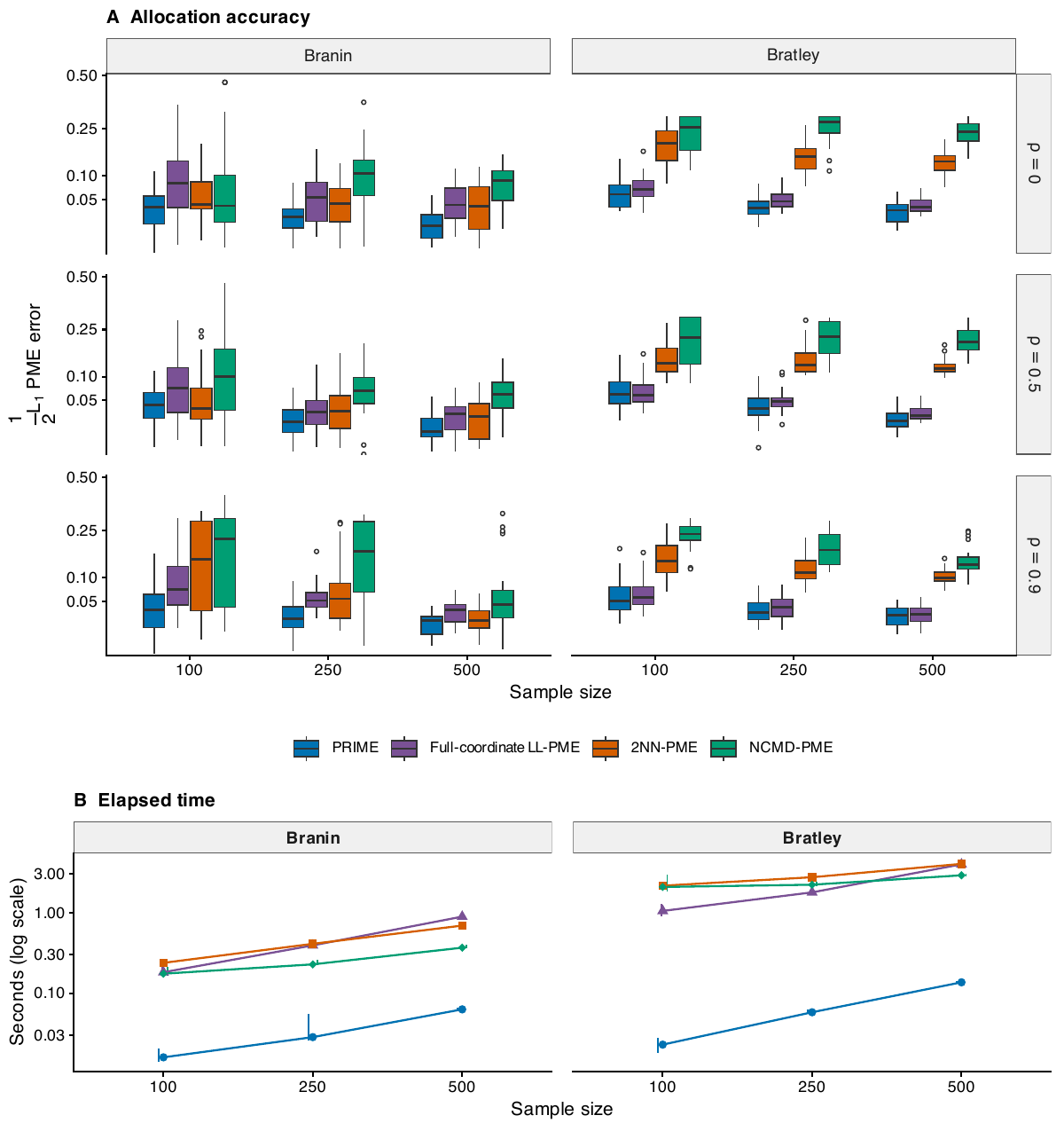}
\caption{PME estimator calibration with paired inactive proxies. Panel A shows
allocation error; Panel B shows elapsed time. Full-coordinate LL-PME uses
local-linear subset-value estimation without screening.}
\label{fig:supp_pme_low_dimensional}
\end{figure}

The full-coordinate local-linear calculation measures the effect of
screening before subset enumeration.  The oracle-coordinate calculation
measures the remaining subset-value estimation error when the active
coordinates are known. Pooled mean elapsed time is 0.06 seconds for PRIME,
compared with 1.42 for full-coordinate LL-PME, 1.70 for 2NN-PME, and 1.40 for
NCMD-PME; Figure~\ref{fig:supp_pme_low_dimensional} reports the corresponding
accuracy tradeoff.
\subsection{Correlated-proxy robustness}

Branin and Bratley provide distinct correlated-input settings: a
two-dimensional nonlinear surface and a five-coordinate nested-product
response.  Both are embedded in \(p=20\) dimensions with one inactive proxy
per active coordinate, proxy correlation \(0.9\), and
\(n\in\{100,200,500\}\).  Each selector is followed by its stated PME
estimator over 20 paired replications.

\input{tables/pme_correlated_pipeline}

Active-set coverage is the proportion of all runs whose selected set contains
every active coordinate.  Proxy allocation is the total PME assigned to the
paired inactive proxies.  Mean loss and proxy allocation use completed
allocations, while the table reports completion counts over all attempts.
PRIME has the lowest mean loss in all 6 response--sample-size cells.  The highest pooled active-set coverage is
0.758 (PRIME), and the smallest pooled proxy
allocation is 0.0012 (PRIME).
\FloatBarrier
\input{sections/supplement_allocation_rules}
\subsection{Scalability}

Five response functions are evaluated at \(n=500\) and
\(p\in\{20,50,100,250,500\}\).  The active dimension remains fixed while
inactive coordinates are added. Figure~\ref{fig:pme_scaling} displays three of
the five responses; the numerical summaries use the full five-response grid.
Because the response and the active-input law are fixed across \(p\), this
experiment measures allocation error as inactive coordinates are added while
the intrinsic regression dimension remains unchanged.
This comparison also includes nearest-neighbor variable screening (NNVS)
\citep{chatterjee2026ncmd}, followed by 2NN-PME.

\begin{figure}[!htbp]
\centering
\includegraphics[width=\textwidth]{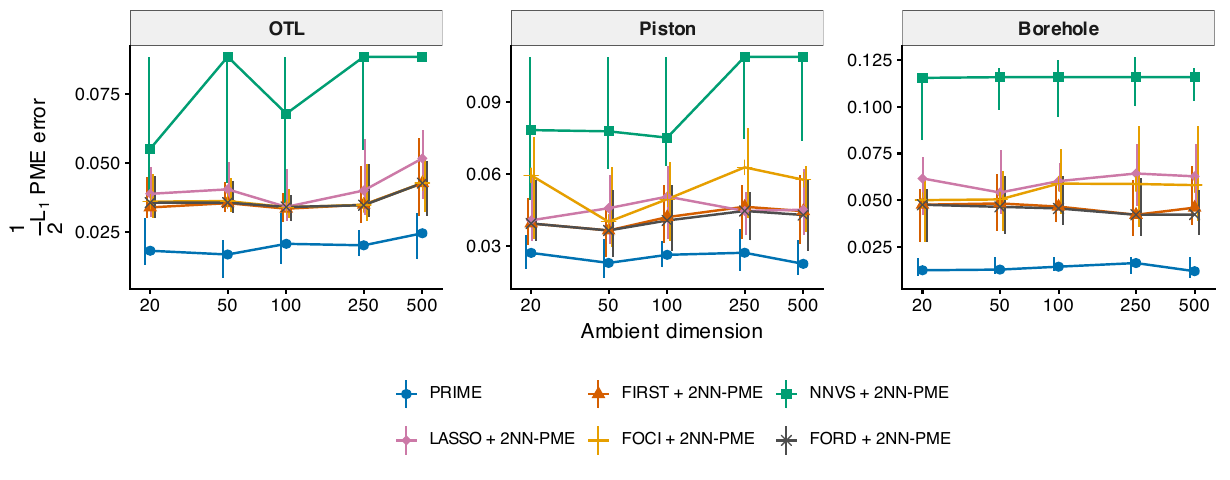}
\caption{PME error as inactive coordinates accumulate at \(n=500\). Points
and bars show medians and interquartile ranges over completed runs.}
\label{fig:pme_scaling}
\end{figure}

PRIME has the lowest mean error in all 25 response--dimension cells.
Its pooled mean error over completed runs is 0.026, compared with
0.047 for FORD, 0.050 for FIRST, 0.065 for FOCI, 0.088 for NNVS, and 0.138 for LASSO, each followed by 2NN-PME.
Comparator completion counts are 750/750 for FIRST, 740/750 for NNVS,
723/750 for LASSO, 749/750 for FOCI, and 750/750 for FORD.

Across the six methods, no OTL, piston, or borehole run achieves complete
active-set coverage.
Allocation accuracy and complete active-set coverage are therefore reported
as distinct empirical endpoints.

At the largest ambient dimension, Table~\ref{tab:pme-scaling-pmax}
gives detailed runtimes, selected dimensions, allocation losses, and
completion counts. The true active dimensions of OTL, piston, and borehole are
6, 7, and 8; the selected sizes are therefore interpreted jointly with loss,
not as exact support recovery.

\input{tables/pme_scaling_pmax}
\paragraph{Acceleration using representative observations.}

\input{tables/twinning_diagnostic}

The median selected-set Jaccard index is one for every split ratio.
Median runtime speedup increases from 1.64 at ratio 2 to 3.52 at ratio 5.
This diagnostic reduces the number of local fits: all observations remain eligible as
neighbors, and the table does not establish an ANN, memory, or final-allocation
approximation guarantee.

\FloatBarrier
\subsection{Additional observed-data checks}
A separate common protocol provides a robustness check on Concrete, Airfoil,
and Auto MPG \citep{yeh2007concretedata,brooks1989airfoil,quinlan1993autompg}.
It uses 200 paired nonparametric bootstrap samples and five repeats of
five-fold nested cross-validation, comparing PRIME with full-coordinate
LL-PME.
\ifConcreteMain
For Concrete, these bootstrap draws differ from the 100 distinct-row
subsamples in the main comparison; the held-out prediction check adds a
separate assessment.
\else
This protocol is distinct from the paired 80\% Auto MPG analysis in the main
paper.
\fi

\begin{figure}[!htbp]
\centering
\includegraphics[width=\textwidth]{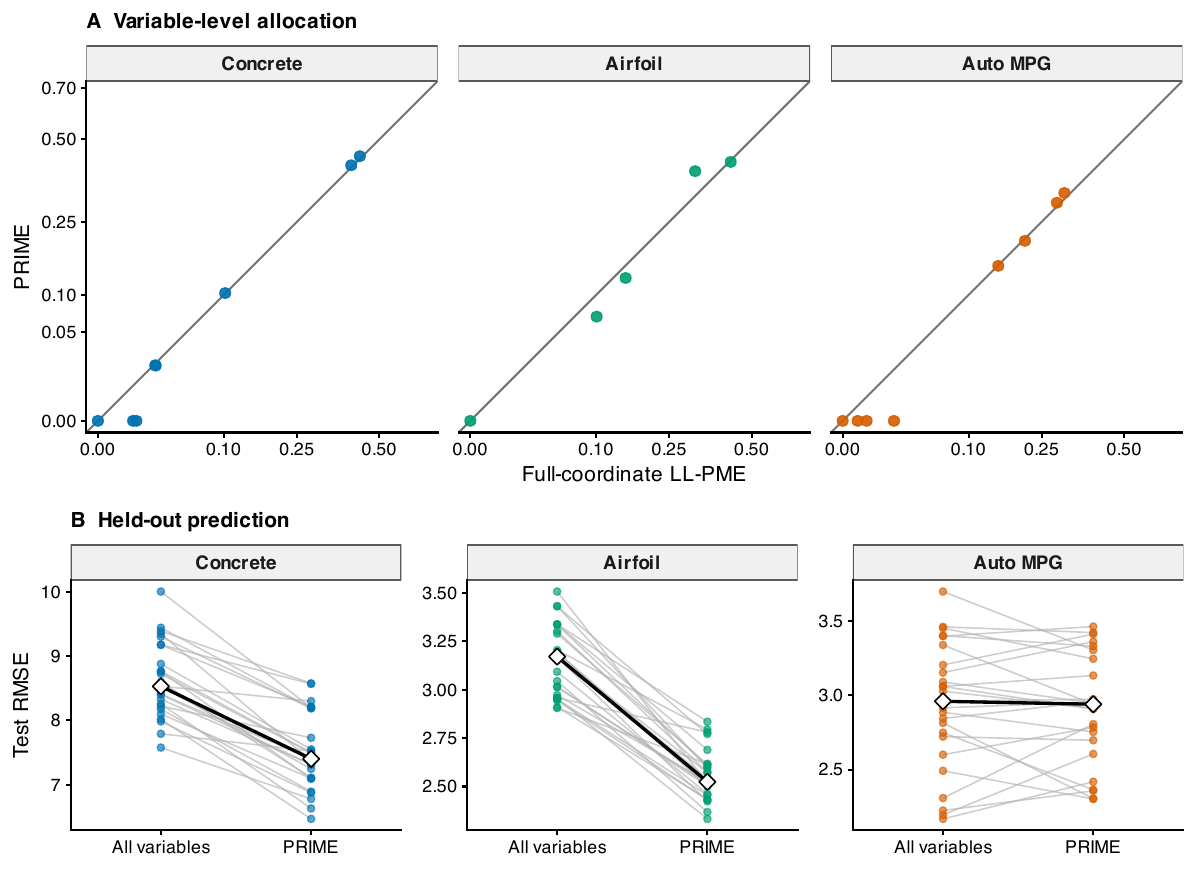}
\caption{Observed-data diagnostics: Panel A compares bootstrap-median PRIME
and full-coordinate LL-PME allocations; Panel B compares paired held-out RMSE
for full and screened predictor sets.}
\label{fig:observed-application}
\end{figure}

\input{tables/observed_application_main}

\Needspace{8\baselineskip}
Median screened-to-full allocation distances are 0.036 for Concrete, 0.087 for Airfoil, and 0.055 for Auto MPG.
All 200 bootstrap pairs are complete for each dataset.  Screening reduces median subset-value counts from 256 to 32 for Concrete, from 32 to 8 for Airfoil, and from 256 to 32 for Auto MPG.
The median paired selected-to-full test-RMSE ratios are 0.868 for Concrete, 0.806 for Airfoil, and 0.991 for Auto MPG.
These checks show that the reduced representations retain comparable held-out
prediction while requiring fewer coalition values.

\FloatBarrier

%% file: tables/ridge_seed_diagnostic.tex
\begin{table}[H]
\centering
\caption{Active-set coverage under alternative forward initializations.}
\label{tab:ridge-seed-diagnostic}
\begin{tabular}{lccc}
\toprule
Scenario & Fixed ridge & BIC-gated & Empty
\\
\midrule
Ackley--3, $n=p=100$ & 0.067 & 0.967 & 1.000 \\
Friedman, $n=p=100$ & 0.233 & 0.767 & 0.900 \\
Suppressor, $n=100$, $\sigma=0.1$ & 0.780 & 0.720 & 0.040 \\
Suppressor, $n=500$, $\sigma=0.5$ & 0.780 & 0.680 & 0.160 \\
\bottomrule
\end{tabular}
\end{table}

%% file: tables/pme_correlated_pipeline.tex
\begin{table}[H]
\centering
\caption{Correlated-proxy performance pooled over Branin and Bratley.}
\label{tab:correlated-pipeline}
\small
\begin{tabular}{lrrrr}
\toprule
Method & Mean loss & Active-set coverage & Proxy allocation & Completed \\
\midrule
PRIME & 0.040 & 0.758 & 0.0012 & 120/120 \\
FIRST + 2NN-PME & 0.103 & 0.458 & 0.0073 & 120/120 \\
NNVS + 2NN-PME & 0.078 & 0.492 & 0.0129 & 120/120 \\
LASSO + 2NN-PME & 0.213 & 0.225 & 0.0083 & 93/120 \\
FOCI + 2NN-PME & 0.102 & 0.450 & 0.0103 & 118/120 \\
FORD + 2NN-PME & 0.086 & 0.483 & 0.0053 & 120/120 \\
\bottomrule
\end{tabular}
\end{table}

%% file: sections/supplement_allocation_rules.tex
\subsection{Allocation rules and estimator sensitivity}
\label{sec:supp-allocation-rules}
For the ten-coordinate Friedman design in the main text, we estimate
normalized, monotone-projected subset values using local-linear fits.
We then compute PME and Shapley from the same estimates, so the comparison
changes only the allocation rule.

\begin{figure}[!htbp]
\centering
\includegraphics[width=0.92\textwidth]{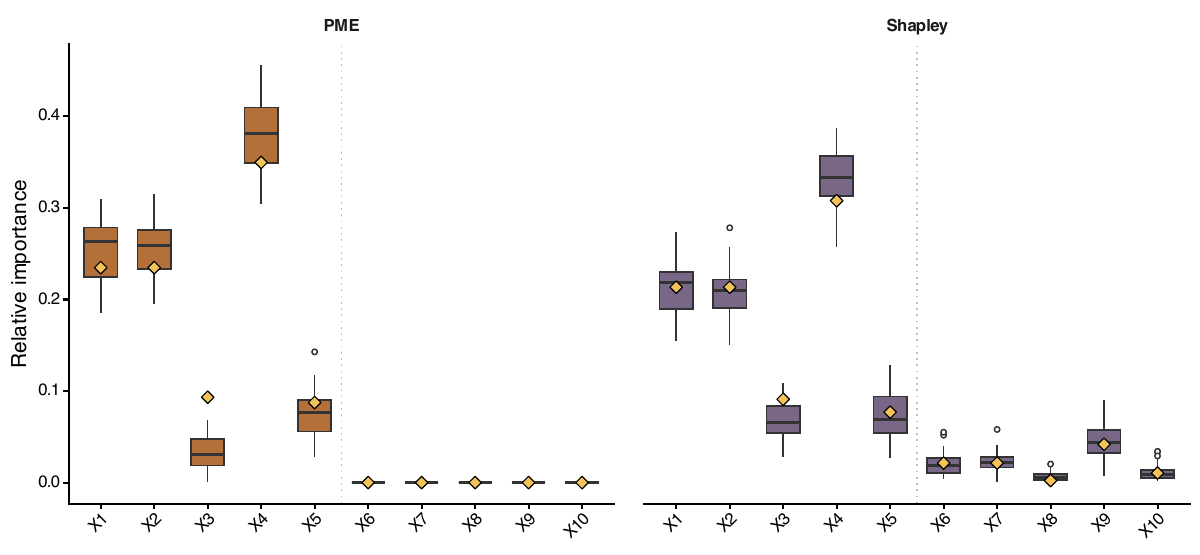}
\caption{PME and Shapley computed from the same full-coordinate local-linear
subset estimates at \(n=200\) and \(\rho=0.5\). Diamonds mark population
values; the dotted line separates active inputs and proxies.}
\label{fig:friedman-ll-rules}
\end{figure}

The mean total allocation to the five proxies is 0.00014 for PME and 0.10749
for Shapley, compared with population values zero and 0.09728. The distinction
persists across sample sizes and strengthens as the proxy correlation grows
(Figure~\ref{fig:friedman-dependence}).

\begin{figure}[!htbp]
\centering
\includegraphics[width=\textwidth]{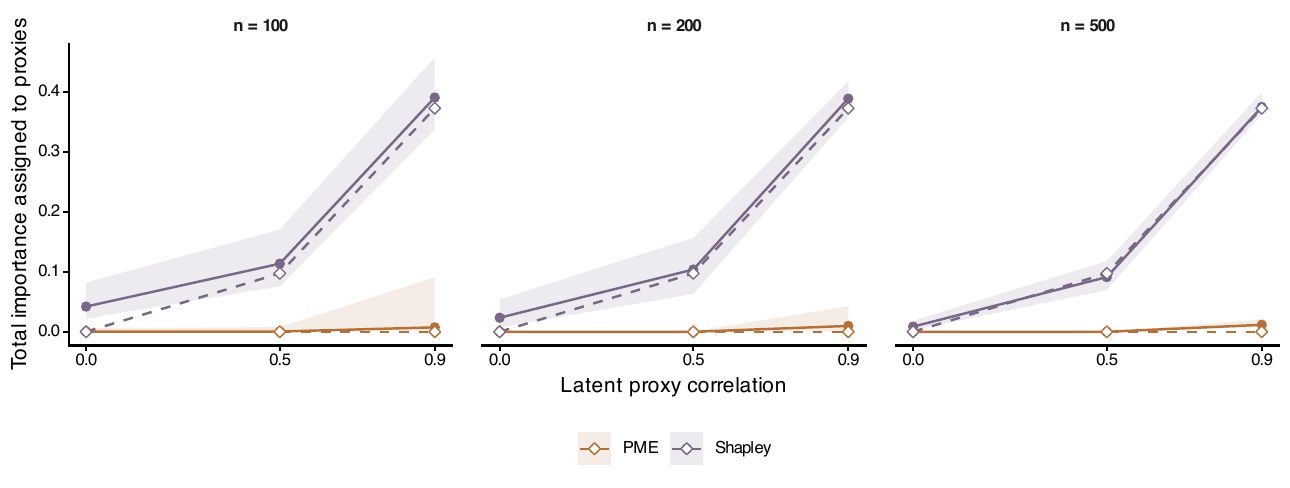}
\caption{Total proxy allocation as dependence increases. Points and bands show
the median and 5th--95th percentiles over 30 datasets; dashed lines and open
diamonds show population values.}
\label{fig:friedman-dependence}
\end{figure}

At $\rho=0.9$, the population Shapley proxy mass is 0.37224, whereas the
population PME remains zero. Finite-sample PME has residual mean proxy mass
0.02341, 0.01432, and 0.01249 as $n$ increases from 100 to 500. The
corresponding Shapley means are 0.39323, 0.38839, and 0.37725.

\paragraph{Estimator sensitivity.}
Matched-\(K\) and two-neighbor local-constant estimates provide further
allocation-rule comparisons on the same datasets. Within each estimator, PME
and Shapley use the same monotone-projected subset values.

\begin{table}[!htbp]
\centering
\caption{Estimator sensitivity at \(n=200\) and \(\rho=0.5\) over 30 paired
Friedman datasets.}
\begin{tabular}{llcc}
\toprule
Rule & Game estimator & Mean error & Mean proxy mass \\
\midrule
PME & Full local-linear & 0.09379 & 0.00014 \\
PME & Matched-\(K\) local-constant & 0.44029 & 0.00082 \\
PME & Two-neighbor local-constant & 0.16836 & 0.00056 \\
Shapley & Full local-linear & 0.08641 & 0.10749 \\
Shapley & Matched-\(K\) local-constant & 0.17978 & 0.11888 \\
Shapley & Two-neighbor local-constant & 0.10243 & 0.13831 \\
\bottomrule
\end{tabular}
\label{tab:allocation-estimator-sensitivity}
\end{table}

Holding the neighborhood rule fixed, local linearization reduces mean error
from 0.44029 to 0.09379 for PME and from 0.17978 to 0.08641 for Shapley. Full
LL-PME has lower mean PME error than both local-constant estimators in all nine
\((n,\rho)\) cells. The large-neighborhood local-constant estimator is also
less accurate than 2NN, consistent with conditional-mean drift accumulating
over a wider neighborhood.

\paragraph{Projection sensitivity.}
For the two-neighbor local-constant estimator, we compare Shapley before and
after monotone projection. At \(n=200\) and \(\rho=0.5\), mean total proxy
allocation is 0.02922 before projection and 0.13831 after projection.
Table~\ref{tab:projection-sensitivity} reports all conditions.
\input{tables/projection_sensitivity}
\FloatBarrier

%% file: tables/projection_sensitivity.tex
\begin{table}[!ht]
\centering
\caption{Shapley proxy allocation before and after monotone projection of the
controlled two-neighbor game, over 30 datasets per condition.}
\begin{tabular}{rrcc}
\toprule
$n$ & $\rho$ & Raw game & Projected game \\
\midrule
100 & 0 & -0.16239 & 0.08757 \\
100 & 0.5 & -0.03505 & 0.14592 \\
100 & 0.9 & 0.36126 & 0.40458 \\
200 & 0 & -0.12646 & 0.05681 \\
200 & 0.5 & 0.02922 & 0.13831 \\
200 & 0.9 & 0.36040 & 0.39743 \\
500 & 0 & -0.11629 & 0.03215 \\
500 & 0.5 & 0.01926 & 0.11614 \\
500 & 0.9 & 0.35986 & 0.39185 \\
\bottomrule
\end{tabular}
\label{tab:projection-sensitivity}
\end{table}

%% file: tables/pme_scaling_pmax.tex
\begin{table}[H]
\centering
\caption{Screen--reduce--allocate pipelines at \(n=p=500\) over 30 paired
runs.}
\label{tab:pme-scaling-pmax}
\footnotesize
\setlength{\tabcolsep}{7pt}
\begin{tabular}{llrrrr}
\toprule
Response & Method & Median loss & Total time (s) & Selected $|\widehat A|$ & Completed \\
\midrule
OTL & PRIME & 0.024 & 5.83 & 4.0 & 30/30 \\
 & FIRST + 2NN-PME & 0.043 & 2.89 & 3.0 & 30/30 \\
 & NNVS + 2NN-PME & 0.089 & 1.58 & 3.0 & 29/30 \\
 & LASSO + 2NN-PME & 0.052 & 0.53 & 5.0 & 30/30 \\
 & FOCI + 2NN-PME & 0.043 & 2.96 & 4.0 & 30/30 \\
 & FORD + 2NN-PME & 0.043 & 7.11 & 3.0 & 30/30 \\
\addlinespace
Piston & PRIME & 0.023 & 7.36 & 4.0 & 30/30 \\
 & FIRST + 2NN-PME & 0.045 & 3.12 & 4.0 & 30/30 \\
 & NNVS + 2NN-PME & 0.109 & 1.60 & 3.0 & 30/30 \\
 & LASSO + 2NN-PME & 0.045 & 0.57 & 4.0 & 30/30 \\
 & FOCI + 2NN-PME & 0.058 & 2.99 & 4.0 & 30/30 \\
 & FORD + 2NN-PME & 0.043 & 8.73 & 4.0 & 30/30 \\
\addlinespace
Borehole & PRIME & 0.012 & 9.65 & 5.0 & 30/30 \\
 & FIRST + 2NN-PME & 0.046 & 2.92 & 4.0 & 30/30 \\
 & NNVS + 2NN-PME & 0.116 & 0.99 & 2.0 & 30/30 \\
 & LASSO + 2NN-PME & 0.063 & 0.57 & 5.0 & 30/30 \\
 & FOCI + 2NN-PME & 0.058 & 3.02 & 4.0 & 30/30 \\
 & FORD + 2NN-PME & 0.042 & 8.58 & 4.0 & 30/30 \\
\bottomrule
\end{tabular}
\end{table}

%% file: tables/twinning_diagnostic.tex
\begin{table}[H]
\centering
\caption{Approximation using representative observations at \(n=500\) over 60 paired
datasets per split ratio.}
\label{tab:twinning-diagnostic}
\small
\begin{tabular}{rrrrr}
\toprule
Split ratio $r$ & Observations $n_Q$ & Selected-set Jaccard & Active recall & Speedup \\
\midrule
2 & 250 & 1.000 & 1.000 & 1.64 [1.48, 1.72] \\
3 & 167 & 1.000 & 1.000 & 2.29 [2.03, 2.44] \\
5 & 100 & 1.000 & 1.000 & 3.52 [2.96, 3.78] \\
\bottomrule
\end{tabular}
\end{table}

%% file: tables/observed_application_main.tex
\begin{table}[H]
\centering
\caption{Observed-data comparison under bootstrap allocation and nested
cross-validation.}
\label{tab:observed-application}
\small
\begin{tabular}{lrrrrrr}
\toprule
Data & $n$ & $p$ & Selected $d$ & \multicolumn{2}{c}{Subset-value evaluations} & Test-RMSE ratio \\
\cmidrule(lr){5-6}
 & & & & PRIME & Full-coordinate & \\
\midrule
Concrete & 1030 & 8 & 5 & 32 & 256 & 0.868 \\
Airfoil & 1503 & 5 & 3 & 8 & 32 & 0.806 \\
Auto MPG & 392 & 8 & 5 & 32 & 256 & 0.991 \\
\bottomrule
\end{tabular}
\end{table}

%% file: main.bbl
\begin{thebibliography}{}

\bibitem[\protect\citeauthoryear{Ackley}{Ackley}{1987}]{ackley1987connectionist}
Ackley, D.~H. (1987).
\newblock {\em A Connectionist Machine for Genetic Hillclimbing}.
\newblock Boston: Kluwer Academic Publishers.

\bibitem[\protect\citeauthoryear{Azadkia and Chatterjee}{Azadkia and
  Chatterjee}{2021}]{azadkia2021simple}
Azadkia, M. and S.~Chatterjee (2021).
\newblock A simple measure of conditional dependence.
\newblock {\em The Annals of Statistics\/}~{\em 49\/}(6), 3070--3102.

\bibitem[\protect\citeauthoryear{Azadkia and Roudaki}{Azadkia and
  Roudaki}{2025}]{azadkia2025integrated}
Azadkia, M. and P.~Roudaki (2025).
\newblock A new measure of dependence: Integrated {$R^2$}.
\newblock \href{https://arxiv.org/abs/2505.18146}{arXiv:2505.18146}.

\bibitem[\protect\citeauthoryear{Borgonovo, Plischke, and Rabitti}{Borgonovo
  et~al.}{2024}]{borgonovo2024many}
Borgonovo, E., E.~Plischke, and G.~Rabitti (2024).
\newblock The many shapley values for explainable artificial intelligence: A
  sensitivity analysis perspective.
\newblock {\em European Journal of Operational Research\/}~{\em 318\/}(3),
  911--926.

\bibitem[\protect\citeauthoryear{Brooks, Pope, and Marcolini}{Brooks
  et~al.}{1989}]{brooks1989airfoil}
Brooks, T., D.~Pope, and M.~Marcolini (1989).
\newblock Airfoil self-noise.
\newblock UCI Machine Learning Repository.
\newblock \url{https://doi.org/10.24432/C5VW2C}.

\bibitem[\protect\citeauthoryear{Broto, Bachoc, and Depecker}{Broto
  et~al.}{2020}]{broto2020variance}
Broto, B., F.~Bachoc, and M.~Depecker (2020).
\newblock Variance reduction for estimation of {S}hapley effects and adaptation
  to unknown input distribution.
\newblock {\em SIAM/ASA Journal on Uncertainty Quantification\/}~{\em 8\/}(2),
  693--716.

\bibitem[\protect\citeauthoryear{Chatterjee, Niu, and Bhattacharya}{Chatterjee
  et~al.}{2026}]{chatterjee2026ncmd}
Chatterjee, A., Z.~Niu, and B.~B. Bhattacharya (2026).
\newblock Conditional mean independence and global sensitivity analysis using
  nearest neighbor graphs.
\newblock \href{https://arxiv.org/abs/2607.04692v2}{arXiv:2607.04692v2}.

\bibitem[\protect\citeauthoryear{Covert, Lundberg, and Lee}{Covert
  et~al.}{2020}]{covert2020understanding}
Covert, I., S.~M. Lundberg, and S.-I. Lee (2020).
\newblock Understanding global feature contributions with additive importance
  measures.
\newblock In {\em Advances in Neural Information Processing Systems},
  Volume~33, pp.\  17212--17223. Curran Associates, Inc.

\bibitem[\protect\citeauthoryear{Da~Veiga, Gamboa, Iooss, and Prieur}{Da~Veiga
  et~al.}{2021}]{sensitivitybook2021}
Da~Veiga, S., F.~Gamboa, B.~Iooss, and C.~Prieur (2021).
\newblock {\em Basics and Trends in Sensitivity Analysis: Theory and Practice
  in R}.
\newblock SIAM.

\bibitem[\protect\citeauthoryear{Donoho and Johnstone}{Donoho and
  Johnstone}{1994}]{donoho1994ideal}
Donoho, D.~L. and I.~M. Johnstone (1994).
\newblock Ideal spatial adaptation by wavelet shrinkage.
\newblock {\em Biometrika\/}~{\em 81\/}(3), 425--455.

\bibitem[\protect\citeauthoryear{Fan and Gijbels}{Fan and
  Gijbels}{1996}]{fan1996local}
Fan, J. and I.~Gijbels (1996).
\newblock {\em Local Polynomial Modelling and Its Applications}.
\newblock London: Chapman and Hall.

\bibitem[\protect\citeauthoryear{Feldman}{Feldman}{1999}]{feldman1999proportional}
Feldman, B.~E. (1999).
\newblock The proportional value of a cooperative game.
\newblock Manuscript, Scudder Kemper Investments, Chicago.
  \url{http://fmwww.bc.edu/RePEc/es2000/1140.pdf}.

\bibitem[\protect\citeauthoryear{Feldman}{Feldman}{2005}]{feldman2005relative}
Feldman, B.~E. (2005).
\newblock Relative importance and value.
\newblock SSRN working paper. \url{https://doi.org/10.2139/ssrn.2255827}.

\bibitem[\protect\citeauthoryear{Fisher, Rudin, and Dominici}{Fisher
  et~al.}{2019}]{fisher2019all}
Fisher, A., C.~Rudin, and F.~Dominici (2019).
\newblock All models are wrong, but many are useful: Learning a variable's
  importance by studying an entire class of prediction models simultaneously.
\newblock {\em Journal of Machine Learning Research\/}~{\em 20\/}(177), 1--81.

\bibitem[\protect\citeauthoryear{Friedman}{Friedman}{1991}]{friedman1991multivariate}
Friedman, J.~H. (1991).
\newblock Multivariate adaptive regression splines.
\newblock {\em The Annals of Statistics\/}~{\em 19\/}(1), 1--67.

\bibitem[\protect\citeauthoryear{Gr{\"o}mping}{Gr{\"o}mping}{2007}]{gromping2007importance}
Gr{\"o}mping, U. (2007).
\newblock Estimators of relative importance in linear regression based on
  variance decomposition.
\newblock {\em The American Statistician\/}~{\em 61\/}(2), 139--147.

\bibitem[\protect\citeauthoryear{Hart and Gremaud}{Hart and
  Gremaud}{2018}]{hart2018approximation}
Hart, J.~L. and P.~A. Gremaud (2018).
\newblock An approximation theoretic perspective of {Sobol'} indices with
  dependent variables.
\newblock {\em International Journal for Uncertainty Quantification\/}~{\em
  8\/}(6), 483--493.

\bibitem[\protect\citeauthoryear{Herin, Il~Idrissi, Chabridon, and Iooss}{Herin
  et~al.}{2024}]{herin2024proportional}
Herin, M., M.~Il~Idrissi, V.~Chabridon, and B.~Iooss (2024).
\newblock Proportional marginal effects for global sensitivity analysis.
\newblock {\em SIAM/ASA Journal on Uncertainty Quantification\/}~{\em 12\/}(2),
  667--692.

\bibitem[\protect\citeauthoryear{Homma and Saltelli}{Homma and
  Saltelli}{1996}]{homma1996importance}
Homma, T. and A.~Saltelli (1996).
\newblock Importance measures in global sensitivity analysis of nonlinear
  models.
\newblock {\em Reliability Engineering \& System Safety\/}~{\em 52\/}(1),
  1--17.

\bibitem[\protect\citeauthoryear{Huang and Joseph}{Huang and
  Joseph}{2025}]{huang2025first}
Huang, C. and V.~R. Joseph (2025).
\newblock Factor importance ranking and selection using total indices.
\newblock {\em Technometrics\/}~{\em 67\/}(4), 573--589.

\bibitem[\protect\citeauthoryear{Iooss, Da~Veiga, Janon, Pujol, et~al.}{Iooss
  et~al.}{2022}]{iooss2022sensitivity}
Iooss, B., S.~Da~Veiga, A.~Janon, G.~Pujol, et~al. (2022).
\newblock {sensitivity}: Global sensitivity analysis of model outputs.
\newblock R package version 1.27.1.
  \url{https://CRAN.R-project.org/package=sensitivity}.

\bibitem[\protect\citeauthoryear{Iooss and Prieur}{Iooss and
  Prieur}{2019}]{iooss2019shapley}
Iooss, B. and C.~Prieur (2019).
\newblock Shapley effects for sensitivity analysis with correlated inputs:
  Comparisons with {S}obol' indices, numerical estimation and applications.
\newblock {\em International Journal for Uncertainty Quantification\/}~{\em
  9\/}(5), 493--514.

\bibitem[\protect\citeauthoryear{Lei, G'Sell, Rinaldo, Tibshirani, and
  Wasserman}{Lei et~al.}{2018}]{lei2018distribution}
Lei, J., M.~G'Sell, A.~Rinaldo, R.~J. Tibshirani, and L.~Wasserman (2018).
\newblock Distribution-free predictive inference for regression.
\newblock {\em Journal of the American Statistical Association\/}~{\em
  113\/}(523), 1094--1111.

\bibitem[\protect\citeauthoryear{Lindeman, Merenda, and Gold}{Lindeman
  et~al.}{1980}]{lindeman1980introduction}
Lindeman, R.~H., P.~F. Merenda, and R.~Z. Gold (1980).
\newblock {\em Introduction to bivariate and multivariate analysis}.
\newblock Glenview, IL: Scott, Foresman.

\bibitem[\protect\citeauthoryear{Lundberg and Lee}{Lundberg and
  Lee}{2017}]{lundberg2017unified}
Lundberg, S.~M. and S.-I. Lee (2017).
\newblock A unified approach to interpreting model predictions.
\newblock In {\em Advances in Neural Information Processing Systems},
  Volume~30. Curran Associates, Inc.

\bibitem[\protect\citeauthoryear{Mak and Joseph}{Mak and
  Joseph}{2018}]{mak2018support}
Mak, S. and V.~R. Joseph (2018).
\newblock Support points.
\newblock {\em The Annals of Statistics\/}~{\em 46\/}(6A), 2562--2592.

\bibitem[\protect\citeauthoryear{Ortmann}{Ortmann}{2000}]{ortmann2000proportional}
Ortmann, K.~M. (2000).
\newblock The proportional value for positive cooperative games.
\newblock {\em Mathematical Methods of Operations Research\/}~{\em 51\/}(2),
  235--248.

\bibitem[\protect\citeauthoryear{Owen}{Owen}{2014}]{owen2014sobol}
Owen, A.~B. (2014).
\newblock {S}obol' indices and {S}hapley value.
\newblock {\em SIAM/ASA Journal on Uncertainty Quantification\/}~{\em 2\/}(1),
  245--251.

\bibitem[\protect\citeauthoryear{Owen and Prieur}{Owen and
  Prieur}{2017}]{owen2017shapley}
Owen, A.~B. and C.~Prieur (2017).
\newblock On {S}hapley value for measuring importance of dependent inputs.
\newblock {\em SIAM/ASA Journal on Uncertainty Quantification\/}~{\em 5\/}(1),
  986--1002.

\bibitem[\protect\citeauthoryear{Quinlan}{Quinlan}{1993}]{quinlan1993autompg}
Quinlan, R. (1993).
\newblock Auto {MPG}.
\newblock UCI Machine Learning Repository.
\newblock \url{https://doi.org/10.24432/C5859H}.

\bibitem[\protect\citeauthoryear{Ruppert, Wand, and Carroll}{Ruppert
  et~al.}{2003}]{ruppert2003semiparametric}
Ruppert, D., M.~P. Wand, and R.~J. Carroll (2003).
\newblock {\em Semiparametric Regression}.
\newblock Cambridge: Cambridge University Press.

\bibitem[\protect\citeauthoryear{Saltelli, Ratto, Andres, Campolongo, Cariboni,
  Gatelli, Saisana, and Tarantola}{Saltelli et~al.}{2008}]{saltelli2008global}
Saltelli, A., M.~Ratto, T.~Andres, F.~Campolongo, J.~Cariboni, D.~Gatelli,
  M.~Saisana, and S.~Tarantola (2008).
\newblock {\em Global Sensitivity Analysis: the Primer}.
\newblock John Wiley \& Sons.

\bibitem[\protect\citeauthoryear{Schwarz}{Schwarz}{1978}]{schwarz1978estimating}
Schwarz, G. (1978).
\newblock Estimating the dimension of a model.
\newblock {\em The Annals of Statistics\/}~{\em 6\/}(2), 461--464.

\bibitem[\protect\citeauthoryear{Shapley}{Shapley}{1953}]{shapley1953shap}
Shapley, L.~S. (1953).
\newblock A value for n-person games.
\newblock In H.~W. Kuhn and A.~W. Tucker (Eds.), {\em Contributions to the
  Theory of Games, Volume II}, pp.\  307--318. Princeton: Princeton University
  Press.

\bibitem[\protect\citeauthoryear{Sobol'}{Sobol'}{1993}]{sobol1993sensitivity}
Sobol', I.~M. (1993).
\newblock Sensitivity estimates for nonlinear mathematical models.
\newblock {\em Mathematical Modeling and Computational Experiment\/}~{\em
  1\/}(4), 407--414.

\bibitem[\protect\citeauthoryear{Song, Nelson, and Staum}{Song
  et~al.}{2016}]{song2016shapley}
Song, E., B.~L. Nelson, and J.~Staum (2016).
\newblock {S}hapley effects for global sensitivity analysis: Theory and
  computation.
\newblock {\em SIAM/ASA Journal on Uncertainty Quantification\/}~{\em 4\/}(1),
  1060--1083.

\bibitem[\protect\citeauthoryear{Surjanovic and Bingham}{Surjanovic and
  Bingham}{2013}]{virtuallib}
Surjanovic, S. and D.~Bingham (2013).
\newblock {\em Virtual Library of Simulation Experiments: Test Functions and
  Datasets}.
\newblock Retrieved July 11, 2025.

\bibitem[\protect\citeauthoryear{Tibshirani}{Tibshirani}{1996}]{tibshirani1996regression}
Tibshirani, R. (1996).
\newblock Regression shrinkage and selection via the {LASSO}.
\newblock {\em Journal of the Royal Statistical Society: Series B
  (Methodological)\/}~{\em 58\/}(1), 267--288.

\bibitem[\protect\citeauthoryear{Vakayil and Joseph}{Vakayil and
  Joseph}{2022}]{vakayil2022twinning}
Vakayil, A. and V.~R. Joseph (2022).
\newblock Data twinning.
\newblock {\em Statistical Analysis and Data Mining: The ASA Data Science
  Journal\/}~{\em 15\/}(5), 598--610.

\bibitem[\protect\citeauthoryear{Verdinelli and Wasserman}{Verdinelli and
  Wasserman}{2024a}]{verdinelli2024decorrelated}
Verdinelli, I. and L.~Wasserman (2024a).
\newblock Decorrelated variable importance.
\newblock {\em Journal of Machine Learning Research\/}~{\em 25\/}(7), 1--27.

\bibitem[\protect\citeauthoryear{Verdinelli and Wasserman}{Verdinelli and
  Wasserman}{2024b}]{verdinelli2024feature}
Verdinelli, I. and L.~Wasserman (2024b).
\newblock Feature importance: A closer look at shapley values and {LOCO}.
\newblock {\em Statistical Science\/}~{\em 39\/}(4), 623--636.

\bibitem[\protect\citeauthoryear{Williamson and Feng}{Williamson and
  Feng}{2020}]{williamson2020spvim}
Williamson, B. and J.~Feng (2020).
\newblock Efficient nonparametric statistical inference on population feature
  importance using shapley values.
\newblock In {\em Proceedings of the 37th International Conference on Machine
  Learning}, Volume 119 of {\em Proceedings of Machine Learning Research}, pp.\
   10282--10291. PMLR.

\bibitem[\protect\citeauthoryear{Williamson, Gilbert, Simon, and
  Carone}{Williamson et~al.}{2023}]{williamson2023framework}
Williamson, B.~D., P.~B. Gilbert, N.~R. Simon, and M.~Carone (2023).
\newblock A general framework for inference on algorithm-agnostic variable
  importance.
\newblock {\em Journal of the American Statistical Association\/}~{\em
  118\/}(543), 1645--1658.

\bibitem[\protect\citeauthoryear{Yeh}{Yeh}{1998a}]{yeh2007concretedata}
Yeh, I.-C. (1998a).
\newblock Concrete compressive strength.
\newblock UCI Machine Learning Repository.
\newblock \url{https://doi.org/10.24432/C5PK67}.

\bibitem[\protect\citeauthoryear{Yeh}{Yeh}{1998b}]{yeh1998concrete}
Yeh, I.-C. (1998b).
\newblock Modeling of strength of high-performance concrete using artificial
  neural networks.
\newblock {\em Cement and Concrete Research\/}~{\em 28\/}(12), 1797--1808.

\end{thebibliography}
